\documentclass{article}

\newcommand{\arxiv}{}
\ifdefined\arxiv
\usepackage[preprint,nonatbib]{neurips_2021}

\else
\usepackage[final]{neurips_2021}
\fi






\usepackage[utf8]{inputenc} 
\usepackage[T1]{fontenc}    
\usepackage{hyperref}       
\usepackage{url}            
\usepackage{booktabs}       
\usepackage{amsfonts}       
\usepackage{nicefrac}       
\usepackage{microtype}      
\usepackage{xcolor}         

\usepackage{makecell}

\usepackage{hyperref}

\newif\ifspacinglines
\spacinglinestrue
\ifspacinglines
\else
\spacinglinesfalse
\fi

\newif\ifcomm
\commtrue
\ifcomm
\else
\commfalse
\fi

\newcommand{\inlineeqnum}{\refstepcounter{equation}\hfill\mbox{(\theequation)}}

\ifcomm
	\newcommand{\mycomm}[3]{{\footnotesize{{\color{#2} \textbf{[#1: #3]}}}}}
	\newcommand{\CRdel}[1]{\textcolor{red}{\sout{#1}}}
\else
    \newcommand{\mycomm}[3]{}
    \newcommand{\CRdel}[1]{}
\fi

\newcommand{\sign}{\text{sign}}
\newcommand{\Beta}{\mathrm{B}}
\usepackage{graphicx}
\usepackage{subfigure}

\usepackage{bbm}
\usepackage{amsmath}
\usepackage{amssymb}
\usepackage{mathabx}
\usepackage{bm}
\usepackage{mathtools}
\usepackage{amsthm}
\usepackage{xspace}
\usepackage{wasysym}
\usepackage{multirow}
\usepackage{textcomp}
\usepackage{cleveref}
\usepackage{algorithm,algorithmicx,algpseudocode}

\usepackage{multicol}
\raggedcolumns

\usepackage{etoolbox}

\makeatletter

\patchcmd\page@sofar{\kern-\dimen\tw@ \ifdim\dimen\tw@}
  {\kern-\dimen\tw@  \prevdepth\z@ \ifdim\dimen\tw@}
  {\typeout{Success!}}{\ERRORpatching}

\makeatother

\usepackage{thm-restate}
\usepackage{stmaryrd}

\newtheorem{observation}{Observation}
\newtheorem{theorem}{Theorem}
\newtheorem{definition}{Definition}
\newtheorem{lemma}{Lemma}
\newtheorem{corollary}{Corollary}

\newcommand{\T}[1]{\noindent\textbf{#1}}
\newcommand{\RED}{\ensuremath{\mathcal L}\xspace}
\newcommand{\alg}{\mbox{DRIVE}\xspace}
\newcommand{\algnos}{\mbox{DRIVE}}
\newcommand{\algp}{\mbox{DRIVE$^+$}\xspace}

\newcommand{\sx}{S}
\renewcommand{\sc}{S^+}
\newcommand{\matcol}[1]{\ensuremath{C_{#1}}\xspace}
\newcommand{\myeq}[1]{\ensuremath{~=~}}

\title{\alg: One-bit Distributed Mean Estimation}

%

\newcommand*\samethanks[1][\value{footnote}]{\footnotemark[#1]}
\author{%
Shay Vargaftik \thanks{Equal Contribution.} \ 
\\
VMware Research \\
 \texttt{shayv@vmware.com}
\And
Ran Ben Basat \samethanks[1]\\
University College London\\
\texttt{r.benbasat@cs.ucl.ac.uk}
\And
Amit Portnoy \samethanks[1]\\
Ben-Gurion University \\
\texttt{amitport@post.bgu.ac.il}
\And
Gal Mendelson \\
Stanford University\\
\texttt{galmen@stanford.edu}
\And
Yaniv Ben-Itzhak \\
VMware Research \\
\texttt{ybenitzhak@vmware.com}
\And
Michael Mitzenmacher \\
Harvard University \\
\texttt{michaelm@eecs.harvard.edu}
}

\newcommand{\norm}[1]{\left\lVert#1\right\rVert}
\newcommand{\E}{\mathbb{E}}
\newcommand{\floor}[1]{\left\lfloor#1\right\rfloor}

\newcommand{\parentheses}[1]{\left(#1\right)}
\newcommand{\angles}[1]{\left\langle#1\right\rangle}
\newcommand{\brackets}[1]{\left[#1\right]}
\newcommand{\set}[1]{\left\{#1\right\}}
\newcommand{\abs}[1]{\left|#1\right|}

\newcommand{\sender}{Buffy\xspace}
\newcommand{\receiver}{Angel\xspace}


\newcommand{\sketch}[1]{\mathcal{S}(#1)}
\newcommand{\unsketch}[1]{\mathcal{U}(#1)}
\newcommand{\compress}[1]{\mathcal{C}(#1)}
\newcommand{\decompress}[1]{\mathcal{D}(#1)}

\newcommand{\new}[1]{\textcolor{black}{#1}}

\begin{document}

\maketitle

\begin{abstract}
We consider the problem where $n$ clients transmit $d$-dimensional real-valued vectors using $d(1+o(1))$ bits each, in a manner that allows 
the receiver to approximately reconstruct their mean.
Such compression problems naturally arise in distributed and federated learning.
We provide novel mathematical results and derive computationally efficient algorithms that are more accurate than previous compression techniques.  
We evaluate our methods on a collection of distributed and federated learning tasks, using a variety of datasets, {and show a consistent improvement over the state of the art.}


\end{abstract}

\section{Introduction}

In many computational settings, one wishes to transmit a $d$-dimensional real-valued vector.
For example, in distributed and federated learning scenarios, multiple participants (a.k.a. \emph{clients}) in distributed SGD send gradients to a parameter server that averages them and updates the model parameters accordingly~\cite{mcmahan2017communication}.
In these applications and others (e.g., traditional machine learning methods such K-Means and power iteration~\cite{pmlr-v70-suresh17a} or other methods such as geometric monitoring~\cite{icde2021}), sending \emph{approximations} of vectors may suffice.  Moreover, the vectors' dimension $d$ is often large (e.g., in neural networks, $d$ can exceed a billion~\cite{NIPS2012_6aca9700,shoeybi2019megatron,NEURIPS2019_093f65e0}), so sending compressed vectors is appealing.

Indeed, recent works have studied how to send vector approximations using representations that use a small number of bits per entry (e.g.,~\cite{pmlr-v70-suresh17a,ben2020send,wen2017terngrad,NIPS2017_6c340f25,konevcny2018randomized,caldas2018expanding}). \new{Further, recent work has shown direct training time reduction from compressing the vectors to one bit per coordinate~\cite{bai2021gradient}.}
Most relevant to our work are solutions that address the distributed mean estimation problem. For example, \cite{pmlr-v70-suresh17a} uses the randomized Hadamard transform followed by stochastic quantization (a.k.a. randomized rounding).
When each of the $n$ clients transmits $O(d)$ bits, their Normalized Mean Squared Error (NMSE) is bounded by $O\big(\frac{\log d}{n}\big)$.
They also show a $O(\frac{1}{n})$ bound with $O(d)$ bits via variable-length encoding, albeit at a higher computational cost.
The sampling method of~\cite{konevcny2018randomized} yields an $O\parentheses{\frac{r\cdot R}{n}}$ NMSE bound using $d(1+o(1))$ bits \emph{in expectation}, where $r$ is each coordinate's representation length and $R$ is the normalized average variance of the sent vectors.
Recently, researchers proposed to use Kashin's representation~\cite{caldas2018expanding,lyubarskii2010uncertainty,iaab006}. Broadly speaking, it allows representing a $d$-dimensional vector in (a higher) dimension~$\lambda\cdot d$ for some $\lambda>1$ using small coefficients. This results in an $O\Big({\frac{\lambda^2}{(\sqrt\lambda -1)^4\cdot n}}\Big)$ NMSE bound, \mbox{where each client transmits $\lambda \cdot d(1+o(1))$ bits~\cite{iaab006}.}
\textcolor{black}{A recent work~\cite{davies2021new} suggested an algorithm where if all clients' vectors have pairwise distances of at most $y\in\mathbb R$ (i.e., for any client pair $\mathfrak c_{1},\mathfrak c_{2}$, it holds that $\norm{x_{(\mathfrak c_1)}-x_{(\mathfrak c_2)}}_2\le y$), the resulting MSE is $O(y^2)$ (which is tight with respect to y) using $O(1)$ bits per coordinate on average. This solution provides a stronger MSE bound when vectors are sufficiently close \mbox{(and thus $y$ is small) but does not improve the worst-case guarantee.}}

\ifdefined\arxiv
\vbox{We step back and focus on approximating $d$-dimensional vectors 
using $d(1+o(1))$ bits (e.g., one bit per dimension and a lower order overhead).  
We develop novel biased and unbiased compression techniques based on (uniform as well as structured) random rotations in high-dimensional spheres. Intuitively, after a rotation, the coordinates are identically distributed, allowing us to estimate each coordinate with respect to the resulting distribution. 
Our algorithms do not require expensive operations, such as variable-length encoding or computing the Kashin's representation, and are fast and easy to implement.
We obtain an $O\parentheses{\frac{1}{n}}$ NMSE bound using $d(1+o(1))$ bits, regardless of the coordinates' representation length, improving over previous works. 
Evaluation results indicate that this translates to a consistent \mbox{improvement over the state of the art in different distributed and federated learning tasks.}}
\else
We step back and focus on approximating $d$-dimensional vectors 
using $d(1+o(1))$ bits (e.g., one bit per dimension and a lower order overhead).  
We develop novel biased and unbiased compression techniques based on (uniform as well as structured) random rotations in high-dimensional spheres. Intuitively, after a rotation, the coordinates are identically distributed, allowing us to estimate each coordinate with respect to the resulting distribution. 
Our algorithms do not require expensive operations, such as variable-length encoding or computing the Kashin's representation, and are fast and easy to implement.
We obtain an $O\parentheses{\frac{1}{n}}$ NMSE bound using $d(1+o(1))$ bits, regardless of the coordinates' representation length, improving over previous works. 
Evaluation results indicate that this translates to a consistent \mbox{improvement over the state of the art in different distributed and federated learning tasks.}
\fi

\section{Problem Formulation and Notation}

\T{1b~-~Vector Estimation.} 
We start by formally defining the \emph{1b~-~vector estimation} problem.
A sender, called \sender, gets a real-valued vector $x\in\mathbb R^d$ and sends it using a $d(1+o(1))$ 
bits message (i.e., asymptotically \emph{one bit per coordinate}).
The receiver, called \receiver, uses the message to derive an estimate $\widehat x$ of the original vector $x$.
We are interested in the quantity $\norm{x-\widehat x}_2^2$, which is the sum of squared errors (SSE), and its expected value, the Mean Squared Error (MSE).  For ease of exposition, we hereafter assume that $x\neq 0$ as this special case can be handled with one additional bit.
Our goal is to minimize the \emph{vector}-NMSE (denoted \emph{vNMSE}), defined as the normalized \mbox{MSE, i.e., $\frac{\mathbb E\brackets{\norm{x-\widehat x}_2^2}}{\norm{x}_2^2}$~.}

\T{1b~-~Distributed Mean Estimation.}
The above problem naturally generalizes to the \emph{1b~-~Distributed Mean Estimation} problem. Here, we have a set of $n{\,\in\,}\mathbb N$ \emph{clients} and a \emph{server}. Each client $\mathfrak c\in\set{1,\ldots,n}$ has its own vector $x_{(\mathfrak c)} {\,\in\,}\mathbb R^d$, which it sends using a $d(1{+}o(1))$-bits message to the server. 
The server then produces an estimate $\widehat {x_{\text{avg}}}{\,\in\,}\mathbb R^d$ of the average $x_{\text{avg}}=\frac{1}{n}\sum_{\mathfrak  c=1}^n x_{(\mathfrak c)}$ with the goal of minimizing {its \emph{NMSE}, defined as the average estimate's MSE normalized by the average norm of the clients' original vectors, i.e., $\frac{\E\brackets{\norm{x_{\text{avg}}-\widehat {x_{\text{avg}}}}_2^2}}{\frac{1}{n}\cdot\sum_{\mathfrak  c=1}^n\norm{x_{(\mathfrak  c)}}_2^2}$~.}

\T{Notation.} We use the following notation and definitions throughout the paper:

\textit{Subscripts.} $x_i$ denotes the $i$'th \emph{coordinate} of the vector $x$, to distinguish it from client $\mathfrak c$'s vector $x_{(\mathfrak c)}$. 

\textit{Binary-sign.} For a vector $x\in \mathbb R^d$, we denote its binary-sign function as $\text{sign}(x)$, where $\text{sign}(x)_i=1$ if $x_i\ge 0$ and $\text{sign}(x)_i=-1$ if $x_i < 0$. 

\textit{Unit vector.} For any (non-zero) real-valued vector $x \in \mathbb R^d$, we denote its normalized vector by $\breve{x} = \frac{x}{\norm{x}_2}$. That is, $\breve{x}$ and $x$ has the same direction and it holds that $\norm{\breve{x}}_2=1$.

\textit{Rotation Matrix.} 
A matrix $R\in\mathbb R^{d\times d}$ is a rotation matrix if $R^{T}  R=I$. The set of all rotation matrices is \mbox{denoted as $\mathcal O(d)$.
It follows that $\forall R \in \mathcal{O}(d){:\,}det(R) \in \set{-1,1}$ and $\forall x \in\mathbb R^d{:\,} \norm{x}_2 {=} \norm{Rx}_2$.}

\textit{Random Rotation.}
A random rotation $\mathcal{R}$ is a distribution over all random rotations in $\mathcal O(d)$.
For ease of exposition, we abuse the notation and given $x\in\mathbb R^d$ denote the random rotation of $x$ by $\mathcal R(x)=Rx$, where $R$ is drawn from $\mathcal R$. Similarly, $\mathcal R^{-1}(x)=R^{-1}x=R^Tx$ is the inverse rotation.

\textit{Rotation Property.}
A quantity that determines the guarantees of our algorithms is~${\RED_{\mathcal R, x}^d =\frac{{\norm{\mathcal R(\breve x)}_1^2}}{d}}$ (note the use of the $L_1$ norm).
We show that rotations with high $\RED_{\mathcal R, x}^d$  values yield better estimates.

\T{Shared Randomness.}
We assume that \sender and \receiver have access to shared randomness, e.g., by agreeing on a common PRNG seed. Shared randomness is studied both in communication complexity (e.g., \cite{newman1991private}) and in communication reduction in machine learning systems (e.g., \cite{pmlr-v70-suresh17a,ben2020send}).
In our context, it means that \mbox{\sender and \receiver can generate the same random rotations without communication.}

\section{The \alg Algorithm}

\ifdefined\arxiv
We start by presenting \alg (Deterministically RoundIng randomly rotated VEctors), a novel 1b~-~Vector Estimation algorithm.
First, we show how to minimize \alg's vNMSE. Then, we show how to make \alg unbiased, extending it to the 1b~-~Distributed \mbox{Mean Estimation problem.}

In \alg, \sender uses shared randomness to sample a rotation matrix $R\sim\mathcal R$ and rotates the vector $x\in\mathbb R^d$ by computing $\mathcal R(x)=R x$.
\else
We start by presenting \alg (Deterministically RoundIng randomly rotated VEctors), a novel 1b~-~Vector Estimation algorithm.
Later, we extend \alg to the 1b~-~Distributed Mean Estimation problem.
In \alg, \sender uses shared randomness to sample a rotation matrix $R\sim\mathcal R$ and rotates the vector $x\in\mathbb R^d$ by computing $\mathcal R(x)=R x$.
\fi
\sender then calculates $\sx$, a scalar quantity we explain below. \sender then sends $\big(\sx,\text{sign}(\mathcal R(x))\big)$ to \receiver. 
As we discuss later, sending $\big(\sx,\text{sign}(\mathcal R(x))\big)$ requires $d(1+o(1))$ bits.
In turn, \receiver computes $\widehat {\mathcal R(x)}=\sx\cdot\text{sign}(\mathcal R(x)) \in\set{-\sx,+\sx}^d$.
It then uses the shared randomness to generate the same rotation matrix
and employs the inverse {rotation, i.e., estimates $\widehat x = \mathcal R^{-1}(\widehat {\mathcal R(x)})$.
The pseudocode of \alg appears in Algorithm~\ref{code:alg1}.}

The properties of \alg depend on the rotation $\mathcal R$ and the \emph{scale parameter} $\sx$.
We consider both uniform rotations, that provide stronger guarantees, and structured rotations that are orders of magnitude faster to compute.  
As for the scale $\sx = \sx({x,R})$, its exact formula determines the characteristics of \alg's estimate, e.g., having minimal vNMSE or being unbiased. 
The latter allows us to apply \alg to the 1b~-~Distributed Mean Estimation (Section~\ref{subsec:drive_dme}) and get an NMSE that decreases proportionally to the number of clients.  

\begin{algorithm}[t]
\caption{~\alg}
\label{code:alg1}
\begin{multicols}{2}
\begin{algorithmic}[1]
  \Statex \hspace*{-4mm}\textbf{\sender:}
  \State Compute $\mathcal R(x),\  \sx$.\textcolor{white}{$\big($}
  \State Send $\big(\sx,\text{sign}(\mathcal R(x))\big)$ to \receiver.\textcolor{white}{$\widehat {\mathcal R(x)}$}
\end{algorithmic}
\columnbreak
\begin{algorithmic}[1]
\Statex \hspace*{-4mm}\textbf{\receiver:}
\State Compute $\widehat {\mathcal R(x)}=\sx\cdot\text{sign}\big(\mathcal R(x)\big)$.
\State Estimate $\widehat x = \mathcal R^{-1}\big(\widehat {\mathcal R(x)}\big)$.
\end{algorithmic}
\end{multicols}
\end{algorithm}
We now prove a general result on the SSE of \alg that applies to any random rotation $\mathcal R$ and any vector $x\in\mathbb R^d$. 
In the following sections, we use this result to obtain the vNMSE \mbox{when considering specific rotations and scaling methods as well as analyzing their guarantees.}
\begin{theorem}\label{thm:theoreticalAlg}
The SSE of \alg is: $\norm{x-\widehat x}_2^2=\norm{x}_2^2-2 \cdot \sx \cdot\norm{\mathcal R(x)}_1 + d \cdot \sx^2$~.
\end{theorem}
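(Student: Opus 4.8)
The plan is to exploit the fact that a rotation matrix is an isometry, so the error can be measured entirely in the rotated domain, where the estimate has the simple form $\widehat{\mathcal R(x)} = \sx\cdot\text{sign}(\mathcal R(x))$. First I would observe that since $\widehat x = \mathcal R^{-1}(\widehat{\mathcal R(x)})$ and $\mathcal R^{-1}$ is a rotation (hence norm-preserving, as noted in the Notation section), we have
\[
\norm{x-\widehat x}_2^2 = \norm{\mathcal R(x) - \widehat{\mathcal R(x)}}_2^2 = \norm{\mathcal R(x) - \sx\cdot\text{sign}(\mathcal R(x))}_2^2.
\]
This reduces the claim to a direct computation on the vector $\mathcal R(x)$ and its sign.

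Next I would expand the squared norm as $\norm{\mathcal R(x)}_2^2 - 2\sx\cdot\angles{\mathcal R(x),\,\text{sign}(\mathcal R(x))} + \sx^2\norm{\text{sign}(\mathcal R(x))}_2^2$ and evaluate the three terms. For the first term, $\norm{\mathcal R(x)}_2^2 = \norm{x}_2^2$ again by the isometry property of rotations. For the inner product, I would use that $y_i\cdot\text{sign}(y)_i = \abs{y_i}$ coordinatewise (including $y_i=0$, where both sides vanish, so the convention $\text{sign}(0)=1$ is harmless), giving $\angles{\mathcal R(x),\text{sign}(\mathcal R(x))} = \sum_{i=1}^d \abs{\mathcal R(x)_i} = \norm{\mathcal R(x)}_1$. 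For the last term, $\text{sign}(\mathcal R(x))\in\set{-1,1}^d$, so $\norm{\text{sign}(\mathcal R(x))}_2^2 = d$. Substituting yields exactly $\norm{x}_2^2 - 2\sx\cdot\norm{\mathcal R(x)}_1 + d\cdot\sx^2$.

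There is no real obstacle here: the only point worth stating carefully is that the argument is agnostic to how $\sx$ is chosen — the identity holds for any scalar $\sx$ and any rotation matrix $R$ — which is precisely why this lemma can later be specialized to different scaling rules (e.g., the vNMSE-minimizing choice or the unbiased choice). I would therefore keep the proof at the level of these three term evaluations and simply remark that the result is an exact, deterministic identity for each fixed draw of $R$, with randomness entering only later through the distribution of $\norm{\mathcal R(x)}_1$.
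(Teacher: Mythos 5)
Your proposal is correct and follows essentially the same route as the paper's proof: both reduce the SSE to the rotated domain via the isometry $\norm{x-\widehat x}_2 = \norm{\mathcal R(x)-\widehat{\mathcal R(x)}}_2$, expand the square, and evaluate the inner-product term as $\sx\cdot\norm{\mathcal R(x)}_1$ and the norm term as $d\cdot\sx^2$. Your added remarks (the harmlessness of the $\text{sign}(0)=1$ convention and the fact that the identity holds deterministically for any $\sx$ and any sampled $R$) are accurate clarifications of the same argument.
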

\begin{proof}
The SSE in estimating $\mathcal R(x)$ using $\widehat {\mathcal R(x)}$ equals that of estimating $x$ using $\widehat x$. Therefore,
\ifdefined\arxiv
\begin{multline}
\norm{x-\widehat x}_2^2 = \norm{\mathcal R(x-\widehat x)}_2^2 = \norm{\mathcal R(x)-\mathcal R(\widehat x)}_2^2 = \norm{{\mathcal R(x)}-\widehat {\mathcal R(x)}}_2^2\\
= \norm{\mathcal R(x)}_2^2  - 2 \angles{\mathcal R(x),\widehat {\mathcal R(x)}} + {\norm{\widehat {\mathcal R(x)}}_2^2} = \norm{x}_2^2-2\angles{\mathcal R(x),\widehat {\mathcal R(x)}} + \norm{\widehat {\mathcal R(x)}}_2^2.\label{eq:thm_main}
\end{multline}
\else
\\
$\hspace*{0.0cm}\norm{x-\widehat x}_2^2 = \norm{\mathcal R(x-\widehat x)}_2^2 = \norm{\mathcal R(x)-\mathcal R(\widehat x)}_2^2 = \norm{{\mathcal R(x)}-\widehat {\mathcal R(x)}}_2^2$\\
$\hspace*{1.0cm}= \norm{\mathcal R(x)}_2^2  - 2 \angles{\mathcal R(x),\widehat {\mathcal R(x)}} + {\norm{\widehat {\mathcal R(x)}}_2^2} = \norm{x}_2^2-2\angles{\mathcal R(x),\widehat {\mathcal R(x)}} + \norm{\widehat {\mathcal R(x)}}_2^2.\inlineeqnum\label{eq:thm_main}$
\fi
Next, we have that,
\ifdefined\arxiv
\begin{align}
    \angles{\mathcal R(x),\widehat {\mathcal R(x)}} &= \sum_{i=1}^d \mathcal R(x)_i\cdot \widehat {\mathcal R(x)}_i =  \sx \cdot\sum_{i=1}^d \mathcal R(x)_i\cdot \text{sign}\big(\mathcal R(x)_i\big) = \sx \cdot\norm{\mathcal R(x)}_1\ ,\label{eq:thm_inner_product_1}\\
\norm{\widehat {\mathcal R(x)}}_2^2 &= \sum_{i=1}^d \widehat {\mathcal R(x)}_i^2 = d\cdot \sx^2~.\label{eq:thm_inner_product_2}
\end{align}
\else

$\angles{\mathcal R(x),\widehat {\mathcal R(x)}} = \sum_{i=1}^d \mathcal R(x)_i\cdot \widehat {\mathcal R(x)}_i =  \sx \cdot\sum_{i=1}^d \mathcal R(x)_i\cdot \text{sign}\big(\mathcal R(x)_i\big) = \sx \cdot\norm{\mathcal R(x)}_1\ ,\inlineeqnum\label{eq:thm_inner_product_1}$\\
$\hspace*{0.85cm}\norm{\widehat {\mathcal R(x)}}_2^2 = \sum_{i=1}^d \widehat {\mathcal R(x)}_i^2 = d\cdot \sx^2~.\inlineeqnum\label{eq:thm_inner_product_2}$
\fi

Substituting Eq.~\eqref{eq:thm_inner_product_1} and Eq.~\eqref{eq:thm_inner_product_2} in Eq.~\eqref{eq:thm_main} yields the result.
\end{proof}
%
%
%
%
%
\section{\alg With a Uniform Random Rotation}
%

We first consider the thoroughly studied uniform random rotation (e.g., ~\cite{mezzadri2006generate,wedderburn1975generating,heiberger1978generation,stewart1980efficient,tanner1982remark}), which we denote by $\mathcal R_U$. The sampled matrix is denoted by $R_U\sim\mathcal R_U$, that is, $\mathcal R_U(x)=R_U\cdot x$.
An appealing property of a uniform random rotation is that, as we show later, it admits a scaling that results in a low constant vNMSE even with unbiased estimates.

\subsection{1b~-~Vector Estimation}

Using Theorem \ref{thm:theoreticalAlg}, we obtain the following result. The result holds for any rotation, including $\mathcal R_U$.
\begin{lemma}\label{cor:biased_alg1_vNMSE}
For any $x\in\mathbb R^d$, \alg's SSE is minimized by $S=\frac{\norm{\mathcal R(x)}_1}{d}$ (that is, $S=\frac{\norm{Rx}_1}{d}$ is determined after $R\sim\mathcal R$ is sampled). This yields a vNMSE of $\frac{\mathbb E\brackets{\norm{x-\widehat x}_2^2}}{\norm{x}_2^2}=1 - \mathbb E\brackets{\RED_{\mathcal R,x}^d}$.
\end{lemma}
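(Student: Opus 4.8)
The plan is to start from Theorem~\ref{thm:theoreticalAlg}, which gives the SSE as a quadratic in the scale parameter: $\norm{x-\widehat x}_2^2 = \norm{x}_2^2 - 2 S\norm{\mathcal R(x)}_1 + d S^2$. Since the rotation $R$ is sampled before $S$ is chosen, and $S$ is allowed to depend on $R$ (and $x$), the optimization of the SSE can be done pointwise for each realization of $R$. So first I would fix $R\sim\mathcal R$, treat the right-hand side as a function $f(S) = dS^2 - 2\norm{\mathcal R(x)}_1 S + \norm{x}_2^2$, and minimize over $S\in\mathbb R$. This is an upward parabola in $S$ (coefficient $d>0$), so the unique minimizer is $S = \frac{\norm{\mathcal R(x)}_1}{d}$, exactly the claimed value. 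This step is essentially a one-line calculus argument.

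Next I would substitute this optimal $S$ back into the SSE expression. Plugging $S = \frac{\norm{\mathcal R(x)}_1}{d}$ gives $\norm{x-\widehat x}_2^2 = \norm{x}_2^2 - 2\cdot\frac{\norm{\mathcal R(x)}_1}{d}\cdot\norm{\mathcal R(x)}_1 + d\cdot\frac{\norm{\mathcal R(x)}_1^2}{d^2} = \norm{x}_2^2 - \frac{\norm{\mathcal R(x)}_1^2}{d}$. Then I would divide by $\norm{x}_2^2$ and recognize that $\frac{\norm{\mathcal R(x)}_1^2}{d\,\norm{x}_2^2} = \frac{\norm{\mathcal R(\breve x)}_1^2}{d} = \RED_{\mathcal R,x}^d$, using that $\mathcal R$ is linear and $\breve x = x/\norm{x}_2$ so $\mathcal R(\breve x) = \mathcal R(x)/\norm{x}_2$. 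This gives the conditional (given $R$) normalized SSE as $1 - \RED_{\mathcal R,x}^d$.

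Finally, I would take expectations over the random rotation $R\sim\mathcal R$. Because $\norm{x}_2^2$ is deterministic (independent of $R$), $\frac{\E[\norm{x-\widehat x}_2^2]}{\norm{x}_2^2} = \E\!\left[\frac{\norm{x-\widehat x}_2^2}{\norm{x}_2^2}\right] = \E\!\left[1 - \RED_{\mathcal R,x}^d\right] = 1 - \E[\RED_{\mathcal R,x}^d]$, which is the vNMSE claimed. One should also briefly note that the chosen $S$ is a legitimate algorithmic choice: it depends only on $x$ and $R$, both of which \sender knows (and \receiver recovers $R$ via shared randomness and receives $S$), so optimality over this class of scale parameters is exactly what the lemma asserts.

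I do not anticipate a genuine obstacle here — the argument is a direct corollary of Theorem~\ref{thm:theoreticalAlg} plus scalar optimization. The only points requiring a little care are (i) making explicit that $S$ may depend on the sampled $R$ so the minimization is pointwise rather than over a single constant, and (ii) the normalization bookkeeping that turns $\frac{\norm{\mathcal R(x)}_1^2}{d\,\norm{x}_2^2}$ into $\RED_{\mathcal R,x}^d$ via the definition of $\breve x$ and linearity of the rotation. Neither is hard, but both should be stated rather than skipped.
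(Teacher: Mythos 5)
Your proposal is correct and follows essentially the same route as the paper: minimize the quadratic from Theorem~\ref{thm:theoreticalAlg} in $S$ (the paper does this by setting the derivative to zero, you by noting it is an upward parabola — the same computation), substitute back, rewrite $\frac{\norm{\mathcal R(x)}_1^2}{d\,\norm{x}_2^2}$ as $\RED_{\mathcal R,x}^d$ via $\breve x$, and take expectation over $R$. Your extra remarks that $S$ may depend on the sampled $R$ and that this is a legitimate algorithmic choice are fine but not needed beyond what the lemma statement already makes explicit.
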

\begin{proof}
By Theorem \ref{thm:theoreticalAlg}, to minimize the SSE we require
\ifdefined\arxiv
\begin{align*}
    \hspace*{.4cm}\frac{\partial}{\partial S}\big({\norm{x}_2^2-2 \cdot \sx \cdot\norm{\mathcal R(x)}_1 + d \cdot \sx^2}\big) = -2 \cdot \norm{\mathcal R(x)}_1 + 2 \cdot d \cdot \sx = 0~,
\end{align*}
\else
\vspace*{1mm}\\
$\hspace*{0.9cm}\frac{\partial}{\partial S}\big({\norm{x}_2^2-2 \cdot \sx \cdot\norm{\mathcal R(x)}_1 + d \cdot \sx^2}\big) = -2 \cdot \norm{\mathcal R(x)}_1 + 2 \cdot d \cdot \sx = 0~,$\\\vspace*{1mm}
\fi
leading to $S=\frac{\norm{\mathcal R(x)}_1}{d}$. Then, the SSE of \alg becomes:
\ifdefined\arxiv
{
\begin{multline*}
\norm{x-\widehat x}_2^2 = \norm{x}_2^2-2\cdot\sx \cdot\norm{\mathcal R(x)}_1+d\cdot \sx^2 = 
\norm{x}_2^2-2\cdot\frac{\norm{\mathcal R(x)}_1^2}{d} + d \cdot \frac{\norm{\mathcal R(x)}_1^2}{d^2}
 \\=
\norm{x}_2^2 - \frac{\norm{\mathcal R(x)}_1^2}{d}
=
\norm{x}_2^2 - \frac{\norm{x}_2^2 \cdot \norm{\mathcal R(\breve x)}_1^2}{d}
= \norm{x}_2^2 \big(1-\frac{\norm{\mathcal R(\breve x)}_1^2}{d}\,\big)~.
\end{multline*}
}
\else
\\
$\hspace*{0.9cm}\norm{x-\widehat x}_2^2 = \norm{x}_2^2-2\cdot\sx \cdot\norm{\mathcal R(x)}_1+d\cdot \sx^2 = 
\norm{x}_2^2-2\cdot\frac{\norm{\mathcal R(x)}_1^2}{d} + d \cdot \frac{\norm{\mathcal R(x)}_1^2}{d^2}$\\
$\hspace*{2.33cm}=
\norm{x}_2^2 - \frac{\norm{\mathcal R(x)}_1^2}{d}
=
\norm{x}_2^2 - \frac{\norm{x}_2^2 \cdot \norm{\mathcal R(\breve x)}_1^2}{d}
= \norm{x}_2^2 \big(1-\frac{\norm{\mathcal R(\breve x)}_1^2}{d}\,\big)~.$

\fi
{Thus, the normalized SSE is $\frac{\norm{x-\widehat x}_2^2}{\norm{x}_2^2}=1{-}\RED_{\mathcal R,x}^d$. Taking expectation yields the result. \qedhere}
\end{proof}
%
%
Interestingly, for the uniform random rotation, $\RED_{\mathcal R_U,x}^d$ follows the same distribution for all $x$. 
This is because, by the definition of $\mathcal R_U$, it holds that $\mathcal R_U(\breve x)$ is distributed uniformly over the unit sphere for any $x$.
Therefore \alg's vNMSE depends only on the dimension $d$.
We next analyze the vNMSE attainable by the best possible $\sx$, as given in Lemma~\ref{cor:biased_alg1_vNMSE},  when the algorithm uses $\mathcal R_U$ and is not required to be unbiased. In particular, we state the following theorem whose
\ifdefined\arxiv
 \mbox{proof appears in Appendix \ref{app:biased_drive_vNMSE}. }
\else
 proof appears in Appendix \ref{app:biased_drive_vNMSE} \mbox{(all appendices appear in the Supplementary Material and the extended paper version~\cite{vargaftik2021drive}). }
\fi
\begin{restatable}{theorem}{biaseddrivevNMSE}\label{thm:biased_drive_vNMSE}
For any $x \in \mathbb R^d$, the vNMSE of \alg with $\sx=\frac{\norm{\mathcal R_U(x)}_1}{d}$ is
$\parentheses{1 - \frac{2}{\pi}} \parentheses{ {1-\frac{1}{d}}}$~.
\end{restatable}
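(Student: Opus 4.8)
The plan is to start from Lemma~\ref{cor:biased_alg1_vNMSE}, which tells us that the vNMSE equals $1 - \mathbb E\brackets{\RED_{\mathcal R_U,x}^d} = 1 - \frac{1}{d}\,\mathbb E\brackets{\norm{\mathcal R_U(\breve x)}_1^2}$. Since $\mathcal R_U(\breve x)$ is distributed uniformly on the unit sphere $S^{d-1}$ for every nonzero $x$, the whole problem reduces to computing $\mathbb E\brackets{\norm{Z}_1^2}$ where $Z$ is a uniform random point on $S^{d-1}$. Writing $\norm{Z}_1^2 = \big(\sum_{i=1}^d \abs{Z_i}\big)^2 = \sum_{i=1}^d Z_i^2 + \sum_{i\neq j}\abs{Z_i}\abs{Z_j}$ and using linearity of expectation plus the exchangeability of the coordinates, this becomes $\mathbb E\brackets{\norm{Z}_1^2} = d\,\mathbb E\brackets{Z_1^2} + d(d-1)\,\mathbb E\brackets{\abs{Z_1}\abs{Z_2}} = 1 + d(d-1)\,\mathbb E\brackets{\abs{Z_1}\abs{Z_2}}$, where I used $\sum_i Z_i^2 = 1$ so $\mathbb E\brackets{Z_1^2} = 1/d$.

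So the one real computation is $\mathbb E\brackets{\abs{Z_1}\abs{Z_2}}$ for a uniform point on $S^{d-1}$. I would realize $Z$ as $G/\norm{G}_2$ with $G = (G_1,\dots,G_d)$ a standard Gaussian vector. It is cleanest to first establish the asymptotic/heuristic value and then pin down the exact constant. Concretely, $\mathbb E\brackets{\abs{Z_1}\abs{Z_2}} = \mathbb E\brackets{\frac{\abs{G_1}\abs{G_2}}{\norm{G}_2^2}}$. One route: exploit that $\norm{G}_2^2$ concentrates, but that only gives the leading order; to get the clean closed form $\parentheses{1-\tfrac2\pi}\parentheses{1-\tfrac1d}$ I expect we need an exact identity. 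The exact route is to integrate in spherical coordinates or, more slickly, to use the beta-function representation of marginals of the sphere: the pair $(Z_1^2, Z_2^2)$ has a known Dirichlet-type joint density, and $\mathbb E\brackets{\abs{Z_1}\abs{Z_2}}$ is a ratio of Gamma functions. Alternatively, and probably the intended clean argument: one can show directly that $\mathbb E\brackets{\norm{Z}_1^2}$ has a closed form by relating it to $\mathbb E\brackets{\norm{G}_1^2}$ and $\mathbb E\brackets{\norm{G}_2^2}$. Since $\mathbb E\brackets{\abs{G_1}} = \sqrt{2/\pi}$ and the $G_i$ are independent, $\mathbb E\brackets{\norm{G}_1^2} = d + d(d-1)\cdot\frac2\pi$; and $\norm{G}_2^2 \sim \chi^2_d$. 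The key trick is that $\norm{Z}_1^2 = \norm{G}_1^2/\norm{G}_2^2$ and, crucially, the \emph{direction} $Z$ is independent of the \emph{radius} $\norm{G}_2$, hence $\mathbb E\brackets{\norm{G}_1^2} = \mathbb E\brackets{\norm{G}_2^2}\cdot\mathbb E\brackets{\norm{Z}_1^2}$, giving $\mathbb E\brackets{\norm{Z}_1^2} = \frac{d + d(d-1)\cdot\frac2\pi}{d} = 1 + (d-1)\cdot\frac2\pi$.

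Plugging back: the vNMSE equals $1 - \frac1d\big(1 + (d-1)\tfrac2\pi\big) = 1 - \frac1d - \frac{d-1}{d}\cdot\frac2\pi = \frac{d-1}{d} - \frac{d-1}{d}\cdot\frac2\pi = \parentheses{1-\frac1d}\parentheses{1-\frac2\pi}$, which is exactly the claimed expression. So the proof structure is: (i) invoke Lemma~\ref{cor:biased_alg1_vNMSE} and the rotational invariance to reduce to $\mathbb E\brackets{\norm{Z}_1^2}$ on the sphere; (ii) write $Z = G/\norm{G}_2$ for Gaussian $G$ and use independence of direction and radius to get $\mathbb E\brackets{\norm{Z}_1^2} = \mathbb E\brackets{\norm{G}_1^2}/\mathbb E\brackets{\norm{G}_2^2}$; (iii) compute the two Gaussian moments using $\mathbb E\abs{G_1} = \sqrt{2/\pi}$, $\mathbb E G_1^2 = 1$, and independence; (iv) simplify.

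The main obstacle is step (ii): one must justify carefully that $\mathbb E\brackets{\norm{G}_1^2} = \mathbb E\brackets{\norm{G}_2^2 \cdot \norm{Z}_1^2}$ factors as a product, i.e. that $\norm{Z}_1^2$ (a function of the direction only) is independent of $\norm{G}_2^2$ (the squared radius). This is the standard decomposition of a Gaussian vector into independent radial and angular parts, but it deserves an explicit sentence since the entire clean constant hinges on it. Everything else — the expansion of $\norm{\cdot}_1^2$, the Gaussian moment evaluations, and the final algebra — is routine, and I would relegate the detailed arithmetic to the appendix as the paper does.
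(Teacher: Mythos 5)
Your proposal is correct and follows essentially the same route as the paper's proof in Appendix~\ref{app:biased_drive_vNMSE}: represent the uniform point on the sphere as $Z/\norm{Z}_2$ for Gaussian $Z$, use the independence of the radial and angular parts to factor $\E\brackets{\norm{Z}_1^2}=\E\brackets{\norm{T}_1^2}\cdot\E\brackets{\norm{Z}_2^2}$, and evaluate the Gaussian moments via $\E\abs{Z_i}=\sqrt{2/\pi}$. The key independence step you flag as needing an explicit sentence is exactly the step the paper also invokes (citing the standard Gaussian radial/angular decomposition), so no gap remains.
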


\subsection{1b~-~Distributed Mean Estimation}\label{subsec:drive_dme}
An appealing property of \alg with a uniform random rotation, established in this section, is that with a proper scaling parameter $\sx$, the estimate is unbiased. 
That is, for any $x \in \mathbb R^d$, our scale guarantees that $\mathbb E\brackets{\widehat x} = x$.
Unbiasedness is useful when generalizing to the Distributed Mean Estimation problem. 
Intuitively, when $n$ clients send their vectors, any biased algorithm would result in an NMSE that may not decrease with respect to $n$. For example, if they have the same input vector, the bias would remain after averaging. Instead, an unbiased encoding algorithm has the property that when all clients \mbox{act (e.g., use different PRNG seeds) independently, the NMSE decreases proportionally to $\frac{1}{n}$.}

Another useful property of uniform random rotation is that its distribution is unchanged when composed with other rotations. We use it in \mbox{the following theorem's proof, given in Appendix~\ref{app:drive_is_unbiased}.}

\begin{restatable}{theorem}{driveisunbiased}\label{theorem:drive_is_unbised}
For any $x \in \mathbb R^d$, set $\sx = \frac{\norm{x}_2^2}{\norm{\mathcal R_U(x)}_1}$. Then \alg satisfies $\E [\widehat x] = x$.
\end{restatable}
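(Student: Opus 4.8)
The plan is to compute $\E[\widehat x]$ directly and show that the scaling $\sx = \frac{\norm{x}_2^2}{\norm{\mathcal R_U(x)}_1}$ exactly cancels the attenuation introduced by one-bit rounding. Since $\widehat x = \mathcal R^{-1}(\widehat{\mathcal R(x)}) = R^T\,(\,\sx\cdot\sign(Rx)\,)$ and $R$ is orthogonal, it suffices (by linearity of $R^T$ and the tower rule over the choice of $R\sim\mathcal R_U$) to show that $\E\!\brackets{\sx\cdot\sign(Rx)} = Rx$ pointwise — equivalently, writing $y = \mathcal R_U(x) = Rx$, that $\E\!\brackets{\frac{\norm{y}_2^2}{\norm{y}_1}\cdot\sign(y)} = \E[y]$ is not quite the right framing; rather I want to verify the identity at the level of the randomly rotated vector before undoing the rotation. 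Concretely I would argue: conditioning on nothing, write $\widehat x = R^T D\, \sign(Rx)$ where $D = \sx I$, and compute coordinate $j$ of $\E[\widehat x]$ as $\sum_i \E\!\brackets{R_{ij}\,\sx\,\sign((Rx)_i)}$.

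The key step is a symmetry/self-composition argument exploiting that $\mathcal R_U$ is invariant under left-multiplication by any fixed rotation (the "distribution is unchanged when composed with other rotations" property flagged in the excerpt). First I would reduce to the unit vector $\breve x$: since $\sx = \frac{\norm{x}_2^2}{\norm{R\breve x}_1\,\norm{x}_2} = \frac{\norm{x}_2}{\norm{R\breve x}_1}$ and $\sign(Rx)=\sign(R\breve x)$, we have $\widehat x = \norm{x}_2 \cdot R^T\!\left(\frac{\sign(R\breve x)}{\norm{R\breve x}_1}\right)$, so it is enough to prove $\E\!\brackets{R^T \frac{\sign(R\breve x)}{\norm{R\breve x}_1}} = \breve x$. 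Fix any target unit vector $e$ (e.g. $e_1$) and any rotation $Q$ with $Q\breve x = e$; replacing $R$ by $RQ^T$ (same distribution) shows the expectation equals $Q^T\,\E\!\brackets{R^T \frac{\sign(Re)}{\norm{Re}_1}}$, so WLOG $\breve x = e_1$ and we must show $\E\!\brackets{R^T \frac{\sign(Re_1)}{\norm{Re_1}_1}} = e_1$. Now $Re_1$ is the first column of $R$, which is uniform on the sphere $\mathbb S^{d-1}$; call it $u$. So the claim becomes the clean statement $\E_{u\sim\mathrm{Unif}(\mathbb S^{d-1})}\!\brackets{R^T \frac{\sign(u)}{\norm{u}_1}} = e_1$, and $R^T$ applied to anything is determined by the full rotation, but the vector $\sign(u)/\norm u_1$ depends only on $u$. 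I would then observe $R^T v$ has $j$-th coordinate $\angles{R_{\cdot j}, v} = \angles{Re_j, v}$; writing $u^{(j)} = Re_j$ (the $j$-th column, with $u^{(1)}=u$), coordinate $j$ of the expectation is $\E\!\brackets{\frac{\angles{u^{(j)}, \sign(u)}}{\norm{u}_1}}$.

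For $j=1$ this is $\E\!\brackets{\frac{\angles{u,\sign(u)}}{\norm u_1}} = \E\!\brackets{\frac{\norm u_1}{\norm u_1}} = 1$, as required. For $j\neq 1$, I claim it is $0$: condition on the first column $u = u^{(1)}$; the remaining columns $u^{(2)},\dots,u^{(d)}$ form a uniform orthonormal frame of the orthogonal complement $u^\perp$, and in particular $u^{(j)}$ conditioned on $u$ is symmetric ($u^{(j)}$ and $-u^{(j)}$ are equidistributed, since negating a single column and another keeps it a rotation, or more cleanly the conditional law of $u^{(j)}$ is invariant under $v\mapsto -v$). Hence $\E[\angles{u^{(j)},\sign(u)}\mid u] = 0$, and the weight $1/\norm u_1$ is $u$-measurable, so the unconditional expectation of coordinate $j$ vanishes. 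Putting the coordinates together gives $\E[\widehat x] = \norm{x}_2 \cdot Q^T e_1 = \norm{x}_2\,\breve x = x$.

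The main obstacle is making the conditional-symmetry claim airtight: I need that, conditioned on the first column of a Haar-random rotation, each other column is symmetric about the origin (equivalently, that the Haar measure on $\mathcal O(d)$ restricted to rotations — where $\det = +1$ — still has this property, since flipping a single column changes the determinant). The clean fix is to flip the signs of \emph{two} columns $u^{(j)}, u^{(k)}$ ($k\neq 1,j$) simultaneously, which preserves both orthonormality and $\det = +1$ and preserves the first column, giving $\E[u^{(j)}\mid u] = -\E[u^{(j)}\mid u]=0$ (this needs $d\ge 3$; the case $d\le 2$ is trivial or handled separately). I would also need to confirm the reduction "replace $R$ by $RQ^T$" is valid for the rotation group rather than the full orthogonal group — it is, because left multiplication by a fixed element of $\mathcal O(d)$ (in particular a rotation) preserves the Haar measure on $\mathcal O(d)$, and we are using $Q\in\mathcal O(d)$; if one insists on $\det Q = +1$ one can always choose such a $Q$ mapping $\breve x$ to $e_1$. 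Once these measure-theoretic points are pinned down the computation is immediate.
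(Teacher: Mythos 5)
Your proof is correct, but it reaches the conclusion by a different mechanism than the paper. Both arguments share the same first reduction: use invariance of the Haar distribution under composition with a fixed rotation to pass from $x$ to a canonical axis vector, after which $\widehat x$ is expressed through the columns of the random matrix, with $j$-th coordinate proportional to $\angles{u^{(j)},\sign(u)}/\norm{u}_1$ where $u$ is the first column (this is exactly the paper's Eq.~\eqref{eq:drive_unbiased_before_rotating_back}). The difference is in how the cross terms are killed. The paper runs a coupling argument: it pairs each realization with a ``mirror run'' that uses $R_U'=R_x\cdot I'\cdot R_{x\shortrightarrow x'}^{-1}$, where $I'$ negates the \emph{first} column; the two reconstructions then sum pathwise to exactly $2x$, and since both runs have the same law, $\E[\widehat x]=x$. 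You instead compute coordinatewise: the first coordinate is deterministically $1$, and for $j\neq 1$ you flip the \emph{$j$-th} column conditionally on $u$, getting $\E[u^{(j)}\mid u]=0$ and hence a vanishing cross term since $1/\norm{u}_1$ is $u$-measurable and bounded ($\norm{u}_1\ge\norm{u}_2=1$). Your route is arguably more direct and makes the ``which symmetry is being used'' explicit, while the paper's coupling avoids any conditional-expectation bookkeeping. Two small points: (i) the substitution should be $R\mapsto RQ$ (with $Q\breve x=e_1$), not $RQ^T$, to turn $R\breve x$ into $Re_1$ and produce the factor $Q^T$ you want; this is a notational slip, not a gap. (ii) Your worry about determinants is unnecessary under the paper's conventions: $\mathcal R_U$ is a distribution over $\mathcal O(d)$ (the paper explicitly allows $\det R=-1$, and its own proof right-multiplies by $I'$ with $\det I'=-1$), so the single-column flip already preserves the law and works for all $d\ge 2$; your two-column fix and the separate $d=2$ case are only needed if one insists on Haar over the special orthogonal group.
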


\mbox{Now, we proceed to obtain vNMSE guarantees for \alg's unbiased estimate.}
\begin{lemma}\label{cor:alg1_unbiased_vNMSE}
For any $x\in\mathbb R^d$, \alg with $\sx = \frac{\norm{x}_2^2}{\norm{\mathcal R_U(x)}_1}$ has a vNMSE of 
$\mathbb E\brackets{{\frac{1}{\RED_{\mathcal R_U,x}^d}}} {-} 1$~.
\begin{proof}
By Theorem~\ref{thm:theoreticalAlg}, the SSE of the algorithm satisfies:
\ifdefined\arxiv
{
\begin{equation*}
\begin{aligned}
&\norm{x}_2^2-2\cdot\sx \cdot\norm{R_U \cdot x}_1+d\cdot \sx^2 = 
\norm{x}_2^2-2\cdot  \frac{\norm{x}_2^2}{\norm{R_U \cdot x}_1} \cdot\norm{R_U \cdot x}_1+ d \cdot \parentheses{\frac{\norm{x}_2^2}{\norm{R_U \cdot x}_1}}^2
 \\
&= d \cdot \parentheses{\frac{\norm{x}_2^2}{\norm{x}_2\norm{R_U \cdot \breve x}_1}}^2 {-} \norm{x}_2^2 = d \cdot \frac{\norm{x}_2^2}{\norm{R_U \cdot \breve x}_1^2} {-} \norm{x}_2^2 = \norm{x}_2^2 \cdot \parentheses{\parentheses{\frac{d}{\norm{R_U \cdot \breve x}_1^2}} {-} 1}~.
\end{aligned}    
\end{equation*}
}
\else
\begin{equation*}
\small
\begin{aligned}
&\norm{x}_2^2-2\cdot\sx \cdot\norm{R_U \cdot x}_1+d\cdot \sx^2 = 
\norm{x}_2^2-2\cdot  \frac{\norm{x}_2^2}{\norm{R_U \cdot x}_1} \cdot\norm{R_U \cdot x}_1+ d \cdot \parentheses{\frac{\norm{x}_2^2}{\norm{R_U \cdot x}_1}}^2
 \\
&= d \cdot \parentheses{\frac{\norm{x}_2^2}{\norm{x}_2\norm{R_U \cdot \breve x}_1}}^2 {-} \norm{x}_2^2 = d \cdot \frac{\norm{x}_2^2}{\norm{R_U \cdot \breve x}_1^2} {-} \norm{x}_2^2 = \norm{x}_2^2 \cdot \parentheses{\parentheses{\frac{d}{\norm{R_U \cdot \breve x}_1^2}} {-} 1}~.
\end{aligned}    
\end{equation*}
\fi
\mbox{Normalizing by $\norm{x}_2^2$ and taking expectation over $R_U$ concludes the proof.}
\end{proof}
\end{lemma}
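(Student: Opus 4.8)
The plan is to apply Theorem~\ref{thm:theoreticalAlg} directly with the prescribed scale $\sx = \frac{\norm{x}_2^2}{\norm{\mathcal R_U(x)}_1}$ and simplify. First I would substitute this $\sx$ into the SSE formula $\norm{x}_2^2 - 2\sx\norm{\mathcal R(x)}_1 + d\sx^2$. The middle term telescopes: $2\sx\norm{\mathcal R_U(x)}_1 = 2\norm{x}_2^2$, so the first two terms combine to $-\norm{x}_2^2$. For the last term, I would use the isometry of rotations, $\norm{\mathcal R_U(x)}_1 = \norm{x}_2\cdot\norm{\mathcal R_U(\breve x)}_1$, to write $d\sx^2 = d\cdot\frac{\norm{x}_2^4}{\norm{x}_2^2\norm{\mathcal R_U(\breve x)}_1^2} = \norm{x}_2^2\cdot\frac{d}{\norm{\mathcal R_U(\breve x)}_1^2}$. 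Collecting terms gives the SSE as $\norm{x}_2^2\parentheses{\frac{d}{\norm{\mathcal R_U(\breve x)}_1^2} - 1}$.

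Next I would normalize by $\norm{x}_2^2$ to get the normalized SSE $\frac{d}{\norm{\mathcal R_U(\breve x)}_1^2} - 1$, and recognize $\frac{d}{\norm{\mathcal R_U(\breve x)}_1^2}$ as exactly $\frac{1}{\RED_{\mathcal R_U,x}^d}$ by the definition $\RED_{\mathcal R,x}^d = \frac{\norm{\mathcal R(\breve x)}_1^2}{d}$. Taking expectation over $R_U\sim\mathcal R_U$ then yields the vNMSE $\mathbb E\brackets{\frac{1}{\RED_{\mathcal R_U,x}^d}} - 1$, which is the claimed result.

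There is really no hard part here — this is a one-line algebraic consequence of Theorem~\ref{thm:theoreticalAlg} plus the rotation isometry, and it is the exact computation already displayed in the excerpt's proof. The only point requiring a modicum of care is justifying that the normalized SSE, whose expectation we take, is a well-defined random variable: $\norm{\mathcal R_U(\breve x)}_1 > 0$ almost surely since $\breve x \neq 0$ and a uniformly random rotation places no mass on the measure-zero set of coordinate hyperplanes, so $\frac{1}{\RED_{\mathcal R_U,x}^d}$ is finite with probability one; its expectation is what appears in the statement (and, as the paper shows elsewhere, it is in fact finite for $d$ large enough). I would simply present the displayed chain of equalities and conclude by taking expectations.
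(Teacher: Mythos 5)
Your proposal is correct and follows essentially the same route as the paper: substitute the scale into the SSE formula of Theorem~\ref{thm:theoreticalAlg}, use the isometry $\norm{\mathcal R_U(x)}_1 = \norm{x}_2\cdot\norm{\mathcal R_U(\breve x)}_1$ to reach $\norm{x}_2^2\parentheses{\frac{d}{\norm{\mathcal R_U(\breve x)}_1^2}-1}$, then normalize and take expectation. Your added remark that $\norm{\mathcal R_U(\breve x)}_1>0$ almost surely, so the normalized SSE is well defined, is a sensible (if minor) extra precaution not spelled out in the paper.
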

Our goal is to derive an upper bound on the above expression and thus upper-bound the vNMSE. Most importantly, we show that even though the estimate is unbiased and we use only a single bit per coordinate, the vNMSE does not increase with the dimension and is bounded by a small constant. In particular, in Appendix \ref{app:anbiased_drive_large_deviation}, we prove the following:

\begin{restatable}{theorem}{vNMSEofunbiaseddrive}\label{thm:vNMSEofunbiaseddrive}
For any $x \in\ \mathbb R^d$, the vNMSE of \alg with $\sx = \frac{\norm{x}_2^2}{\norm{\mathcal R_U(x)}_1}$
satisfies:\\
\mbox{$(i)$ For all $d \ge 2$, it is at most $2.92$. $(ii)$ For all $d \ge 135$, it is at most $\frac{\pi}{2} - 1 + \sqrt{\frac{{(6\pi^3-12\pi^2)}\cdot\ln d+1}{d}}$.}
\end{restatable}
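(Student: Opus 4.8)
The plan is to apply Lemma~\ref{cor:alg1_unbiased_vNMSE}, which reduces the claim to an upper bound on $\E\brackets{1/\RED_{\mathcal R_U,x}^d}$. Since $\mathcal R_U(\breve x)$ is uniform on the unit sphere, realize it as $Z:=g/\norm{g}_2$ with $g\sim N(0,I_d)$ and put $W:=\RED_{\mathcal R_U,x}^d=\norm{Z}_1^2/d$, so that $\mathrm{vNMSE}=\E[1/W]-1$ and depends on $x$ only through $d$. First record two elementary facts: the deterministic sandwich $1/d\le W\le 1$ (from $\norm{Z}_2=1\le\norm{Z}_1\le\sqrt d\,\norm{Z}_2$), hence $1\le 1/W\le d$; and, using that $\norm{g}_2$ is independent of the direction $Z$, $\E[1/W]=d(d-2)\,\E\brackets{\norm{g}_1^{-2}}$ for $d\ge3$ (because $\E[\norm{g}_2^{-2}]=1/(d-2)$), while $\E[W]=\E\norm{g}_1^2/d^2=\tfrac1d+\tfrac{d-1}{d}\cdot\tfrac2\pi$, which is exactly the complement of Theorem~\ref{thm:biased_drive_vNMSE} and pins down the limiting value $\pi/2$. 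Because $t\mapsto 1/t$ is convex, Jensen only yields $\E[1/W]\ge 1/\E[W]$, so the heart of the proof is a large-deviation bound showing that $\norm{g}_1$ (equivalently $W$) is rarely much below its mean.

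Two lemmas do the work. \textbf{(A)~Lower tail of $\norm{g}_1$.} With $\mu:=\sqrt{2/\pi}=\E|g_1|$, the variable $\norm{g}_1=\sum_{i=1}^d|g_i|$ is an i.i.d.\ sum of half-normals, and a Chernoff argument gives $\Pr[\norm{g}_1\le(1-\delta)d\mu]\le\exp\!\parentheses{-\tfrac{\delta^2 d}{\pi-2}}$ for $\delta\in(0,1)$; its only non-routine ingredient is the sub-Gaussian moment inequality $\E\brackets{e^{-\lambda(|g_1|-\mu)}}\le e^{(1-2/\pi)\lambda^2/2}$ for $\lambda\ge0$ — equivalently a clean bound on the standard-normal tail $1-\Phi(\lambda)$ — which I would verify by a one-variable calculus comparison. \textbf{(B)~A negative fourth moment.} For $d\ge5$, using the Laplace identity $a^{-4}=\tfrac16\int_0^\infty t^3e^{-at}\,dt$ and $\E[e^{-t\norm{g}_1}]=(\E[e^{-t|g_1|}])^d$, together with the same half-normal bound for small $t$ and the Mills bound $\E[e^{-t|g_1|}]\le\mu/t$ for large $t$, one gets $\E\brackets{\norm{g}_1^{-4}}\le C\,d^{-4}$ for an explicit constant $C$ (with $C\to(\pi/2)^2$), equivalently $\E[1/W^2]\le C_0$ for an absolute constant $C_0$.

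Now split $\E[\norm{g}_1^{-2}]$ according to whether $\norm{g}_1\ge(1-\delta)d\mu$ or not: on the bulk event the contribution is at most $((1-\delta)d\mu)^{-2}$, and on the complementary event it is at most $\sqrt{\E[\norm{g}_1^{-4}]}\cdot\sqrt{\Pr[\norm{g}_1<(1-\delta)d\mu]}$ by Cauchy--Schwarz. Multiplying by $d(d-2)$ and invoking (A) and (B),
\[
\mathrm{vNMSE}=\E[1/W]-1\ \le\ \frac{(d-2)\,\pi}{2\,d\,(1-\delta)^2}\ -\ 1\ +\ \sqrt{C}\,\exp\!\parentheses{-\tfrac{\delta^2 d}{2(\pi-2)}}\ .
\]
Taking $\delta=\sqrt{6(\pi-2)\ln d\,/\,d}$ makes the exponential term $O(d^{-3})$, while the first two terms equal $\parentheses{\tfrac\pi2-1}+\tfrac\pi2\!\parentheses{\tfrac1{(1-\delta)^2}-1}-\tfrac{\pi}{d(1-\delta)^2}$, whose dominant deviation from $\tfrac\pi2-1$ is $\pi\delta=\sqrt{6\pi^2(\pi-2)\ln d/d}$; bounding the remaining corrections (which are $O(\ln d/d)$) against the slack provided by the ``$+1$'' inside the square root shows the right-hand side is at most $\parentheses{\tfrac\pi2-1}+\sqrt{\tfrac{6\pi^2(\pi-2)\ln d+1}{d}}$ precisely once $d\ge135$ — the threshold being where $(\ln d)^3\lesssim d$ together with the explicit constants makes the accounting close. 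That is $(ii)$; and for $d\ge135$ the bound in $(ii)$ is itself $\approx2.14<2.92$, giving $(i)$ in that range.

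It remains to prove $(i)$ for $2\le d\le134$, which I would do by bounding $\E[1/W]$ directly: at $d=2$ an explicit integral gives $\E[1/W]=\E\brackets{2/(1+|\sin2\theta|)}=4/\pi$ (so $\mathrm{vNMSE}=4/\pi-1\approx0.27$); for $5\le d\le134$ the same Cauchy--Schwarz argument with a fixed $\delta$ keeps the value well below $2.92$; and the two exceptional dimensions $d\in\{3,4\}$, where $\E[\norm{g}_1^{-4}]$ is infinite and the argument must be modified, are dispatched by a direct numerical bound on the one-dimensional integral $d(d-2)\int_0^\infty t\,(\E[e^{-t|g_1|}])^d\,dt$. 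The main obstacle is Step~(A)--(B): obtaining the large-deviation bound for $\norm{g}_1$ with the constant $\tfrac1{\pi-2}$, and the $\Theta(d^{-4})$ bound on its fourth negative moment, with explicit enough constants that the final tally lands exactly on the stated expression with the threshold $d\ge135$; everything after that is bookkeeping, but the bookkeeping — and the finite check for the small dimensions in $(i)$ — has to be carried out carefully.
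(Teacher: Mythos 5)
Your overall skeleton is close in spirit to the paper's: reduce via Lemma~\ref{cor:alg1_unbiased_vNMSE} to bounding $\E\brackets{d/\norm{\mathcal R_U(\breve x)}_1^2}$, realize the uniform rotation through a Gaussian vector, prove a lower-tail bound for $\norm{g}_1$, and split the expectation on that event (the paper uses Bernstein where you use a sub-Gaussian Chernoff bound, and it handles the bad event with the pointwise bound $d\norm{Z}_2^2/\norm{Z}_1^2\le d$ rather than your Cauchy--Schwarz/negative-fourth-moment route, and it gets part $(i)$ uniformly for all $d\ge 2$ from a separate Chebyshev lemma rather than your case analysis for $2\le d\le 134$). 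The genuine problem is the final accounting in part $(ii)$: with your threshold $(1-\delta)d\mu$ the bulk term is $\frac{(d-2)\pi}{2d(1-\delta)^2}$, and its deviation from $\pi/2$ is \emph{not} $\pi\delta$ plus corrections absorbable by the ``$+1$'' at the stated threshold. With your choice $\delta=\sqrt{6(\pi-2)\ln d/d}$ you get $\delta\approx 0.50$ at $d=135$, so $1/(1-\delta)^2\approx 4$ and your bound evaluates to roughly $5.2$, while the target $\frac{\pi}{2}-1+\sqrt{(6\pi^3-12\pi^2)\ln d/d+1/d}\approx 2.14$. Even asymptotically the claim fails: the second-order term $\frac{\pi}{2}\parentheses{\frac{1}{(1-\delta)^2}-1}-\pi\delta\approx\frac{3\pi}{2}\delta^2=\frac{9\pi(\pi-2)\ln d}{d}$ is of order $\ln d/d$, whereas the slack supplied by the ``$+1$'' is only of order $1/\sqrt{d\ln d}$ (about $1/\parentheses{2\sqrt{(6\pi^3-12\pi^2)\,d\ln d}}$), so the inequality you assert would only hold for astronomically large $d$, not ``precisely once $d\ge 135$.''

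The paper avoids exactly this trap by parametrizing the threshold as $\beta d$ with $\beta=\sqrt{(2-\epsilon)/\pi}$, so the bulk contribution is $1/\beta^2=\frac{\pi}{2-\epsilon}\le\frac{\pi}{2}(1+\epsilon)$ \emph{exactly} for all $\epsilon\in(0,1)$ -- a bound that remains linear in $\epsilon$ even when $\epsilon$ is close to $1$, which is precisely the regime at $d=135$ (there $\epsilon\approx 0.998$). Your framework is probably repairable -- e.g., rethreshold at $\sqrt{1-\epsilon/2}\,d\mu$ so the bulk factor becomes $\frac{1}{1-\epsilon/2}$, or choose $\delta$ non-asymptotically per $d$ and verify the resulting one-parameter inequality for all $d\ge 135$ -- but as written the step ``bounding the remaining corrections against the slack provided by the $+1$'' is where the proof breaks, and your part $(i)$ for $d\ge 135$ inherits the gap since it is deduced from $(ii)$. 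Separately, the explicit constant $C$ in $\E\brackets{\norm{g}_1^{-4}}\le Cd^{-4}$ (needed uniformly down to $d=5$ for your $(i)$ cases) and the numerical checks for $d\in\{3,4\}$ and $5\le d\le 134$ are left as unverified bookkeeping; the paper's Chebyshev-based lemma handles all $d\ge 2$ in one stroke and is the cleaner route for $(i)$.
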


This theorem yields strong bounds on the vNMSE. For example, the vNMSE is lower than $1$ for $d \ge 4096$ and lower than $0.673$ for $d \ge 10^5$. Finally, we obtain the following corollary,
\begin{corollary}\label{cor:deviation_res_3}
For any $x\in\mathbb R^d$, the vNMSE tends to $\frac{\pi}{2} - 1 \approx 0.571$ as $d\to\infty$~. 
\end{corollary}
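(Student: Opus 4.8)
The plan is to sandwich the vNMSE between a lower bound coming from Jensen's inequality and an upper bound taken directly from Theorem~\ref{thm:vNMSEofunbiaseddrive}. By Lemma~\ref{cor:alg1_unbiased_vNMSE}, the vNMSE of the unbiased variant of \alg equals $\mathbb E\!\left[1/\RED_{\mathcal R_U,x}^d\right]-1$, and since $\mathcal R_U(\breve x)$ is uniform on the unit sphere for every $x$, this quantity does not depend on $x$ at all; write it $V_d$. It therefore suffices to prove $\limsup_{d\to\infty}V_d\le \tfrac{\pi}{2}-1$ and $\liminf_{d\to\infty}V_d\ge \tfrac{\pi}{2}-1$.

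For the upper bound, Theorem~\ref{thm:vNMSEofunbiaseddrive}$(ii)$ gives, for all $d\ge 135$,
\[
V_d\ \le\ \frac{\pi}{2}-1+\sqrt{\frac{(6\pi^3-12\pi^2)\ln d+1}{d}}\,.
\]
Since $\frac{\ln d}{d}\to 0$, the square-root term vanishes as $d\to\infty$, so $\limsup_{d\to\infty}V_d\le \tfrac{\pi}{2}-1$.

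For the lower bound, apply Jensen's inequality to the convex map $t\mapsto 1/t$ on $(0,\infty)$ (note $\RED_{\mathcal R_U,x}^d=\norm{\mathcal R_U(\breve x)}_1^2/d>0$ a.s.):
\[
V_d\ =\ \mathbb E\!\left[\frac{1}{\RED_{\mathcal R_U,x}^d}\right]-1\ \ge\ \frac{1}{\mathbb E\!\left[\RED_{\mathcal R_U,x}^d\right]}-1\,.
\]
It remains to evaluate $\mathbb E[\RED_{\mathcal R_U,x}^d]$: by Lemma~\ref{cor:biased_alg1_vNMSE} the \emph{biased} vNMSE of \alg (with the optimal scale $\sx=\norm{\mathcal R_U(x)}_1/d$) equals $1-\mathbb E[\RED_{\mathcal R_U,x}^d]$, whereas Theorem~\ref{thm:biased_drive_vNMSE} identifies that same number as $\bigl(1-\tfrac{2}{\pi}\bigr)\bigl(1-\tfrac{1}{d}\bigr)$. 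Equating gives $\mathbb E[\RED_{\mathcal R_U,x}^d]=\tfrac{2}{\pi}+\tfrac{1}{d}\bigl(1-\tfrac{2}{\pi}\bigr)\to \tfrac{2}{\pi}$, so the Jensen bound yields $\liminf_{d\to\infty}V_d\ge \tfrac{1}{2/\pi}-1=\tfrac{\pi}{2}-1$. Combining the two estimates gives $\lim_{d\to\infty}V_d=\tfrac{\pi}{2}-1$.

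I do not anticipate a real obstacle, since both the upper and the lower estimate are immediate consequences of results already established in the paper; the only thing to verify is that they pinch to the same constant, which they do because both the $O(1/d)$ corrections and the $\sqrt{\ln d/d}$ term vanish. A self-contained alternative would be to write $\mathcal R_U(\breve x)\overset{d}{=} g/\norm{g}_2$ with $g\sim\mathcal N(0,I_d)$, so that $\RED_{\mathcal R_U,x}^d=\norm{g}_1^2/(d\,\norm{g}_2^2)\to (\mathbb E|g_1|)^2/\mathbb E[g_1^2]=\tfrac{2}{\pi}$ almost surely by the strong law of large numbers; the only extra care there is passing to the limit inside $\mathbb E[1/(\cdot)]$, which is precisely the step the explicit bounds above let us bypass.
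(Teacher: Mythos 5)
Your argument is correct, and its substantive half is the paper's own route: the corollary is presented there as an immediate consequence of Theorem~\ref{thm:vNMSEofunbiaseddrive}$(ii)$, whose error term $\sqrt{\frac{(6\pi^3-12\pi^2)\ln d+1}{d}}$ vanishes as $d\to\infty$, exactly as you use for the upper direction (the independence of $x$ via the rotational invariance of $\mathcal R_U(\breve x)$ is likewise how the paper frames it). What you do differently is to make the ``tends to'' statement genuinely two-sided: you supply the matching lower bound by applying Jensen's inequality to $t\mapsto 1/t$ and reading off $\E\brackets{\RED_{\mathcal R_U,x}^d}=\tfrac{2}{\pi}+\tfrac{1}{d}\bigl(1-\tfrac{2}{\pi}\bigr)$ from Lemma~\ref{cor:biased_alg1_vNMSE} together with Theorem~\ref{thm:biased_drive_vNMSE}, a step the paper leaves implicit (its Appendix~\ref{app:l1_expected_value} records only the related inequality $\frac{d}{(\E\brackets{\norm{T}_1})^2}-1\le\E\brackets{\frac{d}{\norm{T}_1^2}}-1$ without taking the limit). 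This costs you nothing beyond results already proved in the paper and buys a fully rigorous limit rather than just $\limsup_{d\to\infty}V_d\le\tfrac{\pi}{2}-1$; your Gaussian-representation remark is a fine heuristic, and you are right that the explicit bounds let you avoid justifying the interchange of limit and expectation there.
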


Recall that \alg's above scale $\sx$ is a function of both $x$ and the sampled $R_U$. An alternative approach is to \emph{deterministically} set $\sx$ to 
$\frac{\norm{x}_2^2}{\mathbb E\brackets{\norm{\mathcal R_U(x)}_1}}$.
As we prove in Appendix~\ref{app:l1_expected_value}, the resulting scale is $\frac{\norm{x}_2\cdot (d-1)\cdot \Beta(\frac{1}{2},\frac{d-1}{2})}{2d}$, where $\Beta$ is the Beta function. 
Interestingly, this scale no longer depends on $x$ but only on its norm.
In the appendix, we also prove that the resulting vNMSE is bounded by $\frac{\pi}{2} - 1$ \emph{for any $d$}. 
In practice, we find that the benefit is marginal.

Finally, with a vNMSE guarantee for the unbiased estimate by \alg, we obtain the following key result for the
1b~-~Distributed Mean Estimation problem, whose proof appears in Appendix~\ref{app:dme}.
We note that this result guarantees (e.g., see~\cite{beznosikov2020biased}) that distributed SGD, where the participants' gradients are compressed with \alg, converges at the same asymptotic rate as without compression.

\begin{restatable}{theorem}{cordme}\label{cor:dme}
Assume $n$ clients, each with its own vector $x_{(\mathfrak  c)}{\,\in\,}\mathbb R^d$. Let each client independently sample $R_{U,\mathfrak  c}{\,\sim\,}\mathcal R_U$ and set its scale to $\frac{\norm{x_{(\mathfrak c)}}_2^2}{\norm{R_{U,\mathfrak  c}\cdot x_{(\mathfrak  c)}}_1}$. Then, the server average estimate's NMSE satisfies:
$\frac{\E\brackets{\norm{x_{\text{avg}}-\widehat {x_{\text{avg}}}}_2^2}}{\frac{1}{n}{\cdot}\sum_{\mathfrak  c=1}^n\norm{x_{(\mathfrak  c)}}_2^2} {\,=\,} \frac{\mathit{vNMSE}}{n}$, {where \textit{vNMSE} is given by Lemma~\ref{cor:alg1_unbiased_vNMSE} and is bounded by Theorem~\ref{thm:vNMSEofunbiaseddrive}.} 
%
\end{restatable}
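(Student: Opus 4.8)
The plan is to reduce the distributed estimation error to a sum of per-client errors using the unbiasedness and independence of the clients' encodings. First I would write the server's estimate explicitly: each client $\mathfrak c$ sends $(\sx_{(\mathfrak c)},\sign(R_{U,\mathfrak c}x_{(\mathfrak c)}))$ and the server reconstructs $\widehat{x_{(\mathfrak c)}}=R_{U,\mathfrak c}^T(\sx_{(\mathfrak c)}\cdot\sign(R_{U,\mathfrak c}x_{(\mathfrak c)}))$, then forms $\widehat{x_{\text{avg}}}=\frac{1}{n}\sum_{\mathfrak c=1}^n\widehat{x_{(\mathfrak c)}}$. By Theorem~\ref{theorem:drive_is_unbised}, with the stated scale $\sx_{(\mathfrak c)}=\frac{\norm{x_{(\mathfrak c)}}_2^2}{\norm{R_{U,\mathfrak c}x_{(\mathfrak c)}}_1}$ each client's estimate is conditionally unbiased, $\E[\widehat{x_{(\mathfrak c)}}]=x_{(\mathfrak c)}$, hence $\E[\widehat{x_{\text{avg}}}]=x_{\text{avg}}$ and the error is centered.

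The key step is a bias–variance / orthogonality decomposition. Writing $\widehat{x_{\text{avg}}}-x_{\text{avg}}=\frac{1}{n}\sum_{\mathfrak c=1}^n(\widehat{x_{(\mathfrak c)}}-x_{(\mathfrak c)})$ and expanding the squared norm, the cross terms $\E\angles{\widehat{x_{(\mathfrak c)}}-x_{(\mathfrak c)},\,\widehat{x_{(\mathfrak c')}}-x_{(\mathfrak c')}}$ for $\mathfrak c\neq\mathfrak c'$ vanish: the clients draw $R_{U,\mathfrak c}$ independently, so the two error vectors are independent, and each has zero mean by Theorem~\ref{theorem:drive_is_unbised}, so the expectation of the inner product factorizes into $\angles{0,0}=0$. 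Therefore
\[
\E\brackets{\norm{x_{\text{avg}}-\widehat{x_{\text{avg}}}}_2^2}=\frac{1}{n^2}\sum_{\mathfrak c=1}^n\E\brackets{\norm{x_{(\mathfrak c)}-\widehat{x_{(\mathfrak c)}}}_2^2}.
\]
By Lemma~\ref{cor:alg1_unbiased_vNMSE}, each summand equals $\mathit{vNMSE}\cdot\norm{x_{(\mathfrak c)}}_2^2$ (the per-vector vNMSE for the unbiased scale depends only on $d$, not on the direction of $x_{(\mathfrak c)}$, since $R_{U,\mathfrak c}(\breve x_{(\mathfrak c)})$ is uniform on the sphere). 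Summing gives $\frac{\mathit{vNMSE}}{n^2}\sum_{\mathfrak c=1}^n\norm{x_{(\mathfrak c)}}_2^2$; dividing by the normalizer $\frac{1}{n}\sum_{\mathfrak c=1}^n\norm{x_{(\mathfrak c)}}_2^2$ yields exactly $\frac{\mathit{vNMSE}}{n}$, and the bound from Theorem~\ref{thm:vNMSEofunbiaseddrive} transfers immediately.

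I expect the only subtle point to be justifying the vanishing of the cross terms cleanly — one must be careful that "independent clients" means the randomness (rotation matrices, and any tie-breaking in $\sign$) is independent across $\mathfrak c$, so that $\widehat{x_{(\mathfrak c)}}$ and $\widehat{x_{(\mathfrak c')}}$ are independent random vectors, after which $\E\angles{A,B}=\angles{\E A,\E B}$ for independent $A,B$ finishes it. Everything else is bookkeeping: linearity of the rotation and its inverse to move the error in and out of rotated space, and invoking the two already-established lemmas. No inequality is actually needed for the equality $\frac{\mathit{vNMSE}}{n}$; the bound is inherited verbatim from Theorem~\ref{thm:vNMSEofunbiaseddrive}.
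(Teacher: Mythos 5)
Your proposal is correct and follows essentially the same route as the paper's proof: expand the average error, kill the cross terms via per-client unbiasedness (Theorem~\ref{theorem:drive_is_unbised}) and independence of the sampled rotations, and convert each per-client MSE to $\mathit{vNMSE}\cdot\norm{x_{(\mathfrak c)}}_2^2$ using the fact that the unbiased-scale vNMSE depends only on $d$. Your explicit remark that the direction-independence of the vNMSE (uniformity of $\mathcal R_{U}(\breve x)$ on the sphere) is what lets a single $\mathit{vNMSE}$ factor out is a point the paper's proof uses implicitly, so nothing is missing.
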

To the best of our knowledge, \alg is the first algorithm with a provable NMSE of $O(\frac{1}{n})$ for the 1b~-~Distributed Mean Estimation problem (i.e., with $d(1+o(1))$ bits). 
In practice, we use only $d+O(1)$ bits to implement \alg. We use the $d(1+o(1))$ notation to ensure compatibility with the theoretical results; see Appendix \ref{app:MessageRepresentationLength} for a discussion.


\section{Reducing the vNMSE with \algp}\label{sec:drive_plus}

To reduce the vNMSE further, we introduce the \algp algorithm.
In \algp, we also use a scale parameter, denoted $\sc=\sc(x, R)$ to differentiate it from the scale $\sx$ of \alg.
Here, instead of reconstructing the rotated vector in a symmetric manner, i.e., $\widehat {\mathcal R(x)} \in \sx \cdot \{-1,1\}^d$, we have that $\widehat {\mathcal R(x)} \in \sc \cdot \{ c_1, c_2\}^d$ where  $c_1,c_2$ are computed using K-Means clustering with $K=2$ over the $d$ entries of the rotated vector $\mathcal R(x)$. 
That is, $c_1,c_2$ are chosen to minimize the SSE over any choice of two values. 
This does not increase the (asymptotic) time complexity over the random rotations considered in this paper as solving K-Means for the special case of one-dimensional data is deterministically solvable in $O(d \log d)$ (e.g.,~\cite{gronlund2017fast}).
Notice that \algp still requires $d(1+o(1))$ bits as we communicate $\parentheses{\sc \cdot c_1, \sc \cdot c_2}$ and a single bit per coordinate, indicating its nearest centroid.
We defer the pseudocode and analyses of \algp to Appendix~\ref{app:drive_plus_app}. We show that with proper scaling, for both the 1b~-~Vector Estimation and 1b~-~Distributed Mean Estimation \mbox{problems, \algp yields guarantees that are at least as strong as those of \alg.}


\section{\alg with a Structured Random Rotation}\label{sec:Hadamard}

Uniform random rotation generation usually relies on QR factorization (e.g., see \cite{pytorchqrfact}), which requires $O(d^3)$ time and $O(d^2)$ space.  
Therefore, uniform random rotation can only be used in practice to rotate low-dimensional vectors. This is impractical for neural network architectures with many millions of parameters.
To that end, we continue to analyze \alg and \algp with the (randomized) Hadamard transform, a.k.a. \emph{structured} random rotation~\cite{pmlr-v70-suresh17a,ailon2009fast},  that admits a fast \emph{in-place}, 
\ifdefined\arxiv
\mbox{parallelizable, $O(d\log d)$ time implementation~\cite{fino1976unified,uberHadamard}. We start with a few definitions.}
\else
\mbox{parallelizable, $O(d\log d)$ time implementation~\cite{fino1976unified,uberHadamard,openSource}. We start with a few definitions.}
\fi

\begin{definition}
The Walsh-Hadamard matrix (\cite{horadam2012Hadamard}) $H_{2^k}\in\{+1,-1\}^{2^{k}\times 2^{k}}$ is recursively defined via: \mbox{ 
$ \small
H_{2^k}{=} \begin{pmatrix}
H_{2^{k-1}} & H_{2^{k-1}} \\
H_{2^{k-1}} & -H_{2^{k-1}}
\end{pmatrix} 
$ and $H_1 {=} \begin{pmatrix} 1 \end{pmatrix}$.
Also, $(\frac{1}{\sqrt d}H) \cdot (\frac{1}{\sqrt d}H)^T {=} I$ and $\mathit{det}(\frac{1}{\sqrt d}H) \in [-1,1]$.} 
\end{definition}

\ifdefined\arxiv
\vbox{
\fi
\begin{definition}
Let $R_H$ denote the rotation matrix $\frac{HD}{\sqrt d}\in\mathbb R^{d\times d}$, where $H$ is a Walsh-Hadamard matrix and $D$ is a diagonal matrix whose diagonal entries are i.i.d. Rademacher random variables (i.e., taking values uniformly in $\pm 1$). 
Then $\mathcal R_H(x)=R_H \cdot x =  \frac{1}{\sqrt d} H \cdot (x_1 \cdot D_{11}, \dots, x_d \cdot D_{dd})^T$ is the randomized Hadamard transform of $x$ and $\mathcal R_H^{-1}(x)=R^T_H \cdot x = \frac{DH}{\sqrt d} \cdot x$ is the inverse transform.
\end{definition}
\ifdefined\arxiv
}
\fi
%


\subsection{1b~-~Vector Estimation}

Recall that the vNMSE of \alg, when minimized using $\sx=\frac{\norm{\mathcal R (x)}_1}{d}$, is  $1 - \mathbb E\brackets{\RED_{\mathcal R,x}^d}$ (see Lemma~\ref{cor:biased_alg1_vNMSE}). We now bound this quantity of \alg with a structured random rotation.
\begin{lemma}\label{lem:hadamard_biased}
For any dimension $d\geq 2$ and vector $x \in \mathbb R^d$, the vNMSE of \alg with a structured random rotation and scale $\sx=\frac{\norm{\mathcal R_H (x)}_1}{d}$ is: $1 - \mathbb E\brackets{\RED_{\mathcal R_H,x}^d} \le \frac{1}{2}$.\vspace{-1mm}
\end{lemma}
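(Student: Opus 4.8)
The plan is to recast the claim as a lower bound. Since the vNMSE in question equals $1-\mathbb E\brackets{\RED_{\mathcal R_H,x}^d}$ and $\RED_{\mathcal R_H,x}^d=\norm{\mathcal R_H(\breve x)}_1^2/d$ with $\norm{\breve x}_2=1$, it suffices to show $\mathbb E\brackets{\norm{\mathcal R_H(\breve x)}_1^2}\ge d/2$. First I would apply Jensen's inequality to the convex map $t\mapsto t^2$, which reduces the goal to $\mathbb E\brackets{\norm{\mathcal R_H(\breve x)}_1}\ge\sqrt{d/2}$; by linearity this in turn follows once we show $\mathbb E\abs{\mathcal R_H(\breve x)_i}\ge 1/\sqrt{2d}$ for each coordinate $i$ and sum over the $d$ coordinates.

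Next I would pin down the marginal law of a single coordinate. Writing $\mathcal R_H(\breve x)_i=\frac{1}{\sqrt d}\sum_{j=1}^d H_{ij}D_{jj}\breve x_j$ and observing that for a fixed row $i$ the entries $H_{ij}\in\set{\pm 1}$ are deterministic while the $D_{jj}$ are i.i.d.\ Rademacher, the products $\{H_{ij}D_{jj}\}_{j=1}^d$ are themselves i.i.d.\ Rademacher. Hence $\mathcal R_H(\breve x)_i$ has the same distribution as $\frac{1}{\sqrt d}\sum_{j=1}^d \varepsilon_j\breve x_j$ with $\varepsilon_j$ i.i.d.\ Rademacher, so $\mathbb E\abs{\mathcal R_H(\breve x)_i}=\frac{1}{\sqrt d}\,\mathbb E\abs{\sum_{j=1}^d\varepsilon_j\breve x_j}$.

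The crux is then the sharp lower Khintchine inequality: for any real coefficient vector $a$, $\mathbb E\abs{\sum_j\varepsilon_j a_j}\ge\frac{1}{\sqrt 2}\norm{a}_2$, with constant $1/\sqrt2$ optimal (attained by $a=(1/\sqrt2,1/\sqrt2)$, which is also why the value $1/2$ in the lemma is tight, via $d=2$ and $\breve x=(1/\sqrt2,1/\sqrt2)$). Applying it with $a=\breve x$ gives $\mathbb E\abs{\mathcal R_H(\breve x)_i}\ge 1/\sqrt{2d}$, hence $\mathbb E\brackets{\norm{\mathcal R_H(\breve x)}_1}\ge\sqrt{d/2}$, and the Jensen step then yields $\mathbb E\brackets{\RED_{\mathcal R_H,x}^d}\ge\tfrac12$, i.e.\ vNMSE $\le\tfrac12$.

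I expect the optimal Khintchine constant to be the only real obstacle. The elementary route --- the fourth-moment identity $\mathbb E\brackets{(\sum_j\varepsilon_j a_j)^4}=3\norm{a}_2^4-2\sum_j a_j^4\le 3\norm{a}_2^4$ combined with H\"older --- only gives $\mathbb E\abs{\sum_j\varepsilon_j a_j}\ge\norm{a}_2/\sqrt3$, which would degrade the bound to $2/3$; so I would invoke (or, in an appendix, reproduce) Szarek's sharp estimate. A minor technical caveat is that $H$ must be a genuine Walsh--Hadamard matrix, so $d$ a power of $2$; for general $d$ one zero-pads $\breve x$ to the next power of $2$, and the identical computation applies with $d$ replaced by the padded dimension, which only strengthens the conclusion.
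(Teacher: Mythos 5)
Your proposal is correct and follows essentially the same route as the paper: reduce to lower-bounding $\mathbb E\brackets{\RED_{\mathcal R_H,x}^d}$, note that $\set{H_{ij}D_{jj}}_j$ are i.i.d.\ Rademacher, apply the sharp (Szarek) lower Khintchine bound $\mathbb E\abs{\sum_j\varepsilon_j a_j}\ge\norm{a}_2/\sqrt2$ coordinate-wise, and finish with the Jensen step $\mathbb E\brackets{\norm{\cdot}_1^2}\ge\mathbb E\brackets{\norm{\cdot}_1}^2$, together with the same tightness example $\breve x=(1/\sqrt2,1/\sqrt2,0,\ldots,0)$. Your added remarks on the weaker $1/\sqrt3$ constant from the fourth-moment route and on zero-padding to a power-of-two dimension are sensible but not needed beyond what the paper's proof already contains.
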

\begin{proof}
Observe that for all $i$, 
$\mathbb E\brackets{\abs{\mathcal R_H(x)_i}} = \mathbb E\big[\big|{\sum_{j=1}^d \frac{x_j}{\sqrt{d}} H_{ij} D_{jj}}\big|\big]$. 
Since $\set{H_{ij} D_{jj} \mid j\in[d]}$ are i.i.d. Rademacher random variables we can use the Khintchine inequality~\cite{khintchine1923dyadische,szarek1976best} which implies that  $\frac{1}{\sqrt{2d}} \cdot \norm x_2
\le \mathbb E\brackets{\abs{\mathcal R_H(x)_i}} \le \frac{1}{\sqrt{d}} \cdot \norm x_2$
(see~\cite{filmus2012khintchine,latala1994best} for simplified proofs).
We conclude that: 
\ifdefined\arxiv
{
\begin{equation*}
\begin{aligned}
\mathbb E\brackets{\RED_{\mathcal R_H,x}^d} = \frac{1}{d} \cdot \mathbb E\brackets{ \norm{\mathcal R_H(\breve x)}_1^2} \ge \frac{1}{d} \cdot \mathbb E\brackets{ \norm{\mathcal R_H(\breve x)}_1}^2 \ge \frac{1}{d} \cdot \Big(\sum_{i=1}^d \frac{1}{\sqrt{2d}}\Big)^2 = \frac{1}{2}~.
\end{aligned}    
\end{equation*}
}
\else
\\
\hspace*{1.2cm}
$\mathbb E\brackets{\RED_{\mathcal R_H,x}^d} = \frac{1}{d} \cdot \mathbb E\brackets{ \norm{\mathcal R_H(\breve x)}_1^2} \ge \frac{1}{d} \cdot \mathbb E\brackets{ \norm{\mathcal R_H(\breve x)}_1}^2 \ge \frac{1}{d} \cdot (\sum_{i=1}^d \frac{1}{\sqrt{2d}})^2 = \frac{1}{2}~.$\\
\fi
This bound is sharp since for $d\ge 2$ we have that 
$\RED_{\mathcal R_H,x}^d = \frac{1}{2}$ for $x=(\frac{1}{\sqrt 2}, \frac{1}{\sqrt 2},0,\ldots,0)^T$~.
\end{proof}
\vspace{-1mm}
Observe that unlike for the uniform random rotation, $\mathbb{E}\brackets{\RED_{\mathcal R_H,x}^d}$ depends on $x$. We also note that this bound of $\frac{1}{2}$ applies to \algp (with scale $\sc=1$) as we show in Appendix~\ref{app:1bDrive+}.

\subsection{1b~-~Distributed Mean Estimation}\label{sec:dme_hadamard_subsec}

For an arbitrary $x \in \mathbb R^d$ and $\mathcal R$, 
and in particular for $\mathcal{R}_H$, 
the estimates of \alg cannot be made unbiased.  For example,
for $x=(\frac{2}{3}, \frac{1}{3})^T$ we have that $\sign(\mathcal R_H(x))=(D_{11},D_{11})^T$ and thus $\widehat{\mathcal R_H(x)} = \sx\cdot (D_{11},D_{11})^T$. 
This implies that $\widehat x = \mathcal R_H^{-1}(\widehat{\mathcal R_H(x)}) = \frac{1}{\sqrt 2} \cdot D\cdot H\cdot\sx\cdot (D_{11},D_{11})^T=\sqrt 2 \cdot \sx\cdot D\cdot (D_{11}, 0)^T =$ $\sqrt 2 \cdot S\cdot (D_{11}^2,0)^T=(\sqrt 2 \cdot S ,0)^T$.  \mbox{Therefore, $\mathbb E[\widehat x]\neq x$ regardless of the scale.}

Nevertheless, we next provide evidence for why when the input vector is high dimensional and admits finite moments, a structured random rotation performs similarly to a uniform random rotation, yielding all the appealing aforementioned properties. Indeed, it is a common observation that the distribution of machine learning workloads and, in particular, neural network gradients are governed by such distributions (e.g., lognormal \cite{chmiel2020neural} or normal \cite{banner2018post,ye2020accelerating}).

We seek to show that at high dimensions, the distribution of $\mathcal R_H(x)$ is sufficiently similar to that of $\mathcal R_U(x)$. 
By definition, the distribution of $\mathcal R_U(x)$ is that of a uniformly at random distributed point on a sphere. 
Previous studies of this distribution for high dimensions (e.g., \cite{spruill2007asymptotic,diaconis1987dozen,rachev1991approximate,stam1982limit}) have shown that individual coordinates of $\mathcal R_U(x)$ converge to the same normal distribution and that these coordinates are ``weakly'' dependent in the sense that the joint distribution of every $O(1)$-sized subset of coordinates is similar to that of independent normal variables for large $d$. 

We hereafter assume that $x=(x_1, \ldots, x_d)$, where the $x_i$s are i.i.d. and that $\E[x_j^2]=\sigma^2$ and $\E[\abs{x_j}^3]=\rho < \infty$ for all $j$. We show that $\mathcal R_H (x)_i$ converges to the same normal distribution for all $i$.
Let $F_{i,d}(x)$ be the cumulative distribution function (CDF) of $\frac{1}{\sigma} \cdot \mathcal R_H (x)_i$ and $\Phi$ be the CDF of the standard normal \mbox{distribution.
The following lemma, proven in Appendix~\ref{app:proofOfBerryEsseen}, shows the convergence.}
\begin{restatable}{lemma}{proofOfBerryEsseen}\label{lem:proofOfBerryEsseen}
\mbox{For all $i$, $\mathcal R_H (x)_i$ converges to a normal variable: $\sup_{x\in \mathbb R}\abs{F_{i,d}(x)-\Phi(x)} \le \frac{0.409\cdot \rho}{\sigma^3 \sqrt d}$.}
\end{restatable}

\ifdefined\arxiv
\vbox{
\fi
With this result, we continue to lay out evidence for the ``weak dependency'' among the coordinates. We do so by calculating the moments of their joint distribution in increasing subset sizes showing that these moments converge to those of independent normal variables. Previous work has shown that a structured random rotation on vectors with specific distributions results in ``weakly dependent'' normal variables. This line of research~\cite{rader1969new,thomas2013parallel,herendi1997fast} utilized the Hadamard transform for a different purpose. Their goal was to develop a computationally cheap method to generate independent normally distributed variables from simpler (e.g., uniform) distributions. We apply their analysis to our setting.
\ifdefined\arxiv
}
\fi

\mbox{We partially rely on the following observation that the Hadamard matrix satisfies.}

\begin{observation}\label{obs:Hadamard_rows}(\cite{rader1969new})
The Hadamard product (coordinate-wise product), $H_{\angles{i}}\circ H_{\angles{\ell}}$,
of two rows $H_{\angles{i}},H_{\angles{\ell}}$ in the Hadamard matrix yields another row at the matrix $H_{\angles{i}}\circ H_{\angles{\ell}} = H_{\angles{1+(i-1) \oplus (\ell-1)}}$. Here, $(i-1) \oplus (\ell-1)$ is the bitwise {xor of the $(\log d)$-sized binary representation of $(i-1)$ and $(\ell-1)$}. 
It follows that $\sum_{j=1}^d H_{ij}H_{\ell j}=\sum_{j=1}^d (H_{\angles{i}}\circ H_{\angles{\ell}})_{{j}} = \sum_{j=1}^d H_{1+(i-1) \oplus (\ell-1), j}$~.
\end{observation}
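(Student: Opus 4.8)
The plan is to first establish the standard closed form for the entries of the Walsh--Hadamard matrix. Writing $k=\log d$ and, for a non-negative integer $a<d$, letting $(a_{k-1},\dots,a_0)$ be its $k$-bit binary representation, I claim that $H_{ij}=(-1)^{\langle i-1,\,j-1\rangle_2}$, where $\langle a,b\rangle_2 := \big(\sum_{t=0}^{k-1}a_t b_t\big)\bmod 2$ is the mod-$2$ inner product of the two bit-strings. This follows by induction on $k$ from the recursive definition of $H_{2^k}$: the most significant bits of $i-1$ and $j-1$ determine which of the four blocks the entry $(i,j)$ lies in; the three blocks other than the bottom-right one are (unsigned) copies of $H_{2^{k-1}}$ and correspond to $a_{k-1}b_{k-1}=0$, contributing nothing to the exponent, while the bottom-right block is $-H_{2^{k-1}}$ and corresponds to $a_{k-1}b_{k-1}=1$, contributing exactly the extra sign; the base case $H_1=(1)$ is immediate.

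Given this, the observation is a one-line computation. For rows $i,\ell$ and column $j$,
\[
(H_{\angles{i}}\circ H_{\angles{\ell}})_{j} \;=\; H_{ij}\,H_{\ell j} \;=\;(-1)^{\langle i-1,\,j-1\rangle_2 + \langle \ell-1,\,j-1\rangle_2}\;=\;(-1)^{\langle (i-1)\oplus(\ell-1),\,j-1\rangle_2},
\]
where the last step uses that $\langle\cdot,\cdot\rangle_2$ is $\mathbb{F}_2$-bilinear and bitwise XOR is coordinate-wise addition in $\mathbb{F}_2$, so $\langle a,c\rangle_2+\langle b,c\rangle_2 \equiv \langle a\oplus b,\,c\rangle_2 \pmod 2$. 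By the closed form again, the right-hand side equals $H_{1+(i-1)\oplus(\ell-1),\,j}$, which proves the first assertion $H_{\angles{i}}\circ H_{\angles{\ell}} = H_{\angles{1+(i-1)\oplus(\ell-1)}}$. Summing the resulting coordinate identity over $j\in[d]$ yields $\sum_{j=1}^d H_{ij}H_{\ell j} = \sum_{j=1}^d H_{1+(i-1)\oplus(\ell-1),\,j}$, the second assertion.

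There is no substantial obstacle here; the only thing to watch is the $1$-indexing (hence the pervasive $i-1$ and $\ell-1$ shifts) and the convention that every index in $\{0,\dots,d-1\}$ is zero-padded to exactly $k=\log d$ bits, so that XOR and $\langle\cdot,\cdot\rangle_2$ are well defined and the image index $1+(i-1)\oplus(\ell-1)$ again lies in $\{1,\dots,d\}$. As an alternative one could skip the closed form and prove the observation directly by induction on $k$, case-splitting on the most significant bits of $i-1$ and $\ell-1$ and tracking the sign through the block recursion; this is slightly more tedious but relies on nothing beyond the recursive definition of $H_{2^k}$.
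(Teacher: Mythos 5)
Your proof is correct. The paper states this observation without proof, simply citing Rader (1969), and your argument --- the closed form $H_{ij}=(-1)^{\langle i-1,\,j-1\rangle_2}$ proved by induction on the block recursion, followed by $\mathbb{F}_2$-bilinearity of the mod-$2$ inner product to turn the product of signs into the XOR of row indices --- is the standard derivation of that cited fact, with the $1$-indexing and bit-padding conventions handled correctly.
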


We now analyze the moments of the rotated variables, starting with the following observation. It follows from {the sign-symmetry of $D$ and matches the joint distribution of i.i.d. normal variables.}
\begin{observation}
All odd moments containing $\mathcal R_H(x)$ entries are $0$. That is,\\ 
$\hspace*{1.7cm}\forall q\in\mathbb N, \forall {i_1,\ldots,i_{2q+1}}\in\set{1,\ldots,d}:\mathbb E\brackets{\mathcal R_H(x)_{i_1}\cdot \ldots\cdot \mathcal R_H(x)_{i_{2q+1}}}=0$.
\end{observation}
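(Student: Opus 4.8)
The plan is to exploit the sign-symmetry of the Rademacher diagonal matrix $D$. Fix any $q\in\mathbb N$ and any indices $i_1,\ldots,i_{2q+1}\in\set{1,\ldots,d}$. Writing out each factor in coordinates, $\mathcal R_H(x)_{i_k}=\frac{1}{\sqrt d}\sum_{j=1}^d H_{i_k j}\,x_j\,D_{jj}$, so the product $\prod_{k=1}^{2q+1}\mathcal R_H(x)_{i_k}$ expands as a sum over tuples $(j_1,\ldots,j_{2q+1})\in\set{1,\ldots,d}^{2q+1}$ of terms of the form $d^{-(2q+1)/2}\big(\prod_k H_{i_k j_k}x_{j_k}\big)\big(\prod_k D_{j_k j_k}\big)$. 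Taking the expectation over the (independent) Rademacher variables $D_{11},\ldots,D_{dd}$, each such term contributes $\E\brackets{\prod_{k=1}^{2q+1} D_{j_k j_k}}$ times a deterministic factor (the $x_j$'s and $H$-entries, which are constants).

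The key step is the observation that $\E\brackets{\prod_{k=1}^{2q+1} D_{j_k j_k}}=0$ for \emph{every} tuple $(j_1,\ldots,j_{2q+1})$. Indeed, group the $2q+1$ factors by which index $j$ they reference: $\prod_k D_{j_k j_k}=\prod_{j} D_{jj}^{m_j}$, where $m_j$ is the number of $k$'s with $j_k=j$, and $\sum_j m_j=2q+1$ is odd. Hence at least one $m_j$ is odd. Since $D_{jj}^{m_j}=D_{jj}$ when $m_j$ is odd and $D_{jj}^{m_j}=1$ when $m_j$ is even, by independence of the diagonal entries the expectation factors as a product over distinct indices, and the factor corresponding to an odd-$m_j$ index equals $\E\brackets{D_{jj}}=0$. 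Therefore the whole product vanishes in expectation. Summing over all tuples, $\mathbb E\brackets{\mathcal R_H(x)_{i_1}\cdots \mathcal R_H(x)_{i_{2q+1}}}=0$. (If one prefers a slicker argument: replacing $D$ by $-D$ leaves the distribution of $D$ unchanged but multiplies the product of an odd number of $\mathcal R_H(x)$-entries by $(-1)^{2q+1}=-1$, so the expectation equals its own negative and is thus $0$ — this handles even the case of a random $x$ independent of $D$, as claimed implicitly in the text.)

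There is essentially no obstacle here; the statement is an immediate consequence of the parity argument. The only point requiring a line of care is that the expansion and the interchange of expectation with the finite sum are legitimate, which is automatic since everything in sight is a finite sum of products of bounded (Rademacher) and deterministic quantities — and, in the random-$x$ case, one simply additionally conditions on $x$, uses the symmetrization argument, and takes the outer expectation. I would present the symmetrization version as the main proof since it is the shortest and also covers the i.i.d.-$x$ setting used later in the section.
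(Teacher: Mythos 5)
Your proposal is correct and matches the paper's justification: the Observation is stated there with only the remark that it "follows from the sign-symmetry of $D$," which is exactly your symmetrization argument ($D\mapsto -D$ preserves the distribution but flips the sign of a product of an odd number of entries), with your expansion/parity argument being just a more explicit rendering of the same fact. The extra care you take for random $x$ (condition on $x$, then take the outer expectation) is consistent with how the paper uses the observation in the i.i.d.-coordinate setting.
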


Therefore, we need to examine only even moments. We start with showing that the second moments also match with the distribution of independent normal variables.
\begin{lemma}
For all $i\neq \ell$ it holds that $\E \brackets{(\mathcal R_H \cdot x)_i \cdot (\mathcal R_H \cdot x)_\ell} = 0$, whereas  $\E \brackets{(\mathcal R_H \cdot x)_i^2} = \sigma^2$.
\end{lemma}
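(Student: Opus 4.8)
The plan is to compute these second moments directly from the definition $\mathcal R_H(x)_i = \frac{1}{\sqrt d}\sum_{j=1}^d x_j H_{ij} D_{jj}$, expanding the product, and then using the independence and mean-zero properties of the Rademacher variables $D_{jj}$ together with the structure of the Hadamard matrix. First I would write
\[
\E\brackets{(\mathcal R_H x)_i (\mathcal R_H x)_\ell} = \frac{1}{d}\sum_{j=1}^d\sum_{k=1}^d \E\brackets{x_j x_k H_{ij}H_{\ell k} D_{jj}D_{kk}}.
\]
Since $D$ is independent of $x$ and the $D_{jj}$ are i.i.d. Rademacher, $\E[D_{jj}D_{kk}] = \indicator[j=k]$, so all cross terms $j\neq k$ vanish and only the diagonal $j=k$ survives:
\[
\E\brackets{(\mathcal R_H x)_i (\mathcal R_H x)_\ell} = \frac{1}{d}\sum_{j=1}^d \E[x_j^2] H_{ij}H_{\ell j} = \frac{\sigma^2}{d}\sum_{j=1}^d H_{ij}H_{\ell j},
\]
using $\E[x_j^2]=\sigma^2$ and $D_{jj}^2=1$.

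The remaining step is to evaluate $\sum_{j=1}^d H_{ij}H_{\ell j}$. For $i=\ell$ this is $\sum_{j=1}^d H_{ij}^2 = d$ since $H_{ij}\in\set{+1,-1}$, giving $\E[(\mathcal R_H x)_i^2] = \sigma^2$ as claimed. For $i\neq\ell$, I would invoke Observation~\ref{obs:Hadamard_rows}: $\sum_{j=1}^d H_{ij}H_{\ell j} = \sum_{j=1}^d H_{1+(i-1)\oplus(\ell-1),j}$, and since $i\neq\ell$ the index $1+(i-1)\oplus(\ell-1)$ is not equal to $1$ (the all-ones row), so this is the sum of a non-first row of the Hadamard matrix, which is $0$ by the orthogonality of distinct Hadamard rows (equivalently, orthogonality of $\frac{1}{\sqrt d}H$ to the constant vector). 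Hence $\E[(\mathcal R_H x)_i(\mathcal R_H x)_\ell]=0$.

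There is no real obstacle here — the computation is routine — but the one point that needs care is the justification that $\sum_j H_{1+(i-1)\oplus(\ell-1),j}=0$ when $i\neq\ell$: this relies on the fact that every row of $H_{2^k}$ other than the first has exactly $2^{k-1}$ entries equal to $+1$ and $2^{k-1}$ equal to $-1$, which follows by a trivial induction on $k$ from the recursive block definition (or directly from $H H^T = dI$ applied to rows $1$ and $m\neq 1$). I would state this as an immediate consequence of Observation~\ref{obs:Hadamard_rows} and the orthogonality property already recorded in the definition of the Walsh-Hadamard matrix, and keep the proof to the few lines above.
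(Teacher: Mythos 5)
Your proof is correct and follows essentially the same route as the paper: expand the product, use the i.i.d. sign-symmetric $D_{jj}$ to eliminate the cross terms $j\neq k$, reduce to $\frac{\sigma^2}{d}\sum_{j=1}^d H_{ij}H_{\ell j}$, and finish with Observation~\ref{obs:Hadamard_rows} together with the fact that every non-first Hadamard row sums to zero while the first sums to $d$. Your extra remark justifying the zero row-sum (by orthogonality to the all-ones row, or induction on the recursive construction) is a minor elaboration of a step the paper states without proof, and it is fine.
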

\begin{proof}
{Since $\set{D_{jj}\mid j\in\set{1,\ldots,d}}$ are sign-symmetric and i.i.d., }
$
\E \big[(\mathcal R_H \cdot x)_i \cdot (\mathcal R_H \cdot x)_\ell\big] = \E \big[{\frac{1}{d} (\sum_{j=1}^d x_j H_{ij} D_{jj}) \cdot (\sum_{j=1}^d x_j H_{\ell j} D_{jj})}\big] = \E \brackets{x_j^2} \cdot \frac{1}{d} \cdot \sum_{j=1}^d H_{ij} H_{\ell j}$.
Notice that $\sum_{j=1}^d H_{1j}{\,=\,}d$ \mbox{ and $\sum_{j=1}^d H_{ij}=0$ for all $i>1$. Thus, by \Cref{obs:Hadamard_rows} we get $0$ if $i \neq \ell$ and $\sigma^2$ otherwise.\qedhere} 
\end{proof}
We have established that the coordinates are pairwise uncorrelated. Similar but more involved analysis shows that the same trend continues under the assumption of the existence of $x$'s higher moments. In Appendix \ref{app:Hadamard_4th_moments} we analyze the 4th moments showing that they indeed approach the 4th moments of independent normal variables with a rate of $\frac{1}{d}$; the reader is referred to \cite{rader1969new} for further intuition and higher moments analysis.
We therefore expect that using \alg and \algp with Hadamard transform will yield similar results to that of a uniform random rotation at high dimensions and when the input vectors respect the finite moments assumption.

\begin{figure}[]
\centering
\centerline{\includegraphics[width=\textwidth, trim=70 93 70 0, clip]{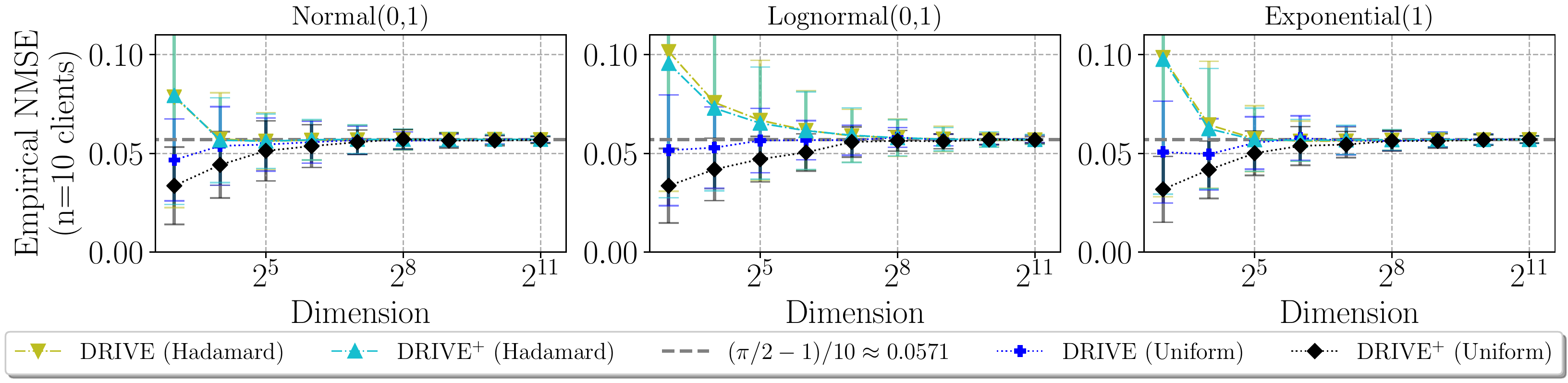}}
\ifdefined\arxiv\else\vspace*{-2mm}\fi
\caption{Distributed mean estimation comparison: each data point is averaged over $10^4$ trials. In each trial, \emph{the same} (randomly sampled) vector is sent by $n=10$ clients.}
\ifdefined\arxiv\else\vspace*{-3mm}\fi
\label{fig:theory_is_cool}
\end{figure}

In addition to the theoretical evidence, in Figure \ref{fig:theory_is_cool}, we show experimental results comparing the measured NMSE for the 1b~-~Distributed Mean Estimation problem with $n=10$ clients (all given the \emph{same} vector so biases do not cancel out) for \alg and \algp using both uniform and structured random rotations over three different distributions. The results indicate that all variants yield similar NMSEs in reasonable dimensions, \mbox{in line with the theoretical guarantee of Corollary \ref{cor:deviation_res_3} and Theorem~\ref{cor:dme}.}

\section{Evaluation}\label{sec:Evaluation}

We evaluate \alg and \algp, comparing them to standard and recent state-of-the-art techniques. We consider classic distributed learning tasks as well as federated learning tasks (e.g., where the data distribution is not i.i.d. and clients may change over time). All the distributed tasks are implemented over PyTorch \cite{NIPS2019_9015} and all the federated tasks are implemented over TensorFlow Federated \cite{tensorflowfed}. 
We focus our comparison on vector quantization algorithms and recent sketching techniques and exclude sparsification methods (e.g.,~\cite{konecy2017federated, WangSLCPW18, NEURIPS2019_d9fbed9d,NEURIPS2018_b440509a}) and methods that involve client-side memory since these can often work in conjunction with our algorithms.

\vbox{
\paragraph{\textbf{{Datasets}.\quad}}\label{p:datasets}
We use MNIST~\cite{lecun1998gradient,lecun2010mnist}, EMNIST~\cite{cohen2017emnist}, CIFAR-10 and CIFAR-100 \cite{krizhevsky2009learning} for image classification tasks; a next-character-prediction task using the Shakespeare dataset \cite{shakespeare}; and a next-word-prediction \mbox{task using the Stack Overflow dataset \cite{stackoverflowdb}. Additional details appear in Appendix~\ref{app:utilizedAssets}.}

\paragraph{\textbf{{Algorithms}.\quad}} 

\addtocounter{footnote}{-1}
\new{Since our focus is on the distributed mean estimation problem and its federated and distributed learning applications, we run \alg and \algp with the unbiased scale quantities.\footnotemark}
}

We compare against several alternative algorithms: (1) \emph{FedAvg} \cite{mcmahan2017communication} that uses the full vectors (i.e., each coordinate is represented using a 32-bit float); (2) Hadamard transform followed by 1-bit stochastic quantization (SQ) \cite{pmlr-v70-suresh17a,konevcny2018randomized}; (3) Kashin’s representation followed by 1-bit stochastic quantization \cite{caldas2018expanding}; (4) \emph{TernGrad}~\cite{wen2017terngrad}, which clips coordinates larger than 2.5 times the standard deviation, then performs 1-bit stochastic quantization on the absolute values and separately sends their signs and the maximum coordinate for scale (we note that TernGrad is a low-bit variant of a well-known algorithm called \emph{QSGD}~\cite{NIPS2017_6c340f25}, and we use TernGrad since we found it to perform better in our experiments);\footnotetext{For \alg the scale is $\sx=\frac{\norm{x}_2^2}{\norm{\mathcal R(x)}_1}$ (see Theorem~\ref{theorem:drive_is_unbised}). For \algp the scale is $\sc=\frac{\norm{x}_2^2}{\norm{c}_2^2}$, where $c\in\set{c_1,c_2}^d$ is the vector indicating the nearest centroid to each coordinate in $\mathcal R(x)$ (see Section~\ref{sec:drive_plus}).}\footnote{\new{When restricted to two quantization levels, TernGrad is identical to QSGD’s max normalization variant with clipping (slightly better due to the ability to represent 0).}} and (5-6)~\emph{Sketched-SGD}~\cite{ivkin2019communication} and \emph{FetchSGD}~\cite{rothchild2020fetchsgd}, which are both count-sketch \cite{charikar2002finding} based algorithms designed for distributed and federated learning, respectively.

    \begin{table}[]
    \resizebox{\textwidth}{!}{%
    \begin{tabular}{r|l|l|l|l|l|l|}
    \cline{2-7}
    \multicolumn{1}{l|}{Dimension ($d$)} & \multicolumn{1}{c|}{\begin{tabular}[c]{@{}c@{}}Hadamard\\ + 1-bit SQ\end{tabular}} & \multicolumn{1}{c|}{\begin{tabular}[c]{@{}c@{}}Kashin\\ + 1-bit SQ\end{tabular}} & \multicolumn{1}{c|}{\begin{tabular}[c]{@{}c@{}}Drive \\ (Uniform)\end{tabular}} & \multicolumn{1}{c|}{\begin{tabular}[c]{@{}c@{}}Drive$^+$\\ (Uniform)\end{tabular}} & \multicolumn{1}{c|}{\begin{tabular}[c]{@{}c@{}}Drive\\  (Hadamard)\end{tabular}} & \multicolumn{1}{c|}{\begin{tabular}[c]{@{}c@{}}Drive$^+$ \\ (Hadamard)\end{tabular}} \\ \hline
    \multicolumn{1}{|r|}{128}            & 0.5308, {\textit{0.34}}     & 0.2550, \textit{2.12}    & 0.0567, \textit{40.4}             & \textbf{0.0547}, \textit{40.7}   &         0.0591,          \textit{0.36}  & 0.0591,  \textit{0.72}    \\ \hline
    \multicolumn{1}{|r|}{8,192}          & 1.3338, {\textit{0.57}}     & 0.3180, \textit{3.42}    & \textbf{0.0571}, \textit{5088}    & \textbf{0.0571}, \textit{5101}   & \textbf{0.0571},        {\textit{0.60}}     & \textbf{0.0571}, \textit{1.06}    \\ \hline
    \multicolumn{1}{|r|}{524,288}        & 2.1456, {\textit{0.79}}     & 0.3178, \textit{4.69}    & ---    & ---   & \textbf{0.0571},         \textit{0.82}      & \textbf{0.0571}, \textit{1.35}    \\ \hline
    \multicolumn{1}{|r|}{33,554,432}     & 2.9332, {\textit{27.1}}     & 0.3179, \textit{332}     & ---    & ---   & \textbf{0.0571},       {\textit{27.2}}    & \textbf{0.0571}, \textit{37.8}     \\ \hline
    \end{tabular}%
    }
    \caption{Empirical NMSE and average per-vector encoding time (in milliseconds, on an RTX 3090 GPU) for distributed mean estimation with $n=10$ clients (same as in Figure~\ref{fig:theory_is_cool}) and Lognormal(0,1) distribution. Each entry is a (NMSE, \textit{time}) tuple and the most accurate result is highlighted in \textbf{bold}.
    }\label{tbl:weAreFast}
    \ifdefined\arxiv\else\vspace*{-2mm}\fi
    \end{table}  

We note that Hadamard with 1-bit stochastic quantization is our most fair comparison, as it uses the same number of bits as \algp (and slightly more than \alg) and has similar computational costs. This contrasts with Kashin's representation, where both the number of bits and the computational costs are higher. 
For example, a standard TensorFlow Federated implementation (e.g., see ``\textsc{class KashinHadamardEncodingStage}'' hyperparameters at \cite{tensorflowfedkashincode}) uses a minimum of $1.17$ bits per coordinate, and three iterations of the algorithm resulting in five Hadamard transforms for each vector.
Also, note that TernGrad uses an extra bit per coordinate for sending the sign. Moreover, \mbox{the clipping performed by TernGrad is a heuristic procedure, which is orthogonal to our work. }

For each task, we use a subset of datasets and the most relevant competition.
Detailed configuration information and additional results appear in Appendix \ref{appendix:additional_simulations}.
We first evaluate the vNMSE-Speed tradeoffs and then proceed to federated and distributed learning experiments.




\paragraph{\textbf{vNMSE-Speed Tradeoff}.\quad}
Appearing in Table~\ref{tbl:weAreFast}, the results show that our algorithms offer the lowest NMSE and that the gap increases with the dimension. 
As expected, \alg and \algp with uniform rotation are more accurate for small dimensions but are significantly slower.
Similarly, \alg is as accurate as \algp, and both are significantly more accurate than Kashin (by a factor of 4.4$\times$-5.5$\times$) and Hadamard (9.3$\times$-51$\times$) with stochastic quantization. 
Additionally, \alg is 5.7$\times$-12$\times$ faster than Kashin and about as fast as Hadamard.
In Appendix~\ref{app:additional_speed_results} we discuss the results
, give the complete experiment specification, and provide measurements on a commodity machine. 

\new{We note that the above techniques, including \alg, are more computationally expensive than linear-time solutions like TernGrad.  Nevertheless, \alg's computational overhead becomes insignificant for modern learning tasks. For example, our measurements suggest that it can take 470~ms for computing the gradient on a ResNet18 architecture (for CIFAR100, batch size = 128, using NVIDIA GeForceGTX 1060 (6GB) GPU) while the encoding of \alg (Hadamard) takes 2.8~ms. That is, the overall computation time is only increased by 0.6\% while the error reduces significantly. Taking the transmission and model update times into consideration would reduce the importance of the compression time further.}

\paragraph{\textbf{{Federated Learning}.\quad}}
We evaluate over four tasks: (1)~EMNIST over customized CNN architecture with two convolutional layers with ${\approx}1.2M$ parameters \cite{caldas2019leaf}; (2)~CIFAR-100 over \mbox{ResNet-18}~\cite{krizhevsky2009learning}
; (3)~a~next-character-prediction task using the Shakespeare dataset \cite{mcmahan2017communication}; (4)~a~next-word-prediction task using the Stack Overflow dataset \cite{reddi2020adaptive}. Both (3) and (4) use LSTM recurrent models \cite{Hochreiter1997LongSM} with
${\approx}820K$ and ${\approx}4M$ parameters,
respectively. We use code, client partitioning, models, hyperparameters, and validation metrics from the federated learning benchmark of \cite{reddi2020adaptive}. 
%

\new{The results are depicted in Figure~\ref{fig:federated_dnn}. 
We observe that in all tasks, \alg and \algp have accuracy that is competitive with that of the baseline, FedAvg. In CIFAR-100, TernGrad and \alg provide the best accuracy. For the other tasks, \alg and \algp have the best accuracy, while the best alternative is either Kashin + 1-bit SQ or TernGrad, depending on the task. Hadamard + 1-bit SQ, which is the most similar to our algorithms (in terms of both bandwidth and compute), provides lower accuracy in all tasks. Additional details and hyperparameter \mbox{configurations are presented in Appendix~\ref{app:fl_details}.}}

\paragraph{\textbf{{Distributed CNN Training}.\quad}}
We evaluate distributed CNN training with 10 clients in two configurations: (1) CIFAR-10 dataset with ResNet-9; (2) CIFAR-100 with ResNet-18 \cite{krizhevsky2009learning, he2016deep}. 
\new{In both tasks, \alg and \algp have similar accuracy to FedAvg, closely followed by Kashin + 1-bit SQ. The other algorithms are less accurate, with Hadamard + 1-bit SQ being better than Sketched-SGD and TernGrad for both tasks.}
Additional details and {hyperparameter configurations are presented in Appendix \ref{app:dl_details}. Figure~\ref{fig:distributed-dnn} depicts the results.}

\begin{figure}[]
\centering
\centerline{\includegraphics[width=\textwidth, trim=10 45 0 0, clip]{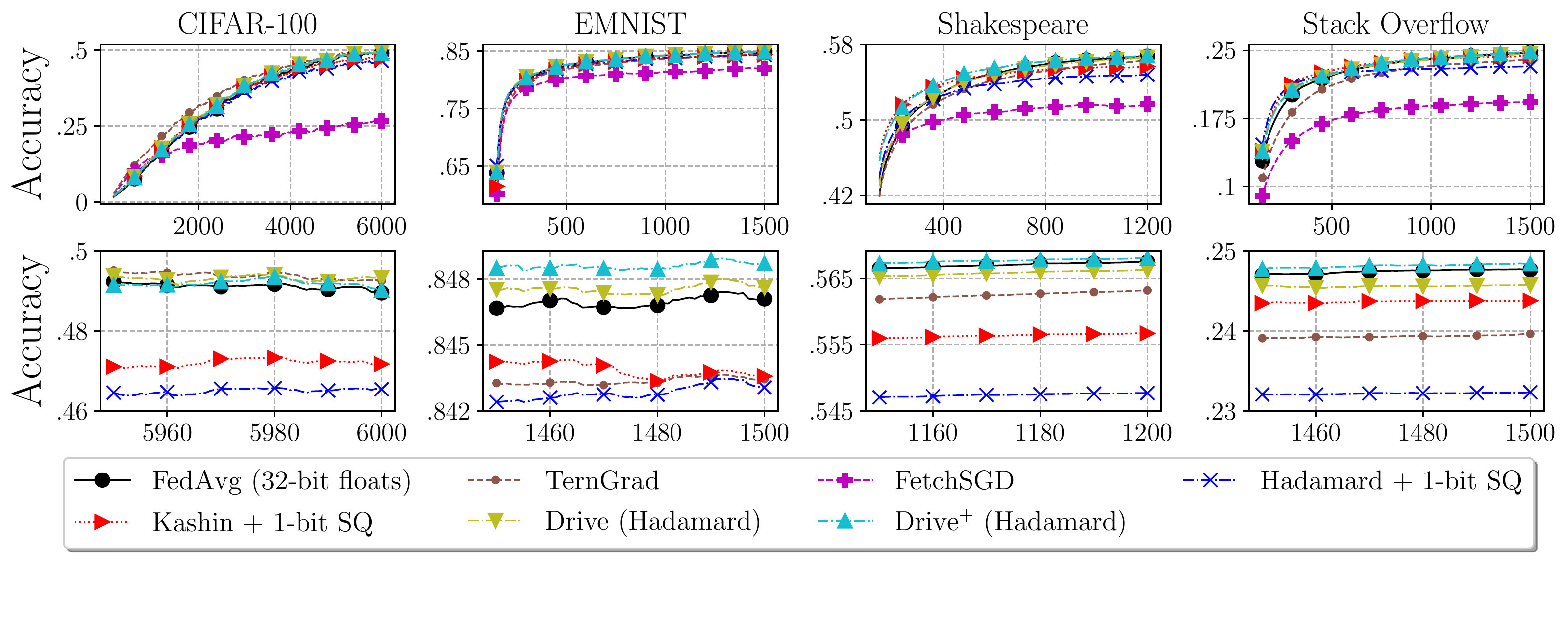}}
\ifdefined\arxiv  
  \vspace*{-2mm}
\fi
\caption{
  Accuracy per round on various federated learning tasks. Smoothing is done using a rolling mean with a window size of 150. The second row zooms-in on the last 50 rounds.}
\label{fig:federated_dnn}
\ifdefined\arxiv  
  \vspace*{-2mm}
\fi
\end{figure}

\begin{figure}[]
\centering
\centerline{\includegraphics[width=\textwidth, trim=0 45 0 0, clip]{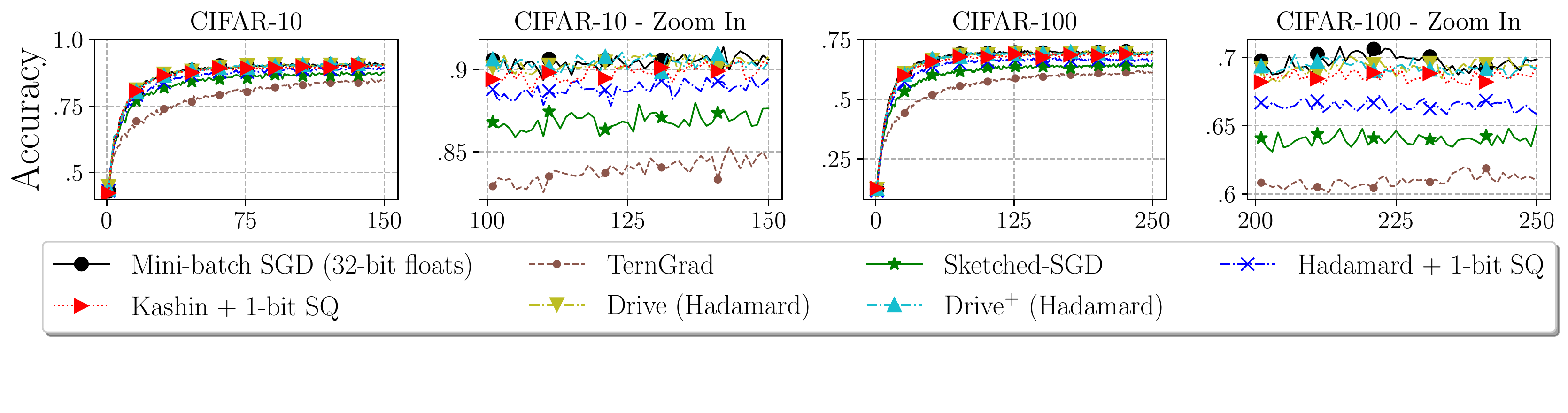}}
\ifdefined\arxiv  
  \vspace*{-2mm}
\fi
\caption{
  \mbox{Accuracy per round on distributed learning tasks, with a zoom-in on the last 50 rounds.}
}
\ifdefined\arxiv  
  \vspace*{-4mm}
\fi
\label{fig:distributed-dnn}
\end{figure}

\paragraph{\textbf{{Evaluation Summary}.\quad}}
Overall, it is evident that \alg and \algp consistently offer markedly favorable results in comparison to the alternatives in our setting. Kashin's representation appears to offer the best competition, albeit at somewhat higher computational complexity and bandwidth requirements. The lesser performance of the sketch-based techniques is attributed to the high noise of the sketch under such a low ($d(1+o(1))$ bits) communication requirement. This is because the \mbox{number of counters they can use is too low, making too many coordinates map into each counter.}
\new{In Appendix~\ref{subsection:power_iteration_appendix}, we also compare \alg and \algp to state of the art techniques over K-Means and Power Iteration tasks for 10, 100, and 1000 clients, yielding similar trends.}


\ifdefined\arxiv
\begin{table}[]
\resizebox{\textwidth}{!}{%
\renewcommand{\arraystretch}{1.6}
\begin{tabular}{c|c|c|c|c|}
\cline{2-5}
\multirow{2}{*}{}                       & \multirow{1}{*}{Scale} & \multicolumn{3}{c|}{Rotation}                     \\ \cline{3-5} Problem
                                        &          $\sx$          & \multicolumn{2}{c|}{Uniform} &   Hadamard                \\ \hline
\multicolumn{1}{|l|}{1b - VE}                  &  $\frac{\norm{\mathcal R(x)}_1}{d}$                  & \multicolumn{2}{c|}{vNMSE $= \parentheses{1 - \frac{2}{\pi}} \parentheses{ {1-\frac{1}{d}}}$} &   vNMSE $\le \frac{1}{2}$                 \\ \hline
\multicolumn{1}{|l|}{\multirow{2}{*}{1b - DME}} & \multirow{2}{*}{$\frac{\norm{x}_2^2}{\norm{\mathcal R(x)}_1}$} & \multicolumn{2}{c|}{$(i)$ $d \ge 2 \implies$ NMSE $\le \frac{1}{n} \cdot 2.92$} & \multirow{2}{*}{---} \\
\multicolumn{1}{|l|}{}                  &                   & \multicolumn{2}{l|}{$(ii)$ 
$d \ge 135 \implies$ NMSE $\le \frac{1}{n} \cdot \left(\frac{\pi}{2} - 1 + \sqrt{\frac{{(6\pi^3-12\pi^2)}\cdot\ln d+1}{d}}\right)$} &                   \\ \hline
\end{tabular}%
}
\caption{\new{Summary of the proven error bounds for \alg.}}
\label{tab:summary_proven_bounds}
\end{table}
\else
\begin{table}[]
\resizebox{\textwidth}{!}{%
\renewcommand{\arraystretch}{1.6}
\begin{tabular}{c|c|c|c|c|}
\cline{2-5}
\multirow{2}{*}{}                       & \multirow{1}{*}{Scale} & \multicolumn{3}{c|}{Rotation}                     \\ \cline{3-5} Problem
                                        &          $\sx$          & \multicolumn{2}{c|}{Uniform} &   Hadamard                \\ \hline
\multicolumn{1}{|l|}{1b - VE}                  &  $\frac{\norm{\mathcal R(x)}_1}{d}$                  & \multicolumn{2}{c|}{vNMSE $= \parentheses{1 - \frac{2}{\pi}} \parentheses{ {1-\frac{1}{d}}}$} &   vNMSE $\le \frac{1}{2}$                 \\ \hline
\multicolumn{1}{|l|}{\multirow{1}{*}{1b - DME}} & \multirow{1}{*}{$\frac{\norm{x}_2^2}{\norm{\mathcal R(x)}_1}$} & \multicolumn{2}{c|}{NMSE $\le \frac{1}{n} \cdot 2.92$;\quad $d \ge 135 \implies$ NMSE $\le \frac{1}{n} \cdot \left(\frac{\pi}{2} - 1 + \sqrt{\frac{{(6\pi^3-12\pi^2)}\cdot\ln d+1}{d}}\right)$} & \multirow{1}{*}{---}\\ \hline
\end{tabular}%
}
\caption{\new{Summary of the proven error bounds for \alg.}}
\label{tab:summary_proven_bounds}
\end{table}
\fi

\section{\new{Discussion}}

\ifdefined\arxiv
In this section, we overview few limitations and future research directions for \alg.
\fi

\paragraph{\textbf{{Proven Error Bounds}.\quad}} We summarize the proven error bounds in Table \ref{tab:summary_proven_bounds}. Since \alg (Hadamard) is generally not unbiased (as discussed in Section \ref{sec:dme_hadamard_subsec}), we cannot establish a formal guarantee for the 1b - DME problem when using Hadamard. It is a challenging research question whether there exists \mbox{other structured rotations with low computational complexity and stronger guarantees.}

\paragraph{\textbf{{Input Distribution Assumption}.\quad}} The distributed mean estimation analysis of our Hadamard-based variants is based on an assumption (Section \ref{sec:dme_hadamard_subsec}) about the vector distributions. While machine learning workloads, and DNN gradients in particular (e.g.,~\cite{chmiel2020neural,banner2018post,ye2020accelerating}), were observed to follow such distributions, this assumption may not hold for other applications.

For such cases, we note that \alg is compatible with the error feedback (EF) mechanism~\cite{seide20141, karimireddy2019error} that ensured convergence and recovery of the convergence rate of non-compressed SGD. Specifically, as evident by Lemma \ref{lem:hadamard_biased}, any scale $\frac{\norm{\mathcal R (x)}_1}{d} \le \sx \le 2 \cdot \frac{\norm{\mathcal R (x)}_1}{d}$ is sufficient to respect the \emph{compressor} (i.e., \emph{bounded variance}) assumption. For completeness, in Appendix \ref{app:exp:ef}, we perform EF experiments comparing \alg and \algp to other compression techniques that use EF.

\paragraph{\textbf{{Varying Communication Budget}.\quad}} 
Unlike some previous works, our algorithms' guarantees with more than one bit per coordinate are not established. It is thus an interesting future work to extend \alg to other communication budgets and understand what are the resulting guarantees. We refer the reader to~\cite{vargaftik2021communication} for initial steps towards that direction.

\paragraph{\textbf{{Entropy Encoding}.\quad}} 

Entropy encoding methods (such as Huffman coding) can further compress vectors of values when the values are not uniformly distributed.  We have compared DRIVE against stochastic quantization methods using entropy encoding for the challenging setting for DRIVE where all vectors are the same (see Table \ref{tbl:weAreFast} for further description).  The results appear in Appendix \ref{subsec:app:ee}, where DRIVE still outperforms these methods.  We also note that, when computation allows and when using \alg with multiple bits per entry, \alg can also be enhanced by entropy encoding techniques.  We describe some initial results for this setting in~\cite{vargaftik2021communication}.


\paragraph{\textbf{{Structured Data}.\quad}} When the data is highly sparse, skewed, or otherwise structured, one can leverage that for compression. We note that some techniques that exploit sparsity or structure can be use in conjunction with our techniques. For example, one may transmit only non-zero entries or Top-K entries \mbox{while compressing these using \alg to reduce communication overhead even further.}

\paragraph{\textbf{{Compatibility With Distributed All-Reduce Techniques}.\quad}} Quantization techniques, including \alg, may introduce overheads in the context of All-Reduce (depending on the network architecture and communication patterns). In particular, if every node in a cluster uses a different rotation, \alg will not allow for efficient in-path aggregation without decoding the vectors. Further, the computational overhead of the receiver increases by a $\log d$ factor as each vector has to be decoded separately before an average can be computed. It is an interesting future direction for \alg to understand how to minimize such potential overheads. For example, one can consider bucketizing co-located workers and apply DRIVE's quantization only for cross-rack traffic.


\section{Conclusions} \label{sec:conclusions}
{
In this paper, we studied the vector and distributed mean estimation problems. These problems are applicable to distributed and federated learning, where clients communicate real-valued vectors (e.g., gradients) to a server for averaging.
To the best of our knowledge, our algorithms are the first with a provable error of $O(\frac{1}{n})$ for the 1b~-~Distributed Mean Estimation problem (i.e., with $d(1+o(1))$ bits). 
As shown in~\cite{iaab006}, any algorithm that uses $O(d)$ shared random bits (e.g., our Hadamard-based variant) has a vNMSE of $\Omega(1)$, i.e., \alg and \algp are asymptotically optimal; additional discussion is given in Appendix~\ref{app:lower_bounds}.
Our experiments, carried over various tasks and datasets, indicate that our algorithms improve over the state of the art. 
All the results presented in this paper are fully reproducible by our source code, 
available at~\cite{openSource}.
}

\ifspacinglines
\fi

\begin{ack}
MM was supported in part by NSF grants CCF-2101140, CCF-2107078, CCF-1563710, and DMS-2023528.  MM and RBB were supported in part by a gift to the Center for Research on Computation and Society at Harvard University. AP was supported in part by the Cyber Security Research Center at Ben-Gurion University of the Negev.
We thank Moshe Gabel, Mahmood Sharif, Yuval Filmus and, the anonymous reviewers for helpful comments and suggestions.
\end{ack}



\bibliography{main}
\ifdefined\arxiv
\bibliographystyle{unsrt}
\else
\bibliographystyle{unsrt}
\fi


 \ifdefined\arxiv
 \else
\section*{Checklist}

\begin{enumerate}

\item For all authors...
\begin{enumerate}
  \item Do the main claims made in the abstract and introduction accurately reflect the paper's contributions and scope?
    \answerYes{}
  \item Did you describe the limitations of your work?
    \answerYes{See Section \ref{sec:conclusions}.}
  \item Did you discuss any potential negative societal impacts of your work?
    \answerNo{We do not believe there is an inherit negative societal impact in this work.}
  \item Have you read the ethics review guidelines and ensured that your paper conforms to them?
    \answerYes{}
\end{enumerate}

\item If you are including theoretical results...
\begin{enumerate}
  \item Did you state the full set of assumptions of all theoretical results?
    \answerYes{}
	\item Did you include complete proofs of all theoretical results?
    \answerYes{}
\end{enumerate}

\item If you ran experiments...
\begin{enumerate}
  \item Did you include the code, data, and instructions needed to reproduce the main experimental results (either in the supplemental material or as a URL)?
    \answerYes{In the Supplemental Material. It will be released as open-source after publication.} 
  \item Did you specify all the training details (e.g., data splits, hyperparameters, how they were chosen)?
    \answerYes{see Section \ref{sec:Evaluation} and Appendix \ref{appendix:additional_simulations}.}
	\item Did you report error bars (e.g., with respect to the random seed after running experiments multiple times)?
    \answerNo{Most experiments can be easily reproduced but require considerable execution time and compute resources. We did include error bars in Figure~\ref{fig:theory_is_cool}, where it was feasible.}
	\item Did you include the total amount of compute and the type of resources used (e.g., type of GPUs, internal cluster, or cloud provider)?
    \answerYes{See Section \ref{sec:Evaluation} and Appendix \ref{appendix:additional_simulations}.}
\end{enumerate}

\item If you are using existing assets (e.g., code, data, models) or curating/releasing new assets...
\begin{enumerate}
  \item If your work uses existing assets, did you cite the creators?
    \answerYes{See Section \ref{sec:Evaluation} and Appendix \ref{appendix:additional_simulations}.} 
  \item Did you mention the license of the assets?
    \answerYes{See Section \ref{sec:Evaluation} and Appendix \ref{appendix:additional_simulations}.} 
  \item Did you include any new assets either in the supplemental material or as a URL?
    \answerYes{We include the source code of our algorithms in the Supplemental Material. It will be released as open-source after publication.}
  \item Did you discuss whether and how consent was obtained from people whose data you're using/curating?
    \answerNA{We used well-known datasets whose creators use either publicly available data or received proper consent.}
  \item Did you discuss whether the data you are using/curating contains personally identifiable information or offensive content?
    \answerNA{We used well-known datasets, which do not contain personally identifiable information or offensive content.}
\end{enumerate}

\item If you used crowdsourcing or conducted research with human subjects...
\begin{enumerate}
  \item Did you include the full text of instructions given to participants and screenshots, if applicable?
    \answerNA{We did not use crowdsourcing or conducted research with human subjects.}
  \item Did you describe any potential participant risks, with links to Institutional Review Board (IRB) approvals, if applicable?
  \answerNA{We did not use crowdsourcing or conducted research with human subjects.}
  \item Did you include the estimated hourly wage paid to participants and the total amount spent on participant compensation?
    \answerNA{We did not use crowdsourcing or conducted research with human subjects.}
\end{enumerate}

\end{enumerate}

 \fi



\appendix

\newpage

\section{Missing Proofs for \alg}

\subsection{Proof of Theorem~\ref{thm:biased_drive_vNMSE}}\label{app:biased_drive_vNMSE}
\biaseddrivevNMSE*
\begin{proof}
Let $T \in \mathcal S^{d-1}$ be uniformly distributed on the unit sphere.
By the definition of $\mathcal R_U$, for \emph{any} $x\in\mathbb R^d$, $\mathcal R_U(\breve x)$ and $T$ follow the same distribution (i.e., $\mathcal R_U(\breve x)$ is also uniformly distributed on the unit sphere). As was established in~\cite{muller1959note}, a uniformly distributed point on a sphere $T \in \mathcal S^{d-1}$ can be obtained by first deriving a random vector $Z = (Z_1,\ldots,Z_d)$ where $Z_i\sim N(0,1), i\in[1,\ldots,d]$ are normally distributed i.i.d. random variables, and then normalizing its norm by $T = \frac{Z}{\norm{Z}_2}$. We also make use of the fact that the norm and direction of a standard multivariate normal vector are independent. That is, $T$ and $\norm{Z}_2$ are independent. Therefore, we have that, 
$\norm{T}_1^2 = \norm{\frac{Z}{\norm{Z}_2}}_1^2 = \frac{1}{\norm{Z}_2^2} \cdot \norm{Z}_1^2 \text{ and thus } \norm{T}_1^2 \cdot \norm{Z}_2^2 = \norm{Z}_1^2.$
Taking expectation and rearranging yields,
{
\begin{equation*}
\begin{aligned}
& \E \brackets{\norm{T}_1^2} = \frac{\E \brackets{\norm{Z}_1^2}}{\E \brackets{\norm{Z}_2^2}} = \frac{\E \brackets{\sum_{i=1}^d Z_i^2 + \sum_{(i,j) \in \set{d \times d}, i \neq j} \abs{Z_i} \cdot \abs{Z_j}}}{d} \\
&= \frac{\sum_{i=1}^d\E \brackets{Z_i^2} + \sum_{(i,j) \in \set{d \times d}, i \neq j} \E \brackets{\abs{Z_i}} \cdot \E \brackets{\abs{Z_j}}}{d} = \frac{d + d\cdot(d{-}1)\cdot(\sqrt{\frac{2}{\pi}})^2}{d} = 1 + (d{-}1)\frac{2}{\pi}~.
\end{aligned}
\end{equation*}
}
Here we employed $\mathbb E \brackets{Z_i^2} = 1$ and that $\abs{Z_i}$ and  $\abs{Z_j}$ are independent half-normal random variables for $i \neq j$ with $\mathbb E \brackets{\abs{Z_i}} =\mathbb E \brackets{\abs{Z_j}} = \sqrt \frac{2}{\pi}$.
Finally, the resulting vNMSE depends only on the dimension and not the specific vector $x$. In particular, $1 - \mathbb E\brackets{\RED_{\mathcal R,x}^d} {=} 1 - \frac{1 + (d-1)\frac{2}{\pi}}{d} {=} \parentheses{1 - \frac{2}{\pi}} \parentheses{ {1-\frac{1}{d}}}$~.
\end{proof}

\subsection{Proof of Theorem~\ref{theorem:drive_is_unbised}}\label{app:drive_is_unbiased}
\driveisunbiased*
\begin{proof}
For any $x \in \mathbb R^d$ denote $x'=(\norm x_2,0,\ldots,0)^T$
and let $R_{x\shortrightarrow x'}\in\mathbb R^{d\times d}$ be a rotation matrix such that $R_{x\shortrightarrow x'}\cdot x = x'$.
Further, denote $R_{x} = R_U R_{x \shortrightarrow x'}^{-1}$.   

Using these definitions and observing that $\norm{\mathcal R_U(x)}_1 = \angles{R_U \cdot x,~\sign(R_U \cdot x)}$ we obtain:
\begin{equation}
\begin{aligned}\label{eq:alg_est_x_hat_unbiased}
    \widehat x \myeq{1} & R_{x\shortrightarrow x'}^{-1} \cdot R_{x\shortrightarrow x'}\cdot\widehat x
    \myeq{2}
    R_{x\shortrightarrow x'}^{-1} \cdot R_{x\shortrightarrow x'} \cdot  R_U^{-1}\cdot\sx\cdot \text{sign}\parentheses{R_U\cdot x}   \\
    \myeq{3}&R_{x\shortrightarrow x'}^{-1} \cdot R_x^{-1}\cdot{\sx\cdot \text{sign}\parentheses{R_x\cdot R_{x\shortrightarrow x'}\cdot x}} 
    \myeq{4} \sx\cdot R_{x\shortrightarrow x'}^{-1} \cdot R_x^{-1}\cdot{\text{sign}\parentheses{R_x\cdot x'}} \\
    \myeq{5}& \frac{\norm{x}_2^2}{\angles{R_U \cdot x,~\sign(R_U \cdot x)}}\cdot  R_{x\shortrightarrow x'}^{-1}\cdot R_x^{-1}\cdot{ \text{sign}\parentheses{R_x\cdot x'}} \\ 
    \myeq{6}& R_{x\shortrightarrow x'}^{-1}\cdot \norm{x}_2^2 \cdot \frac{R_x^{-1}\cdot{ \text{sign}\parentheses{R_x\cdot x'}}}{\angles{R_x \cdot x',~\sign(R_x \cdot x')}}~. 
\end{aligned}    
\end{equation}

Next, let $\matcol{i}$ be a vector containing the values of the $i$'th column of $R_x$. Then, $R_x \cdot x'=\norm x_2 \cdot \matcol{0}$ and $\sign(R_x \cdot x') = \sign(\norm x_2 \cdot \matcol{0}) = \sign(\matcol{0})$. This means that, 
\begin{equation}\label{eq:inner_product_to_norms}
\angles{R_x \cdot x',~\sign(R_x \cdot x')} = \norm{x}_2 \cdot \angles{\matcol{0},~\sign(\matcol{0})} = \norm{x}_2 \cdot \norm{\matcol{0}}_1.   
\end{equation}
Now, since $R_x^{-1}=R_x^T$, we have that, 
\begin{equation}\label{eq:x'est_new}
\begin{aligned}
    R_x^{-1}\cdot{ \sign\parentheses{R_x\cdot x'}} =  \parentheses{\norm{\matcol{0}}_1,\angles{\matcol{1},\sign(\matcol{0})},\ldots,\angles{\matcol{d-1},\sign(\matcol{0})}}^T. 
\end{aligned}   
\end{equation} 
Using Eq. \eqref{eq:inner_product_to_norms} and \eqref{eq:x'est_new} in Eq. \eqref{eq:alg_est_x_hat_unbiased} yields 
\begin{equation}\label{eq:drive_unbiased_before_rotating_back}
\begin{aligned}
\widehat x = R_{x\shortrightarrow x'}^{-1} \cdot \norm{x}_2 \cdot \parentheses{1,\frac{\angles{\matcol{1},\sign(\matcol{0})}}{\norm{\matcol{0}}_1},\ldots,\frac{\angles{\matcol{d-1},\sign(\matcol{0})}}{\norm{\matcol{0}}_1}}^T.
\end{aligned}   
\end{equation} 


Next, by drawing insight from \eqref{eq:drive_unbiased_before_rotating_back}, we show that the estimate of \alg is unbiased by constructing a symmetric algorithm to \alg whose reconstruction's expected value is identical to that of \alg, but with the additional property that the average of both reconstructions is exactly $x$ for each specific run of the algorithm.

In particular, consider an algorithm \algnos$'$ that operates exactly as \alg but, instead of directly using the sampled rotation matrix $R_U = R_x \cdot R_{x \shortrightarrow x'}^{-1}$ it calculates and uses the rotation matrix $R_U' = R_x \cdot I' \cdot R_{x \shortrightarrow x'}^{-1}$ where $I'$ is identical to the $d$-dimensional identity matrix with the exception that $I'_{00}=-1$ instead of $1$. 

Now, since $R_U \sim \mathcal R_U$ and $R_{x\shortrightarrow x'}$ is a fixed rotation matrix, we have that $R_{x} = R_U \cdot R_{x\shortrightarrow x'} \sim \mathcal R_U$. In turn, this also means that $R_{x} \cdot I' \sim \mathcal R_U$ since $I'$ is a fixed rotation matrix.

Consider a run of both algorithm where $\widehat x$ is the reconstruction of \alg for $x$ with a sampled rotation $R_U$ and $\widehat x'$ is the corresponding reconstruction of \algnos$'$ for $x$ with the rotation $R_U'$. 

According to \eqref{eq:drive_unbiased_before_rotating_back} it holds that:
$\widehat x + \widehat x' =R_{x\shortrightarrow x'}^{-1} \cdot \norm{x}_2 \cdot \parentheses{2,0,\ldots,0}^T = 2 \cdot x$. This is because both runs are identical except that the first column of $R_{x}$ (i.e., $\matcol{0}$) and $R_{x} \cdot I'$ have opposite signs. In particular, it holds that $\E \brackets{\widehat x + \widehat x'} = 2 \cdot x$. But, since $R_{x} \sim \mathcal R_U$ and $R_{x} \cdot I' \sim \mathcal R_U$, both algorithms have the same expected value. This yields  $\E \brackets{\widehat x} = \E \brackets{\widehat x'} = x$. This concludes the proof.
\end{proof}

\subsection{Proof of Theorem \ref{thm:vNMSEofunbiaseddrive}}\label{app:anbiased_drive_large_deviation}

The proof of the theorem follows from the following two lemmas. Lemma \ref{thm:anbiased_drive_medium_deviation} relies on Chebyshev's inequality and allows us to bound the vNMSE for all $d$. Lemma \ref{thm:anbiased_drive_large_deviation} gives a sharper bound for large dimensions using the Bernstein's inequality.

\begin{restatable}{lemma}{thmanbiaseddrivemediumdeviation}\label{thm:anbiased_drive_medium_deviation}
 For any constant $k$ and dimension $d \ge 2$ such that $0 < k < \frac{d}{d-1} \cdot \sqrt{\frac{\pi}{\pi - 3}} \cdot \frac{2}{\Beta(\frac{1}{2},\frac{d-1}{2})}$, 
\begin{align*}
    \E \brackets{\frac{d}{\norm{\mathcal R(\breve x)}_1^2}} \le \frac{d}{k^2} + \frac{\parentheses{1-\frac{1}{k^2}}}{\parentheses{- k \cdot  \sqrt{\frac{\pi-3}{\pi d}} + \frac{2\cdot \sqrt d}{(d-1)\cdot \Beta(\frac{1}{2},\frac{d-1}{2})} }^{2}}~.
\end{align*}
\end{restatable}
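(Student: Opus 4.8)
The goal is to bound $\E\brackets{\frac{d}{\norm{\mathcal R(\breve x)}_1^2}}$ from above. Since for the uniform rotation $\mathcal R_U(\breve x)$ is distributed uniformly on $\mathcal S^{d-1}$ regardless of $x$, I may work with a fixed uniformly random unit vector $T$ and study the random variable $L = \norm{T}_1$. The plan is to split the expectation according to whether $L$ is ``small'' (below some threshold depending on the tunable constant $k$) or ``not too small'', using the trivial bound $\norm{T}_1 \ge \norm{T}_2 = 1$ on the bad event and a concentration bound on the good event.

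\textbf{Key steps.} First I would record the moments of $L$: we have $\E[L] = \E\brackets{\norm{T}_1}$, which by the computation sketched for Theorem~\ref{thm:biased_drive_vNMSE} (writing $T = Z/\norm{Z}_2$ with $Z$ standard Gaussian and using independence of direction and norm) can be evaluated in closed form, and likewise $\E[L^2] = \frac{\E\norm{Z}_1^2}{\E\norm{Z}_2^2} = 1 + (d-1)\frac{2}{\pi}$ from the same appendix. This gives $\mathrm{Var}(L) = \E[L^2] - (\E[L])^2$; plugging in $\E[L] = \frac{d-1}{2}\cdot\frac{\Beta(\frac12,\frac{d-1}{2})}{\sqrt d}\cdot\frac{1}{\sqrt d}\cdot\sqrt d$ — more precisely the value proven in Appendix~\ref{app:l1_expected_value}, namely $\E\brackets{\norm{\mathcal R_U(\breve x)}_1}=\frac{(d-1)\Beta(\frac12,\frac{d-1}{2})}{2\sqrt d}$ — one checks $\mathrm{Var}(L) \le \frac{\pi-3}{\pi}$ (this uses $\E[L^2]=1+(d-1)\frac2\pi$ and a lower bound on $(\E[L])^2$; this inequality is where a short calculation is hidden but it is elementary). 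Then, by Chebyshev, for $t>0$,
\[
\Pr\!\left[L \le \E[L] - t\right] \le \frac{\mathrm{Var}(L)}{t^2} \le \frac{\pi-3}{\pi t^2}.
\]
Choosing $t = \E[L]\parentheses{1 - \tfrac{1}{k}\cdot\tfrac{\E[L]}{?}}$ — concretely, pick the threshold $a := \E[L]/k \cdot k$… let me restate: set the threshold so that on the complement event $L \ge \E[L] - t =: m_k$, where $m_k = \frac{(d-1)\Beta(\frac12,\frac{d-1}{2})}{2\sqrt d} - k\sqrt{\frac{\pi-3}{\pi d}}$, and the corresponding ``small'' probability is $\Pr[L \le m_k] \le \frac{1}{k^2}$ (by Chebyshev with $t = k\sqrt{\frac{\pi-3}{\pi d}}$, using $\mathrm{Var}(L)\le\frac{\pi-3}{\pi}$). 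The constraint $0<k<\frac{d}{d-1}\sqrt{\frac{\pi}{\pi-3}}\cdot\frac{2}{\Beta(\frac12,\frac{d-1}{2})}$ is exactly the condition that $m_k > 0$, so that $\frac{d}{L^2}$ on the good event is bounded by $\frac{d}{m_k^2}$. Finally, decompose
\[
\E\!\left[\frac{d}{L^2}\right] = \E\!\left[\frac{d}{L^2}\indicator_{L \le m_k}\right] + \E\!\left[\frac{d}{L^2}\indicator_{L > m_k}\right] \le d\cdot\Pr[L\le m_k] \cdot \frac{1}{?} + \frac{d}{m_k^2}\parentheses{1 - \Pr[L\le m_k]}.
\]
On the bad event I bound $\frac{1}{L^2}\le 1$ since $L=\norm{T}_1\ge\norm{T}_2=1$, so the first term is at most $d\cdot\frac{1}{k^2}$; on the good event I use $\frac{d}{L^2}\le\frac{d}{m_k^2}$ and $\Pr[L>m_k]\le 1$, but to get the stated $\parentheses{1-\frac1{k^2}}$ factor I keep the refinement $\Pr[L>m_k] \le 1 - (1 - \frac{1}{k^2})$… — the clean way is: write $\E[d/L^2] \le \frac{d}{k^2}\cdot 1 \cdot[\text{on bad, prob}\le\frac1{k^2}]$, hmm. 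The intended grouping is: bad event contributes at most $\frac{d}{k^2}$ (prob $\le k^{-2}$, integrand $\le d$), good event contributes at most $\frac{d}{m_k^2}(1-k^{-2})$ (prob $\le 1-k^{-2}$ when the bad probability is taken to be exactly its cap... actually just $\Pr[\text{good}]\le 1$, but using the complementary cap gives the displayed $(1-1/k^2)$). Substituting $m_k = -k\sqrt{\frac{\pi-3}{\pi d}} + \frac{2\sqrt d}{(d-1)\Beta(\frac12,\frac{d-1}{2})}$ into $\frac{d}{m_k^2}$ matches the claimed bound.

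\textbf{Main obstacle.} The delicate point is the variance bound $\mathrm{Var}(L)\le\frac{\pi-3}{\pi}$: one has the exact second moment $\E[L^2]=1+(d-1)\frac2\pi$ but must show $(\E[L])^2 \ge \E[L^2] - \frac{\pi-3}{\pi} = (d-1)\frac2\pi + \frac{3}{\pi}$, i.e., a sufficiently sharp \emph{lower} bound on $\E[L] = \frac{(d-1)\Beta(\frac12,\frac{d-1}{2})}{2\sqrt d}$ uniformly in $d\ge2$. This requires controlling the Beta-function ratio (equivalently, Wallis-type ratios of Gamma functions) — a monotonicity or Gautschi-type inequality argument — and is the one place where care beyond bookkeeping is needed; everything else (Chebyshev, the $L\ge1$ bound, the indicator decomposition, and recognizing the $m_k>0$ condition as the hypothesis on $k$) is routine.
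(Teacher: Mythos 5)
Your plan is the paper's own proof: control the variance of the $L_1$ norm of a uniform point $T$ on the sphere, use Chebyshev to bound the probability that it drops $k$ "standard deviations" below its mean (the hypothesis on $k$ is exactly positivity of that threshold), bound $\frac{d}{\norm{T}_1^2}$ by $d$ on the bad event via $\norm{T}_1\ge\norm{T}_2=1$, and by the reciprocal squared threshold on the good event. However, as written your constants do not close, because you mix the normalized and unnormalized variables. Concretely: (i) Appendix~\ref{app:l1_expected_value} proves $\E\brackets{\norm{T}_1}=\frac{2d}{(d-1)\cdot\Beta(\frac{1}{2},\frac{d-1}{2})}$, not $\frac{(d-1)\Beta(\frac{1}{2},\frac{d-1}{2})}{2\sqrt d}$ as you quote; the quantity that appears in the lemma's denominator is the normalized mean $\E\brackets{\norm{T}_1/\sqrt d}=\frac{2\sqrt d}{(d-1)\cdot\Beta(\frac{1}{2},\frac{d-1}{2})}$, which you do (correctly) use only in the final substitution. (ii) Chebyshev with deviation $t=k\sqrt{\frac{\pi-3}{\pi d}}$ must be applied to $\norm{T}_1/\sqrt d$, whose variance is at most $\frac{\pi-3}{\pi d}$; pairing that $t$ with $\mathrm{Var}(\norm{T}_1)\le\frac{\pi-3}{\pi}$, as you state, yields only $\frac{d}{k^2}$ rather than $\frac{1}{k^2}$. (iii) On the good event, $\frac{d}{\norm{T}_1^2}=\parentheses{\norm{T}_1/\sqrt d}^{-2}\le\frac{1}{m_k^2}$ with your normalized $m_k$, not $\frac{d}{m_k^2}$; only the former matches the stated bound. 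Working consistently with $\norm{T}_1/\sqrt d$ repairs all three points and reproduces the paper's argument verbatim, including your $(1-\frac{1}{k^2})$ grouping worry, which is harmless: if $d\ge\frac{1}{m_k^2}$ the bound is monotone in the bad-event probability, and otherwise $\frac{d}{\norm{T}_1^2}\le d$ already falls below the claimed expression.

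On your "main obstacle": you are right that the only nontrivial estimate is the variance bound, which amounts to the Wallis-type inequality $\parentheses{\E\brackets{\norm{T}_1}}^2=\parentheses{\frac{2d}{(d-1)\Beta(\frac{1}{2},\frac{d-1}{2})}}^2\ge\frac{2d+1}{\pi}$. The paper asserts the inequality $\mathrm{Var}\parentheses{\norm{T}_1/\sqrt d}\le\frac{\pi-3}{\pi d}$ in a single line without justification, so supplying a Gautschi/Wallis-ratio argument there would make your write-up more complete than the paper's, not less; but it does need to be supplied (or cited) for the proof to be self-contained.
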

\begin{proof}
By Lemma \ref{lem:l1_expected_value} in Appendix \ref{app:l1_expected_value} we have that $\E\brackets{\norm{R_U \cdot \breve x}_1} = \frac{2d}{(d-1)\cdot \Beta(\frac{1}{2},\frac{d-1}{2})}$. Thus, 
\begin{align*}
  \mbox{Var}\parentheses{\frac{\norm{R_U \cdot \breve x}_1}{\sqrt d}} &= \E\brackets{\frac{\norm{R_U \cdot \breve x}_1^2}{d}} - \parentheses{\E \brackets{\frac{\norm{R_U \cdot \breve x}_1}{\sqrt d}}}^2 \\&= \frac{2}{\pi} + \frac{1-\frac{2}{\pi}}{d} - 
  \parentheses{\frac{2\cdot \sqrt d}{(d-1)\cdot \Beta(\frac{1}{2},\frac{d-1}{2})}}^2
  \le \frac{\pi-3}{\pi d}. 
\end{align*}
According to Chebyshev's inequality,
$\Pr\brackets{ - \frac{\norm{R_U \cdot \breve x}_1}{\sqrt d} + \frac{2\cdot \sqrt d}{(d-1)\cdot \Beta(\frac{1}{2},\frac{d-1}{2})} \ge k \cdot \sqrt{\frac{\pi-3}{\pi d}}} \le \frac{1}{k^2}$ which is equivalent to 
$\Pr\brackets{\frac{\norm{R_U \cdot \breve x}_1}{\sqrt d} \le - k \cdot  \sqrt{\frac{\pi-3}{\pi d}} + \frac{2\cdot \sqrt d}{(d-1)\cdot \Beta(\frac{1}{2},\frac{d-1}{2})}} \le \frac{1}{k^2}$. Now, for any $k$ that respects $-k \cdot \sqrt{\frac{\pi-3}{\pi d}} + \frac{2\cdot \sqrt d}{(d-1)\cdot \Beta(\frac{1}{2},\frac{d-1}{2})} > 0$ it holds that
$\Pr\brackets{\frac{d}{\norm{R_U \cdot \breve x}_1^2} \ge \frac{1}{  \parentheses{- k \cdot  \sqrt{\frac{\pi-3}{\pi d}} + \frac{2\cdot \sqrt d}{(d-1)\cdot \Beta(\frac{1}{2},\frac{d-1}{2})} }^2 }} \le \frac{1}{k^2}$. 
In addition, observe that $\Pr\brackets{\frac{d}{\norm{R_U \cdot \breve x}_1^2} >  d} = 0$. This means that,
\begin{align*}
\E \brackets{\frac{d}{\norm{R_U \cdot \breve x}_1^2}} \le \frac{d}{k^2} + \parentheses{1-\frac{1}{k^2}} \cdot \parentheses{- k \cdot  \sqrt{\frac{\pi-3}{\pi d}} + \frac{2\cdot \sqrt d}{(d-1)\cdot \Beta(\frac{1}{2},\frac{d-1}{2})} }^{-2}~.
\end{align*}
This concludes the proof.
\end{proof}

\begin{restatable}{lemma}{thmanbiaseddrivelargedeviation}\label{thm:anbiased_drive_large_deviation}
For any vector $x \in \mathbb R^d$, a constant $0<\epsilon<1$, and any dimension $d\ge2$, it holds that
\begin{align*}
 \E \brackets{\frac{d}{\norm{\mathcal R(\breve x)}_1^2}} \le 
\frac{\pi}{2}\cdot(1+\epsilon) + d \cdot exp \parentheses{-d \cdot \frac{\epsilon^2}{16(\pi-2)}}~.   
\end{align*}
\end{restatable}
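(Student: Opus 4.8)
The plan is to split the expectation according to whether $\norm{\mathcal R_U(\breve x)}_1$ is close to its typical value $\sqrt{2d/\pi}$, and to control the atypical contribution by a large‑deviation estimate. Recall from the proof of Theorem~\ref{thm:biased_drive_vNMSE} that $\mathcal R_U(\breve x)$ has the same distribution as $Z/\norm{Z}_2$ with $Z=(Z_1,\dots,Z_d)$ having i.i.d.\ standard normal entries; hence $\tfrac{d}{\norm{\mathcal R_U(\breve x)}_1^2}$ has the same distribution as $\tfrac{d\,\norm{Z}_2^2}{\norm{Z}_1^2}$. Fix $\tau := d\sqrt{\tfrac{2}{\pi(1+\epsilon)}}$. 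On $\set{\norm{Z}_1 \ge \tau}$ I would bound $\tfrac{d\,\norm{Z}_2^2}{\norm{Z}_1^2}\le \tfrac{d}{\tau^2}\norm{Z}_2^2$, and on $\set{\norm{Z}_1 < \tau}$ use the elementary fact $\norm{Z}_2\le\norm{Z}_1$ to bound $\tfrac{d\,\norm{Z}_2^2}{\norm{Z}_1^2}\le d$. Since $\E[\norm{Z}_2^2]=d$ and $\tfrac{d^2}{\tau^2}=\tfrac{\pi}{2}(1+\epsilon)$, this already gives $\E\big[\tfrac{d}{\norm{\mathcal R_U(\breve x)}_1^2}\big]\le \tfrac{\pi}{2}(1+\epsilon)+d\cdot\Pr[\norm{Z}_1<\tau]$, so it remains to prove $\Pr[\norm{Z}_1<\tau]\le \exp\!\big(-\tfrac{d\epsilon^2}{16(\pi-2)}\big)$.

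For this, first note $\E\norm{Z}_1=d\,\E\abs{Z_1}=d\sqrt{2/\pi}$, and that the elementary inequality $(1+\epsilon)^{-1/2}\le 1-\tfrac{\epsilon}{4}$ holds for $\epsilon\in(0,1)$; therefore $\tau\le(1-\tfrac{\epsilon}{4})\,\E\norm{Z}_1$ and $\Pr[\norm{Z}_1<\tau]\le\Pr\big[\norm{Z}_1\le(1-\tfrac{\epsilon}{4})\E\norm{Z}_1\big]$. Now $\norm{Z}_1=\sum_{i=1}^d\abs{Z_i}$ is a sum of i.i.d.\ half‑normal variables, each with mean $\mu=\sqrt{2/\pi}$ and variance $\sigma^2=1-\tfrac{2}{\pi}=\tfrac{\pi-2}{\pi}$, and each $\abs{Z_i}$ is bounded below by $0$, so the centered summands $\mu-\abs{Z_i}$ are bounded above by $\mu$ — exactly the one‑sided boundedness condition needed for Bernstein's inequality. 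Applying Bernstein's inequality to the lower tail yields $\Pr[\norm{Z}_1\le(1-\delta)\E\norm{Z}_1]\le\exp\!\big(-\tfrac{d\,\delta^2\mu^2}{2(\sigma^2+\delta\mu^2/3)}\big)$. Substituting $\delta=\tfrac{\epsilon}{4}$ together with $\mu^2=\tfrac{2}{\pi}$ and $\sigma^2=\tfrac{\pi-2}{\pi}$ (using the convenient identity $\mu^2+\sigma^2=1$) and bounding the $\delta\mu^2/3$ correction term, the exponent simplifies to $-\tfrac{d\epsilon^2}{16(\pi-2)}$, after one routine numerical check over $\epsilon\in(0,1)$; combined with the split above, this proves the lemma.

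I expect the main obstacle to be pinning down the constant $16(\pi-2)$ exactly, rather than a slightly weaker one: this relies on the half‑normal's \emph{exact} variance $\tfrac{\pi-2}{\pi}$ appearing in the Bernstein exponent — a crude sub‑Gaussian proxy of $1$, which one gets essentially for free, would only yield $16\pi$ in the denominator, which is too large — and on the delicate interplay between the $(1+\epsilon)^{-1/2}\le 1-\tfrac{\epsilon}{4}$ slack and the $\delta\mu^2/3$ Bernstein correction. Verifying $\tfrac{\delta^2\mu^2}{2(\sigma^2+\delta\mu^2/3)}\ge\tfrac{\epsilon^2}{16(\pi-2)}$ with $\delta=1-(1+\epsilon)^{-1/2}$ over $\epsilon\in(0,1)$ closes this gap; the margin there is tightest near $\epsilon=1$ and comfortable otherwise.
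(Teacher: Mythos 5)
Your proof is correct and takes essentially the same route as the paper's: both represent the quantity as $\tfrac{d\,\norm{Z}_2^2}{\norm{Z}_1^2}$ for i.i.d.\ normals, split the expectation at a threshold on $\norm{Z}_1$, bound the good event by $\tfrac{d}{\tau^2}\,\E\brackets{\norm{Z}_2^2}$ and the bad event by $d\cdot\Pr\brackets{\norm{Z}_1<\tau}$ via $\norm{Z}_2\le\norm{Z}_1$, and control the tail with Bernstein's inequality applied to the centered half-normal sum (the paper's threshold $\beta d$ with $\beta=\sqrt{(2-\epsilon)/\pi}$ followed by $\tfrac{1}{1-\epsilon/2}\le 1+\epsilon$ is only a cosmetic difference from your $\tau=d\sqrt{2/(\pi(1+\epsilon))}$). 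One small caution: substituting $\delta=\epsilon/4$ alone yields the exponent $-\tfrac{d\epsilon^2}{16(\pi-2+\epsilon/6)}$, which is slightly weaker than claimed, so the closing verification really must be done with the exact $\delta=1-(1+\epsilon)^{-1/2}$ as you state in your last paragraph—and that inequality does hold throughout $(0,1)$, with the tightest margin near $\epsilon=1$.
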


\begin{proof}

By the definition of a uniform random rotation, it holds that $\frac{d}{\norm{\mathcal R(\breve x)}_1^2}$ follows the same distribution as $\frac{d \cdot \norm{Z}_2^2}{\norm{Z}_1^2}$ where $Z=(Z_1, \ldots, Z_d)$ and $Z_i \sim \mathcal N(0,1)$ are i.i.d. normal variables. Consider the two complementary events,
$A_- = {\norm{Z}_1 \le \beta \cdot d}$ and $A_+ = {\norm{Z}_1 > \beta \cdot d}$. Then,
\begin{equation*}
\begin{aligned}
\E \brackets{\frac{d \cdot \norm{Z}_2^2}{\norm{Z}_1^2}} = \E \brackets{\mathbbm{1}_{A_+} \cdot \frac{d \cdot \norm{Z}_2^2}{\norm{Z}_1^2}} + \E \brackets{\mathbbm{1}_{A_-} \cdot \frac{d \cdot \norm{Z}_2^2}{\norm{Z}_1^2}}.
\end{aligned}
\end{equation*}
We now treat each of the terms separately.
For the first term we have,
\begin{equation*}
\E \brackets{\mathbbm{1}_{A_+} \cdot \frac{d \cdot \norm{Z}_2^2}{\norm{Z}_1^2}} \le \frac{d}{(d \cdot \beta)^2} \cdot \E \brackets{\norm{Z}_2^2} = \frac{1}{\beta^2}. 
\end{equation*}
For the second term we have,
\begin{equation*}
\E \brackets{\mathbbm{1}_{A_-} \cdot \frac{d \cdot \norm{Z}_2^2}{\norm{Z}_1^2}} \le d \cdot \E \brackets{\mathbbm{1}_{A_-}} = d \cdot \Pr\brackets{\norm{Z}_1 \le \beta \cdot d}. 
\end{equation*}
Now, our goal is to bound the term $\Pr\brackets{\norm{Z}_1 \le \beta \cdot d}$. Denote $Y = (Y_1, \ldots, Y_d)$, where $Y_i = \sqrt{\frac{2}{\pi}} - \abs{Z_i}$. Then, it holds that,
\begin{equation*}
\Pr\brackets{\norm{Z}_1 \le \beta  d} = \Pr\brackets{\norm{Z}_1 - d  \sqrt{\frac{2}{\pi}} \le \beta  d- d  \sqrt{\frac{2}{\pi}}}  = \Pr\brackets{\sum_{i=1}^d Y_i \ge d  \parentheses{\sqrt{\frac{2}{\pi}} - \beta}}.
\end{equation*}
Further, for all $i$ we have that $\E \brackets{Y_i} = 0$, $\E \brackets{Y_i}^2 = 1-\frac{2}{\pi}$ and $Y_i \le \sqrt{\frac{2}{\pi}}$. According to Bernstein’s Inequality (from \cite{bennett1962probability}, or see \cite[Theorem 3.1]{maurer2003bound} for a simplified notation) this means that,
\begin{equation*}
\Pr\brackets{\sum_{i=1}^d Y_i \ge d  \parentheses{\sqrt{\frac{2}{\pi}} - \beta}} \le exp \parentheses{\frac{-d^2 \cdot (\sqrt{\frac{2}{\pi}} - \beta)^2}{2d \cdot (1-\frac{2}{\pi}) + d \cdot \frac{2}{3} \cdot  \sqrt{\frac{2}{\pi}} \cdot \parentheses{\sqrt{\frac{2}{\pi}} - \beta}}}. 
\end{equation*}
Setting $\beta = \sqrt{\frac{2 - \epsilon}{\pi}}$ for some $\epsilon>0$ yields,
\begin{equation*}
\Pr\brackets{\sum_{i=1}^d Y_i \ge d  \parentheses{\sqrt{\frac{2}{\pi}} - \beta}} \le exp \parentheses{-d \cdot \frac{3\cdot \parentheses{1-\sqrt{1-\frac{\epsilon}{2}}}^2}{3\pi - 4 - 2\sqrt{1-\frac{\epsilon}{2}}}}. 
\end{equation*}
Next, we obtain,
\begin{equation*}
\begin{aligned}
\E \brackets{\frac{d \cdot \norm{Z}_2^2}{\norm{Z}_1^2}} \le 
\frac{\pi}{2}\cdot\frac{1}{1-\frac{\epsilon}{2}} + d \cdot exp \parentheses{-d \cdot \frac{3\cdot \parentheses{1-\sqrt{1-\frac{\epsilon}{2}}}^2}{3\pi - 4 - 2\sqrt{1-\frac{\epsilon}{2}}}}.
\end{aligned}
\end{equation*}

We next use Taylor expansions to simplify the expression:
\begin{align*}
\E \brackets{\frac{d \cdot \norm{Z}_2^2}{\norm{Z}_1^2}} &\le 
\frac{\pi}{2}\cdot\frac{1}{1-\frac{\epsilon}{2}} + d \cdot exp \parentheses{-d \cdot \frac{3\cdot \parentheses{1-\sqrt{1-\frac{\epsilon}{2}}}^2}{3\pi - 4 - 2\sqrt{1-\frac{\epsilon}{2}}}}\\
&\le 
\frac{\pi}{2}\cdot\frac{1}{1-\frac{\epsilon}{2}} + d \cdot exp \parentheses{-d \cdot \frac{\parentheses{\frac{\epsilon}{2}}^2}{4(\pi-2)}} \\&\le 
\frac{\pi}{2}\cdot\parentheses{1+2\cdot\frac{\epsilon}{2}} + d \cdot exp \parentheses{-d \cdot \frac{\parentheses{\frac{\epsilon}{2}}^2}{4(\pi-2)}}
\\&=
\frac{\pi}{2}\cdot(1+\epsilon) + d \cdot exp \parentheses{-d \cdot \frac{\epsilon^2}{16(\pi-2)}}~.
\end{align*}
This concludes the proof.
\end{proof}

Finally, we prove the theorem, which we restate here.

\vNMSEofunbiaseddrive*

\begin{proof}
Recall that, using Lemma~\ref{cor:alg1_unbiased_vNMSE}, the vNMSE is $\E \brackets{\frac{d}{\norm{\mathcal R(\breve x)}_1^2}}-1$.
Therefore, the first part of the theorem follows from Lemma~\ref{thm:anbiased_drive_medium_deviation} with $k=\sqrt d$, which satisfies $\sqrt{k} < \frac{d}{d-1} \cdot \sqrt{\frac{\pi}{\pi - 3}} \cdot \frac{2}{\Beta(\frac{1}{2},\frac{d-1}{2})}$, as needed.
The second part of the theorem follows from setting $\epsilon = \sqrt{\frac{(24\pi-48)\ln d}{ d}}$ in Lemma~\ref{thm:anbiased_drive_large_deviation}. Notice that for $d\ge135$, we get $\epsilon<1$, as required.
Then:
\begin{align*}
\E \brackets{\frac{d}{\norm{\mathcal R(\breve x)}_1^2}} \le 
\frac{\pi}{2}\cdot(1+\epsilon) + d \cdot exp \parentheses{-1.5\ln d}
= 
\frac{\pi}{2} + \sqrt{\frac{{(6\pi^3-12\pi^2)}\cdot\ln d+1}{d}}
~.
\end{align*}
This concludes the proof.
\end{proof}

\subsection{Constant Scale for Unbiased Estimates in \alg with Uniform Random Rotations}\label{app:l1_expected_value}
In this appendix, we prove that it is possible to further lower the vNMSE while remaining unbiased by (deterministically) using $\sx=\frac{\norm{x}_2^2}{\mathbb E\brackets{\norm{\mathcal R_U(x)}_1}}=\frac{\norm{x}_2}{\mathbb E\brackets{\norm{T}_1}}$ where $T \in \mathcal S^{d-1}$ is uniformly at random distributed on the unit sphere \new{(observe that the unbiasedness proof in Theorem \ref{app:drive_is_unbiased} holds for this constant scale).} This is because $\frac{d}{\parentheses{\mathbb E\brackets{\norm{T}_1}}^2} - 1
\le \mathbb E\brackets{{\frac{d}{\norm{T}_1^2}}} {-} 1$.
This gives a provable $\frac{\pi}{2}-1$ vNMSE bound for any $d$. 
We start the analysis by proving two auxiliary lemmas. A generalization of these results is proved in~\cite[Eq. 3.3]{szablowski1998uniform}. \mbox{For completeness, we provide simplified proofs.}
\begin{lemma}\label{lem:rotated_coordinate_pdf}
Let $T=(T_1,\ldots,T_d)\in \mathcal S^{d-1}$ be a point on the unit sphere, drawn uniformly at random. Then, the PDF of $|T_i|$ is given by,
\begin{align*}
f_{|T_i|}(t) = \frac{2\cdot (1-t^2)^{\frac{d-1}{2}-1}}{\Beta(\frac{1}{2},\frac{d-1}{2})}.
\end{align*}
\end{lemma}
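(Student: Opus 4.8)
The plan is to obtain the density of a single coordinate (in absolute value) of a uniformly random point on $\mathcal S^{d-1}$ via the familiar Gaussian representation already used in the proof of Theorem~\ref{thm:biased_drive_vNMSE}. Recall that if $Z=(Z_1,\ldots,Z_d)$ with $Z_i\sim N(0,1)$ i.i.d., then $T=Z/\norm{Z}_2$ is uniform on $\mathcal S^{d-1}$. Hence $T_1^2 = Z_1^2/(Z_1^2+\cdots+Z_d^2)$, which is a ratio of the form $U/(U+V)$ where $U=Z_1^2\sim\chi^2_1$ and $V=Z_2^2+\cdots+Z_d^2\sim\chi^2_{d-1}$ are independent. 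By the standard Beta--Gamma calculus, $U/(U+V)\sim\mathrm{Beta}\!\left(\tfrac12,\tfrac{d-1}{2}\right)$, so $T_1^2$ has density $\frac{1}{\Beta(\frac12,\frac{d-1}{2})}\, s^{-1/2}(1-s)^{\frac{d-1}{2}-1}$ on $s\in(0,1)$.

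From here it is a one-line change of variables: with $t=|T_1|$ we have $s=t^2$, $ds = 2t\,dt$, so
\begin{align*}
f_{|T_1|}(t) = f_{T_1^2}(t^2)\cdot 2t = \frac{1}{\Beta(\frac12,\frac{d-1}{2})}\,(t^2)^{-1/2}(1-t^2)^{\frac{d-1}{2}-1}\cdot 2t = \frac{2\,(1-t^2)^{\frac{d-1}{2}-1}}{\Beta(\frac12,\frac{d-1}{2})},
\end{align*}
valid for $t\in(0,1)$, which is exactly the claimed expression. By symmetry of the coordinates this holds for every $|T_i|$.

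Alternatively, if one prefers to avoid invoking the Beta--Gamma fact as a black box, the density can be derived directly from the surface-area parametrization of the sphere: the ``slice'' of $\mathcal S^{d-1}$ at fixed first coordinate $T_1=u$ is a scaled copy of $\mathcal S^{d-2}$ of radius $\sqrt{1-u^2}$, and accounting for the arclength element $du/\sqrt{1-u^2}$ along the generating curve gives a marginal density of $T_1$ proportional to $(1-u^2)^{\frac{d-2}{2}-\frac12} = (1-u^2)^{\frac{d-3}{2}}=(1-u^2)^{\frac{d-1}{2}-1}$; folding to $|T_1|$ doubles the constant, and the normalizing constant is identified as $1/\Beta(\frac12,\frac{d-1}{2})$ by recognizing $\int_0^1 (1-u^2)^{\frac{d-1}{2}-1}du = \tfrac12\Beta(\tfrac12,\tfrac{d-1}{2})$ after substituting $w=u^2$. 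The main (minor) obstacle in either route is just bookkeeping the normalization constant correctly; the Gaussian-ratio argument is cleanest because the normalization comes for free from the known Beta density. I would present the Gaussian version as the primary proof.
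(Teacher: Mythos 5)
Your proof is correct and follows essentially the same route as the paper: both start from the Gaussian representation $T = Z/\norm{Z}_2$ and reduce to the distribution of a ratio of independent $\chi^2$ variables. The only cosmetic difference is that you invoke the Beta law of $Z_1^2/\norm{Z}_2^2$ directly and change variables on the density, whereas the paper identifies $(d-1)Z_i^2/(\norm{Z}_2^2-Z_i^2)$ as $F_{1,d-1}$-distributed and differentiates its regularized-incomplete-Beta CDF.
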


\begin{proof}

As was established in~\cite{muller1959note}, a uniformly random point on a sphere $T$ can be obtained by first deriving a random vector $Z = (Z_1,\ldots,Z_d)$ where $Z_i\sim N(0,1), i\in[1,\ldots,d]$ are normally distributed i.i.d. random variables, and then normalizing its norm by $T = \frac{Z}{\norm{Z}_2}$.

Therefore, the i'th coordinate of $T$ is given as $T_i=\frac{Z_i}{\norm{Z}_2}$.
Since the coordinate distribution is sign-symmetric, we focus on its distribution in the interval $[0, 1]$. Observe that for $T_i\in[0,1]:$
\begin{equation*}
\begin{aligned}
\Pr\brackets{|T_i|\le t} &= \Pr\brackets{T_i^2 \le t^2} =  \Pr\brackets{\frac{Z_i^2}{\norm{Z}_2^2}\le t^2} =
\Pr\brackets{\frac{Z_i^2}{\norm{Z}_2^2-Z_i^2}\le t^2\cdot \frac{\norm{Z}_2^2}{\norm{Z}_2^2-Z_i^2}} \\
&=
\Pr\brackets{\frac{Z_i^2}{\norm{Z}_2^2-Z_i^2}\le t^2\cdot\parentheses{1+\frac{Z_i^2}{\norm{Z}_2^2-Z_i^2}}} =
\Pr\brackets{\frac{Z_i^2}{\norm{Z}_2^2-Z_i^2}\le \frac{t^2}{1-t^2}}. 
\end{aligned}    
\end{equation*}
Therefore
$\Pr\brackets{|T_i|\le t} = \Pr\brackets{\frac{(d-1)Z_i^2}{\norm{Z}_2^2-Z_i^2}\le \frac{(d-1)t^2}{1-t^2}}$. Notice that $Z_i^2\sim\chi^2(1)$ and $(\norm{Z}_2^2-Z_i^2)\sim\chi^2(d-1)$ are independent $\chi^2$ random variables. 
Now denote $Y\triangleq\frac{(d-1)Z_i^2}{\norm{Z}_2-Z_i^2}$. Observe that $Y$ follows a $F_{1,d-1}$ distribution. Therefore, the CDF of $Y$ is then given by the following expression:
$\Pr\brackets{Y \le y}=I_{\frac{y}{y+d-1}}(\frac{1}{2},\frac{d-1}{2})$, where $I$ is the regularized incomplete Beta function. Substituting $y=\frac{(d-1)t^2}{1-t^2}$ yields 
$\Pr\brackets{|T_i|\le t}= I_{t^2}(\frac{1}{2},\frac{d-1}{2})$. 
In particular, taking the derivative yields 
$f_{|T_i|}(t) = \frac{2\cdot (1-t^2)^{\frac{d-1}{2}-1}}{\Beta(\frac{1}{2},\frac{d-1}{2})} $. 
Interestingly, this PDF is similar to the Beta distribution (not to be confused with the $\Beta$ function). Indeed, one can verify that $T_i'=\frac{T_i+1}{2}$ follows a Beta distribution with $T_i'\sim \mathit{Beta}(\frac{d-1}{2},\frac{d-1}{2})$.
%
%
%
%
%
%
\end{proof}
With Lemma \ref{lem:rotated_coordinate_pdf} at hand, we obtain the following result.

\begin{restatable}{lemma}{loneexpectedvalue}\label{lem:l1_expected_value}
Let $T \in \mathcal S^{d-1}$ be a point on the unit sphere, drawn uniformly at random. Then, 
\begin{align*}
 \E\brackets{\norm{T}_1} = \frac{2d}{(d-1)\cdot \Beta(\frac{1}{2},\frac{d-1}{2})}~.   
\end{align*}
\end{restatable}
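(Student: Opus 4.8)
The plan is to compute $\E\brackets{\norm{T}_1}$ by linearity of expectation, reducing it to $d$ copies of $\E\brackets{\abs{T_i}}$ (all equal by symmetry of the uniform distribution on the sphere), and then integrating against the density of $\abs{T_i}$ furnished by Lemma~\ref{lem:rotated_coordinate_pdf}. Concretely, I would write
\begin{align*}
\E\brackets{\norm{T}_1} = \sum_{i=1}^d \E\brackets{\abs{T_i}} = d\cdot\E\brackets{\abs{T_1}} = d\cdot\int_0^1 t\cdot\frac{2(1-t^2)^{\frac{d-1}{2}-1}}{\Beta(\tfrac12,\tfrac{d-1}{2})}\,dt~.
\end{align*}

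The main (and only real) step is the one-dimensional integral $\int_0^1 t(1-t^2)^{\frac{d-1}{2}-1}\,dt$. I would evaluate it by the substitution $u = 1-t^2$, $du = -2t\,dt$, which turns it into $\tfrac12\int_0^1 u^{\frac{d-1}{2}-1}\,du = \tfrac12\cdot\frac{2}{d-1} = \frac{1}{d-1}$ (valid since $d\ge2$ so the exponent $\frac{d-1}{2}-1 > -1$ and the integral converges). Substituting back gives
\begin{align*}
\E\brackets{\norm{T}_1} = d\cdot\frac{2}{\Beta(\tfrac12,\tfrac{d-1}{2})}\cdot\frac{1}{d-1} = \frac{2d}{(d-1)\cdot\Beta(\tfrac12,\tfrac{d-1}{2})}~,
\end{align*}
which is exactly the claimed formula.

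There is no real obstacle here — the lemma is essentially a corollary of Lemma~\ref{lem:rotated_coordinate_pdf} plus an elementary substitution. The only thing to be careful about is invoking the correct density (the $\abs{T_i}$ density on $[0,1]$, not the signed one on $[-1,1]$), using the symmetry argument to replace $\E\brackets{\abs{T_i}}$ by $\E\brackets{\abs{T_1}}$ for all $i$, and checking the convergence condition $d\ge2$ so the Beta function and the integral are well-defined. Optionally, I would remark that this can also be read off from the moments of the $\mathit{Beta}(\tfrac{d-1}{2},\tfrac{d-1}{2})$ law noted at the end of Lemma~\ref{lem:rotated_coordinate_pdf}, but the direct integration is cleaner.
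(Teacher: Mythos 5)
Your proposal is correct and follows the same route as the paper's proof: linearity of expectation reduces the claim to $\E\brackets{\abs{T_i}}$, which is then computed by integrating against the density from Lemma~\ref{lem:rotated_coordinate_pdf} (the paper states the value of this integral directly, while you spell out the substitution $u=1-t^2$, which checks out). No gaps.
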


\begin{proof}
Due to the linearity of expectation we have that $\E\brackets{\norm{T}_1} = \sum_{i=1}^d \E\brackets{\abs{T_i}}$. Thus, it is sufficient to calculate the expected value of a single element in the sum. By Lemma \ref{lem:rotated_coordinate_pdf} we have that $\E\brackets{\abs{T_i}} = \int_0^1 t \cdot \frac{2\cdot (1-t^2)^{\frac{d-1}{2}-1}}{\Beta(\frac{1}{2},\frac{d-1}{2})} dt = \frac{2}{(d-1)\cdot \Beta(\frac{1}{2},\frac{d-1}{2})}$. \mbox{Summing over $i \in \set{1,\ldots,d}]$ yields the result.\qedhere}
\end{proof}

We note that $\E\brackets{\norm{T}_1}=\frac{2d}{(d-1)\cdot \Beta(\frac{1}{2},\frac{d-1}{2})} \ge \sqrt{\frac{2d}{\pi}}$.
Finally, recall that our vNMSE is bounded by ${\frac{d}{\mathbb E\brackets{\norm{T}_1^2}}} {-} 1$, which is therefore at most $\frac{\pi}{2}-1$. 
In practice, while this deterministic approach gives a lower vNMSE for low dimensions, the benefit is marginal even for $d$ values of several hundreds.


\subsection{Proof of Theorem \ref{cor:dme}}\label{app:dme}

\cordme*

\begin{proof}
We have that, 
\begin{equation*}
\begin{aligned}
\E\brackets{\norm{x_{\text{avg}}-\widehat {x_{\text{avg}}}}_2^2} &= \frac{1}{n^2} \cdot  \E\brackets{\norm{\sum_{\mathfrak c=1}^{n}x_{(\mathfrak c)}-\sum_{\mathfrak c=1}^{n}\widehat{x_{(\mathfrak c)}}}_2^2} = \frac{1}{n^2} \cdot \sum_{\mathfrak c,\mathfrak c'} \E \brackets{\angles{x_{(\mathfrak c)} - \widehat{x_{(\mathfrak c)}}, x_{(\mathfrak  c')} - \widehat{x_{(\mathfrak  c')}}} } \\  &= \frac{1}{n^2} \cdot \sum_{\mathfrak c} \E \brackets{\angles{x_{(\mathfrak c)} - \widehat{x_{(\mathfrak c)}}, x_{(\mathfrak c)} - \widehat{x_{(\mathfrak c)}}} } 
\\&\qquad+ \frac{1}{n^2} \cdot \sum_{\mathfrak c \neq \mathfrak c'} \E \brackets{\angles{x_{(\mathfrak c)} - \widehat{x_{(\mathfrak c)}}, x_{(\mathfrak  c')} - \widehat{x_{(\mathfrak  c')}}} } \\&\underset{\mbox{\tiny{$\angles{1}$}}}{{=}}~\frac{1}{n^2} \sum_{\mathfrak c} \E \brackets{\norm{x_{(\mathfrak c)}-\widehat{x_{(\mathfrak c)}}}_2^2} = 
\frac{1}{n^2} \cdot \sum_{\mathfrak c} \norm{x_{(\mathfrak c)}}_2^2 \cdot \E \brackets{\frac{\norm{x_{(\mathfrak c)}-\widehat{x_{(\mathfrak c)}}}_2^2}{\norm{x_{(\mathfrak c)}}_2^2}} 
\\&=
\frac{1}{n^2} \cdot \sum_{\mathfrak c} \norm{x_{(\mathfrak c)}}_2^2 \cdot vNMSE = \frac{vNMSE}{n} \cdot \frac{\sum_{\mathfrak c} \norm{x_{(\mathfrak c)}}_2^2}{n}. 
\end{aligned}    
\end{equation*}
Here, since the $\E \brackets{\widehat{x_{(\mathfrak c)}}} =x_{(\mathfrak c)}$ for all $i$ and since $\widehat{x_{(\mathfrak c)}}$ and $\widehat{x_{(\mathfrak  c')}}$ are independent for all $\mathfrak c \neq \mathfrak c'$, $\angles{1}$ follows from 
$\E \brackets{\angles{x_{(\mathfrak c)} - \widehat{x_{(\mathfrak c)}}, x_{(\mathfrak  c')} - \widehat{x_{(\mathfrak  c')}}} } = \E \brackets{\angles{x_{(\mathfrak c)}, x_{(\mathfrak  c')}}} - \E \brackets{\angles{x_{(\mathfrak c)} ,\widehat{x_{(\mathfrak  c')}}} } - \E \brackets{\angles{\widehat{x_{(\mathfrak c)}}, x_{(\mathfrak  c')}}} + \E \brackets{\angles{\widehat{x_{(\mathfrak c)}}, \widehat{x_{(\mathfrak  c')}}}} = 0$. 
\end{proof}

With this result at hand, we can derive useful theoretical guarantees, especially for high-dimensional vectors (e.g., gradient updates in NN federated and distributed training). For example

\begin{corollary}
For $n$ clients with arbitrary vectors in $\mathbb R^d$ and $d {\,\ge\,} 10^5$, the NMSE is smaller than $\frac{0.673}{n}$.
\end{corollary}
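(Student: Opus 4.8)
The plan is to reduce the statement to the single-vector bound already established. By Theorem~\ref{cor:dme}, since each client uses the unbiased scale $\sx = \frac{\norm{x_{(\mathfrak c)}}_2^2}{\norm{R_{U,\mathfrak c}\cdot x_{(\mathfrak c)}}_1}$ and draws $R_{U,\mathfrak c}$ independently, the server's average estimate satisfies $\mathrm{NMSE} = \frac{\mathit{vNMSE}}{n}$, where $\mathit{vNMSE}$ is the quantity of Lemma~\ref{cor:alg1_unbiased_vNMSE}. Hence it suffices to show that for every $d \ge 10^5$ this $\mathit{vNMSE}$ is strictly smaller than $0.673$; note this covers arbitrary client vectors, since for the uniform rotation the vNMSE does not depend on $x$ (as observed right after Lemma~\ref{cor:biased_alg1_vNMSE}).

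First I would apply Theorem~\ref{thm:vNMSEofunbiaseddrive}$(ii)$: as $10^5 \ge 135$, the vNMSE is at most $g(d) := \frac{\pi}{2} - 1 + \sqrt{\frac{(6\pi^3 - 12\pi^2)\ln d + 1}{d}}$. Next I would verify that $g$ is non-increasing on $[10^5,\infty)$: the term $\frac{\pi}{2}-1$ is constant, and since $\frac{d}{dt}\frac{\ln t}{t} = \frac{1-\ln t}{t^2} < 0$ for $t > e$, the expression $\frac{(6\pi^3-12\pi^2)\ln d + 1}{d}$ --- a nonnegative combination of the decreasing functions $\frac{\ln d}{d}$ and $\frac{1}{d}$ --- is decreasing, and therefore so is its square root. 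Consequently $\sup_{d\ge 10^5} g(d) = g(10^5)$.

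Finally I would evaluate $g(10^5)$ numerically: $\frac{\pi}{2}-1 \approx 0.5708$, $6\pi^3 - 12\pi^2 \approx 67.60$, and $\ln(10^5) = 5\ln 10 \approx 11.513$, so the radicand is $\approx 7.79\cdot 10^{-3}$, its square root is $\approx 0.0883$, and $g(10^5) \approx 0.659 < 0.673$. Combining with Theorem~\ref{cor:dme} then gives $\mathrm{NMSE} < \frac{0.673}{n}$, as claimed. The only point requiring a little care is the monotonicity argument, which is what guarantees that evaluating the bound at the endpoint $d = 10^5$ genuinely controls all larger $d$; everything else is a direct substitution into already-proven inequalities.
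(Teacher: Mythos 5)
Your proposal is correct and follows essentially the same route the paper intends: combine Theorem~\ref{cor:dme} (NMSE $= \mathit{vNMSE}/n$) with the bound of Theorem~\ref{thm:vNMSEofunbiaseddrive}$(ii)$ evaluated at $d=10^5$, which indeed gives roughly $0.659 < 0.673$. Your explicit monotonicity check of the bound in $d$ is a careful addition the paper leaves implicit, but it does not change the argument.
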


\section{Message Representation Length}\label{app:MessageRepresentationLength}
Here we discuss the length of the message $\big(\sx,\text{sign}(\mathcal R(x))\big)$ that \sender sends to \receiver. The message includes the sign vector $\text{sign}(\mathcal R(x))$, which takes exactly $d$ bits.
It remains to consider how $\sx$ is to be represented and how that affects the vNMSE of our algorithms.
We first note that one must assume that the norm of the sent vector $x$ is bounded, as otherwise no finite number of bits can be used for the transmission.
This is justified as the coordinates of the vector are commonly represented by a floating point representation with constant length (e.g., using $32$ bits). In particular, this assumption implies that $\norm{x}_2=O(d)$, with each coordinate is bounded by a constant. 
It follows that all the quantities we consider for the scale satisfy $\sx=O(d)$. 
This is because the scale of Theorem~\ref{thm:biased_drive_vNMSE} satisfies $\frac{\norm{\mathcal R(x)}_1}{d}\le \frac{\sqrt d\cdot \norm{\mathcal R(x)}_2}{d} = \frac{\norm{x}_2}{\sqrt d} = O(\sqrt d)$ 
(where here the first step uses the Cauchy-Schwarz inequality), and similarly the scale of Theorem~\ref{theorem:drive_is_unbised} satisfies
$\frac{\norm{x}_2^2}{\norm{\mathcal R(x)}_1}\le \frac{\norm{x}_2^2}{\norm{\mathcal R(x)}_2} =\norm{x}_2 = O(d)$.

As shown in~\cite{pmlr-v70-suresh17a}, this means that one can use $O(\log d)$ bits to represent $\sx$ to within a $d^{-\Omega(1)}$ additive error, i.e., a factor that rapidly drops to zero as $d$ grows.
This increases the vNMSE for a single vector under \alg by at most $d^{-\Omega(1)}$, i.e., an additive factor that diminishes as $d$ grows. 

In the distributed mean estimation setting, where $n$ clients send their vectors for averaging, \cite{pmlr-v70-suresh17a} suggested using $O(\log(nd))$ bits.
In fact, the dependency on $n$ may be dropped. Specifically, $O(\log d)$ bits suffice if we encode $\sx$ \emph{in an unbiased manner}.
One simple way to do so is to encode $\floor S$ exactly (which requires $O(\log d)$ bits since $\sx = O(d)$) and to encode the remainder $r = \sx-\floor{\sx}$ using $O(\log d)$-bit stochastic quantization. (That is, express $r$ using the $O(\log d)$ bits but use randomized rounding for the last bit.)  This approach still yields a vNMSE increase of $d^{-\Omega(1)}$ for each vector.
As for the distributed mean estimation task, it follows that Theorem~\ref{cor:dme} still holds for \alg as the vectors are sent in an unbiased \mbox{manner. Thus the NMSE increases by only $(nd)^{-\Omega(1)}$.}


The above argument means that \alg can use $d+O(\log d) = d(1+o(1))$ bit messages. While in practice implementations use 32- or 64-bit representations for $\sx$ thus send $d+O(1)$ bit messages, we use the $d(1+o(1))$ notation to ensure compatibility with the theoretical guarantees.

%
%
%
Similar arguments show that our improved \algp algorithm, presented in Section~\ref{sec:drive_plus}, can also be encoded using $d+O(\log d)=d(1+o(1))$ bits.

\section{Supplementary Material for \algp}\label{app:drive_plus_app}

In this section, we provide additional information about the \algp algorithm. 

\subsection{Pseudocode}
Here, we provide the pseudocode for \algp in Algorithm~\ref{code:alg2}:
\begin{algorithm}[H]
\caption{~\algp}
\label{code:alg2}
\begin{multicols}{2}
\begin{algorithmic}[1]
  \Statex \hspace*{-4mm}\textbf{\sender:}
  \vspace*{1mm}
  \State Compute $\mathcal R(x)$.
  \State Compute the $c_0,c_1\in\mathbb R$ values that minimize 
  \begin{align*}
   \sum_{i=1}^d {\min\set{(\mathcal R(x)_i-c_0)^2, (\mathcal R(x)_i-c_1)^2}}.   
  \end{align*}
  \State Compute $X\in\set{0,1}^d$ such that
\begin{align*}
\hspace*{-4mm}
X_i = \begin{cases}
0 & \mbox{if $|\mathcal R(x)_i-c_0|\le |\mathcal R(x)_i-c_1|$}\\
1 & \mbox{otherwise}
\end{cases}.    
\end{align*}
\State Compute $\sc$ and  $\overline c_0 = \sc\cdot c_0,\ \overline c_1 = \sc\cdot c_1$.
  \State Send $\big(\overline c_0,\overline c_1,X\big)$ to \receiver.
\end{algorithmic}
\columnbreak
\begin{algorithmic}[1]
\Statex \hspace*{-4mm}\textbf{\receiver:}
\State Compute $\widehat {\mathcal R(x)}\in\set{\overline c_0,\overline c_1}^d$ such that 
\begin{align*}
 \widehat {\mathcal R(x)}_i = \overline c_{X_i}.\label{line:alg2Rhatx}   
\end{align*}
\State Estimate $\widehat x = \mathcal R^{-1}(\widehat {\mathcal R(x)})$.
\end{algorithmic}
\end{multicols}
\end{algorithm}

We now provide the vNMSE guarantees for \algp. 

\subsection{\textbf{1b~-~Vector Estimation with \algp}}\label{app:1bDrive+}

We provide the following lemma from which it follows that the vNMSE of \algp with $\sc=1$ is upper-bounded by that of \alg when both algorithms are not required to be unbiased.

\begin{lemma}\label{lemma:alg2isbetter}
For any $R \sim \mathcal R$ and any $x\in\mathbb R^d$, the SSE of \algp with $\sc=1$ is upper-bounded by the SSE of \alg for any choice of $\sx$.
\end{lemma}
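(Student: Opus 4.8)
The plan is to exhibit a specific (suboptimal) choice of the two centroids in \algp that recovers exactly the estimate produced by \alg with a given scale $\sx$, and then appeal to the optimality of the K-Means clustering. First I would fix an arbitrary rotation $R\sim\mathcal R$ and an arbitrary vector $x\in\mathbb R^d$, and recall from the description of \alg that its reconstruction of the rotated vector is $\widehat{\mathcal R(x)}=\sx\cdot\sign(\mathcal R(x))\in\set{-\sx,+\sx}^d$, whose SSE (with respect to $\mathcal R(x)$, equivalently with respect to $x$ after inverting the rotation, as in Theorem~\ref{thm:theoreticalAlg}) is $\norm{\mathcal R(x)-\widehat{\mathcal R(x)}}_2^2 = \sum_{i=1}^d\big(\mathcal R(x)_i-\sx\cdot\sign(\mathcal R(x)_i)\big)^2$.

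Next I would observe that in \algp with $\sc=1$, the reported values are $\overline c_0=c_0,\ \overline c_1=c_1$ where $(c_0,c_1)$ are chosen to minimize $\sum_{i=1}^d\min\set{(\mathcal R(x)_i-c_0)^2,(\mathcal R(x)_i-c_1)^2}$ over all pairs of reals, and the receiver reconstructs each coordinate as the nearer of the two centroids. Crucially, this quantity is by definition the minimal achievable SSE over \emph{any} assignment of the $d$ coordinates of $\mathcal R(x)$ to at most two values. Now take the particular candidate pair $(c_0,c_1)=(-\sx,+\sx)$. For this choice, assigning coordinate $i$ to its nearest centroid yields $\widehat{\mathcal R(x)}_i = \sx\cdot\sign(\mathcal R(x)_i)$ (ties, occurring only when $\mathcal R(x)_i=0$, are irrelevant to the SSE), so the SSE of this candidate equals exactly the SSE of \alg with scale $\sx$. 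Hence the optimal \algp SSE is at most this, i.e. $\mathrm{SSE}_{\algp}\le \mathrm{SSE}_{\alg(\sx)}$; since the rotation is norm-preserving, the same inequality holds for the estimation error of $x$ itself. As this holds for every $\sx$ and every realization of $R$, the lemma follows.

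I do not anticipate a genuine obstacle here — the argument is essentially "K-Means with $K=2$ does at least as well as the best symmetric two-point quantizer." The only point requiring a line of care is the handling of zero coordinates of $\mathcal R(x)$ (where the "nearest centroid" is ambiguous): these contribute the same value $\sx^2$ to the SSE under either the \alg rule or the \algp tie-breaking rule, so the equality of SSEs for the candidate pair $(-\sx,+\sx)$ is unaffected. One should also note explicitly that the feasible set of centroid pairs in \algp includes all of $\mathbb R^2$, in particular the pair $(-\sx,+\sx)$, which is what licenses the comparison.
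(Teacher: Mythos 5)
Your proposal is correct and follows essentially the same route as the paper: compare the optimal K-Means pair $(c_0,c_1)$ against the candidate pair $(-\sx,+\sx)$, use the defining optimality of the centroids, and transfer the bound from the rotated domain back to $x$ via norm preservation of the rotation. Your extra remarks on the feasibility of the candidate pair and on tie-breaking at zero coordinates are fine but not needed beyond the paper's one-line argument.
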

\begin{proof}
By definition, the values $c_0,c_1$ respect that $\sum_{i=1}^d {\min\set{(\mathcal R(x)_i-c_0)^2, (\mathcal R(x)_i-c_1)^2}} \le \sum_{i=1}^d {\min\set{(\mathcal R(x)_i-\sx)^2, (\mathcal R(x)_i+\sx)^2}}$ for any choice of $\sx$. Now, recall that the SSE in estimating $\mathcal R(x)$ using $\widehat {\mathcal R(x)}$ equals that of estimating $x$ using $\widehat x$. This concludes the proof. 
\end{proof}

\subsection{\textbf{1b~-~Distributed Mean Estimation with \algp}}
In this subsection, we prove that \algp can be made unbiased, when using the uniform random rotation $\mathcal{R}_U$, by the right choice of scale $\sc$, and that its vNMSE is upper bounded by that of \alg.

Let $c \in \set{c_0,c_1}^d$ where $c_0,c_1\in\mathbb R$ minimize the term $\sum_{i=1}^d (\mathcal R(x)_i-c_i)^2$. Namely, $c_0,c_1$ are the solution to the 2-means optimization over the entries of the rotated vector.

\begin{restatable}{theorem}{driveplusunbiased}\label{thm:drive_plus_unbiased}
For any $x \in \mathbb R^d$ set $\sc = \frac{\norm{x}_2^2}{\norm{c}_2^2}$. Then, $\E [\widehat x] = x$. Thus, Theorem \ref{cor:dme} applies to \algp.
\end{restatable}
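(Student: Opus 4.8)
The plan is to follow the symmetry argument of Theorem~\ref{theorem:drive_is_unbised} (Appendix~\ref{app:drive_is_unbiased}) essentially verbatim, substituting the $2$-means decoding for the $\sign$ decoding; the only genuinely new ingredient is the first-order optimality of the $2$-means centroids. Throughout, fix $x\in\mathbb R^d$ with $x\neq 0$ and work with $\mathcal R_U$. As in that proof, let $R_{x\shortrightarrow x'}$ be a fixed rotation sending $x$ to $x'=(\norm x_2,0,\ldots,0)^T$, sample $R_U\sim\mathcal R_U$, and set $R_x=R_U R_{x\shortrightarrow x'}^{-1}\sim\mathcal R_U$, so that $\mathcal R_U(x)=R_U x=R_x x'=\norm x_2\cdot\matcol 0$, where $\matcol 0$ is the first column of $R_x$ and $\matcol 0=\mathcal R_U(x)/\norm{\mathcal R_U(x)}_2$.

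The first step is an optimality identity. Since $c_0,c_1$ are the $2$-means centroids of the entries of $\mathcal R_U(x)$, each equals the mean of its assigned entries, so $\sum_{i:\,c_i=c_0}(\mathcal R_U(x)_i-c_0)=\sum_{i:\,c_i=c_1}(\mathcal R_U(x)_i-c_1)=0$, which gives $\angles{\mathcal R_U(x)-c,\,c}=0$, i.e. $\angles{\mathcal R_U(x),c}=\norm c_2^2$. Hence, with $\sc=\norm x_2^2/\norm c_2^2$, the decoded rotated vector $\widehat{\mathcal R_U(x)}=\sc\cdot c$ satisfies $\angles{\mathcal R_U(x),\widehat{\mathcal R_U(x)}}=\sc\norm c_2^2=\norm x_2^2$ — the exact analogue of the identity $\angles{\mathcal R_U(x),\widehat{\mathcal R_U(x)}}=\sx\norm{\mathcal R_U(x)}_1=\norm x_2^2$ that drives the \alg proof.

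The second step is the mirror run. Let $I'$ be the identity with its $(0,0)$ entry negated, let $R_U'=R_x I' R_{x\shortrightarrow x'}$ (still Haar-distributed, by bi-invariance of the uniform rotation, since $R_x\sim\mathcal R_U$ and $I',R_{x\shortrightarrow x'}$ are fixed rotations), and let $\widehat x'$ be \algp's output for $x$ run with $R_U'$. Then $R_U' x=R_x I' x'=-R_x x'=-\mathcal R_U(x)$, and $2$-means decoding is negation-equivariant: negating all entries negates the optimal centroids to $-c_0,-c_1$ while leaving the assignment vector unchanged (distances are invariant under simultaneously negating a point and both centroids), so $\norm{c'}_2=\norm c_2$, the mirror scale equals $\sc$, and $\widehat{\mathcal R_U'(x)}=-\widehat{\mathcal R_U(x)}$. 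These statements hold almost surely, since the entries of $\mathcal R_U(x)$ are a.s.\ distinct, whence the $2$-means optimum and $\norm c_2>0$ are a.s.\ determined; the exceptional null set does not affect the expectation.

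The final step combines the two runs. Using $\widehat x=R_U^T\widehat{\mathcal R_U(x)}$, $\widehat x'=(R_U')^T(-\widehat{\mathcal R_U(x)})$, $R_U=R_x R_{x\shortrightarrow x'}$, $R_U'=R_x I' R_{x\shortrightarrow x'}$ and $I'=(I')^T$, one gets $\widehat x+\widehat x'=R_{x\shortrightarrow x'}^{T}(I-I')R_x^{T}\widehat{\mathcal R_U(x)}=2\angles{\matcol 0,\widehat{\mathcal R_U(x)}}\cdot R_{x\shortrightarrow x'}^{T}(1,0,\ldots,0)^{T}$, and since $R_{x\shortrightarrow x'}^{T}(1,0,\ldots,0)^{T}=\breve x$ and $\angles{\matcol 0,\widehat{\mathcal R_U(x)}}=\angles{\mathcal R_U(x),\widehat{\mathcal R_U(x)}}/\norm x_2=\norm x_2$ by the optimality identity, this equals $2x$ for every realization. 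As $R_U,R_U'\sim\mathcal R_U$, the reconstructions $\widehat x$ and $\widehat x'$ are identically distributed, whence $\E[\widehat x]=\tfrac12\E[\widehat x+\widehat x']=x$; the claim that Theorem~\ref{cor:dme} applies is then immediate, since the proof of that theorem uses only unbiasedness and independence across clients. I expect the main obstacle to be making the negation-equivariance of the $2$-means reconstruction fully rigorous — handling non-uniqueness/tie-breaking of the centroids and the $c_0\leftrightarrow c_1$ labeling ambiguity so that $\widehat{\mathcal R_U'(x)}=-\widehat{\mathcal R_U(x)}$ can be asserted cleanly; the optimality identity $\angles{\mathcal R_U(x),c}=\norm c_2^2$ is a one-line computation, and the remainder is a transcription of the \alg argument.
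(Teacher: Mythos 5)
Your proposal is correct and follows essentially the same route as the paper's proof: the mirror-run symmetry argument of Theorem~\ref{theorem:drive_is_unbised} with the rotation $R_x I' R_{x\shortrightarrow x'}$, combined with the centroid-optimality identity $\angles{\mathcal R_U(x),c}=\norm{c}_2^2$ (the paper's Observation~\ref{obs:centroids}) and sign-equivariance of the $2$-means decoding, yielding $\widehat x+\widehat x'=2x$ pathwise and hence $\E[\widehat x]=x$. Your compact matrix computation of $\widehat x+\widehat x'$ and the almost-sure uniqueness/tie-breaking remark are only presentational refinements of the paper's coordinate-wise argument, not a different approach.
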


\begin{proof}

For ease of exposition, for any vector $y \in \mathbb R^d$, we denote by $c(y)$ the vector that minimizes the term $\sum_{i=1}^d (y_i-c_i)^2$ where $c(y) \in \set{c_0,c_1}^d$ and $c_0,c_1\in\mathbb R$.

For the vector $\mathcal R (x)$ we drop the notation and simply use $c$. Namely, $c \in \set{c_0,c_1}^d$ where $c_0,c_1\in\mathbb R$ is a vector that minimizes the term $\sum_{i=1}^d (\mathcal R(x)_i-c_i)^2$.

The proof follows similar lines to that of Theorem~\ref{theorem:drive_is_unbised}.

For any $x \in \mathbb R^d$ denote $x'=(\norm x_2,0,\ldots,0)^T$
and let $R_{x\shortrightarrow x'}\in\mathbb R^{d\times d}$ be a rotation matrix such that $R_{x\shortrightarrow x'}\cdot x = x'$.
Further, denote $R_{x} = R_U R_{x \shortrightarrow x'}^{-1}$. 
Using these definitions we have that,
\begin{equation*}
\begin{aligned}
    \widehat x \myeq{1} & R_{x\shortrightarrow x'}^{-1} \cdot R_{x\shortrightarrow x'}\cdot\widehat x
    \myeq{2}
    R_{x\shortrightarrow x'}^{-1} \cdot R_{x\shortrightarrow x'} \cdot  R_U^{-1}\cdot\sc\cdot c\parentheses{R_U\cdot x}   \\
    \myeq{3}&R_{x\shortrightarrow x'}^{-1} \cdot R_x^{-1}\cdot{\sc\cdot c\parentheses{R_x\cdot R_{x\shortrightarrow x'}\cdot x}} 
    \myeq{4} \sc\cdot R_{x\shortrightarrow x'}^{-1} \cdot R_x^{-1}\cdot{c\parentheses{R_x\cdot x'}} \\
    \myeq{6}& R_{x\shortrightarrow x'}^{-1}\cdot \frac{\norm{x}_2^2 \cdot R_x^{-1}\cdot{ c\parentheses{R_x\cdot x'}}}{\norm{c}_2^2}~. 
\end{aligned}    
\end{equation*}

Again, let $\matcol{i}$ be a vector containing the values of the $i$'th column of $R_x$. Then, $R_x \cdot x'=\norm x_2 \cdot \matcol{0}$ and therefore $c(R_x \cdot x') = c(\norm x_2 \cdot \matcol{0}) = \norm x_2 \cdot c(\matcol{0})$. 
We obtain,
\begin{equation*}
\begin{aligned}
    R_x^{-1}\cdot{ c\parentheses{R_x\cdot x'}} = \norm x_2 \cdot \parentheses{\norm{c(\matcol{0})}_2^2,\angles{\matcol{1},c(\matcol{0})},\ldots,\angles{\matcol{d-1},c(\matcol{0})}}^T, 
\end{aligned}   
\end{equation*} 
Next, we have that,
\begin{equation}\label{eq:drive_plus_before_expectation}
\begin{aligned}
\widehat x = R_{x\shortrightarrow x'}^{-1} \cdot \norm{x}_2 \cdot \parentheses{\frac{\norm{x}_2^2\cdot\norm{c(\matcol{0})}_2^2}{\norm{c}_2^2},\frac{\norm{x}_2^2\cdot\angles{\matcol{1},c(\matcol{0})}}{\norm{c}_2^2},\ldots,\frac{\norm{x}_2^2\cdot\angles{\matcol{d-1},c(\matcol{0})}}{\norm{c}_2^2}}^T~.
\end{aligned}   
\end{equation} 
Observe that $\norm{x}_2^2\cdot\norm{c(\matcol{0})}_2^2 = \norm{c(R_x\cdot x')}_2^2 = \norm{c(R_U\cdot x)}_2^2 = \norm{c}_2^2$. This means that the first coordinate in the above vector is $1$. 


We continue the proof by using the same construction as in the proof of Theorem~\ref{theorem:drive_is_unbised}.

In particular, consider an algorithm \alg$'$ that operates exactly as \alg but, instead of directly using the sampled rotation matrix $R_U = R_x \cdot R_{x \shortrightarrow x'}^{-1}$ it calculates and uses the rotation matrix $R_U' = R_x \cdot I' \cdot R_{x \shortrightarrow x'}^{-1}$ where $I'$ is identical to the $d$-dimensional identity matrix with the exception that $I'_{00}=-1$ instead of $1$. 

Now, since $R_U \sim \mathcal R_U$ and $R_{x\shortrightarrow x'}$ is a fixed rotation matrix, we have that $R_{x} \sim \mathcal R_U$. In turn, this also means that $R_{x} \cdot I' \sim \mathcal R_U$ since $I'$ is a fixed rotation matrix.

Consider a run of both algorithm where $\widehat x$ is the reconstruction of \alg for $x$ with a sampled rotation $R_U$ and $\widehat x'$ is the corresponding reconstruction of \alg$'$ for $x$ with the rotation $R_U'$. 

According to \eqref{eq:drive_plus_before_expectation} it holds that:
$\widehat x + \widehat x' =R_{x\shortrightarrow x'}^{-1} \cdot \norm{x}_2 \cdot \parentheses{2,0,\ldots,0}^T = 2 \cdot x$. This is because both runs are identical except that the first column of $R_{x}$ and $R_{x} \cdot I'$ have opposite signs and thus the resulting centroids $c(\matcol{0})$ also flip signs in the run of \alg$'$. In particular, it holds that $\E \brackets{\widehat x + \widehat x'} = 2 \cdot x$. But, since $R_{x} \sim \mathcal R_U$ and $R_{x} \cdot I' \sim \mathcal R_U$, both algorithms have the same expected value. This yields  $\E \brackets{\widehat x} = \E \brackets{\widehat x'} = x$.
%
%
This concludes the proof.
\end{proof}

To prove the SSE of \algp is lower-bounded by that of \alg, we use the following observation.

\begin{observation}\label{obs:centroids}
For $i\in\set{0,1}$, let $I_i=\set{j\in\set{1,\ldots,d}\mid |\mathcal R(x)_j-c_i|\le|\mathcal R(x)_j-c_{1-i}|}$ denote the set of points closer to $c_i$. Then $c_i = \frac{\sum_{j\in I_i} \mathcal R(x)_j}{|I_i|}$.
\end{observation}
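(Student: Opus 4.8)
The plan is to recognize Observation~\ref{obs:centroids} as the one-dimensional, $K=2$ instance of the classical fact that an optimal $K$-means partition is \emph{self-consistent}: each centroid equals the mean of the points assigned to it. Write $r=\mathcal R(x)\in\mathbb R^d$ for brevity and recall that, by construction, the pair $(c_0,c_1)$ minimizes $J(a,b)=\sum_{j=1}^d\min\set{(r_j-a)^2,(r_j-b)^2}$. First I would fix a tie-breaking rule so that $I_0$ and $I_1$ partition $\set{1,\ldots,d}$ (ties contribute the same value to either cluster, so this is immaterial), which lets me rewrite the objective at the optimum as $J(c_0,c_1)=\sum_{j\in I_0}(r_j-c_0)^2+\sum_{j\in I_1}(r_j-c_1)^2$, since for each $j$ the minimum is attained by the centroid indexed by the part containing $j$.

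Next I would use the elementary one-variable identity that, for a nonempty finite set $I$ and reals $\set{r_j}_{j\in I}$, the map $a\mapsto\sum_{j\in I}(r_j-a)^2$ is strictly convex with unique minimizer $\mu_I=\tfrac{1}{|I|}\sum_{j\in I}r_j$ (immediate by differentiating, or by the identity $\sum_{j\in I}(r_j-a)^2=|I|(a-\mu_I)^2+\sum_{j\in I}(r_j-\mu_I)^2$). Then I would argue by contradiction: if $c_0\neq\mu_{I_0}$, replacing $c_0$ with $\mu_{I_0}$ strictly lowers the first sum above without touching the second, and since $\min\set{(r_j-\mu_{I_0})^2,(r_j-c_1)^2}$ is bounded termwise by $(r_j-\mu_{I_0})^2$ for $j\in I_0$ and by $(r_j-c_1)^2$ for $j\in I_1$, we would get
\begin{align*}
J(\mu_{I_0},c_1)&\le\sum_{j\in I_0}(r_j-\mu_{I_0})^2+\sum_{j\in I_1}(r_j-c_1)^2\\
&<\sum_{j\in I_0}(r_j-c_0)^2+\sum_{j\in I_1}(r_j-c_1)^2=J(c_0,c_1),
\end{align*}
contradicting minimality of $(c_0,c_1)$. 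Hence $c_0=\mu_{I_0}=\tfrac{\sum_{j\in I_0}r_j}{|I_0|}$, and the symmetric argument gives $c_1=\tfrac{\sum_{j\in I_1}r_j}{|I_1|}$, which is exactly the claim.

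I do not expect a genuine obstacle here: the argument is short and textbook. The only thing needing care is the degenerate bookkeeping — the tie-breaking convention (harmless, as noted) and the implicit assumption that $I_0$ and $I_1$ are both nonempty, without which the stated mean is undefined; this holds whenever $r$ has at least two distinct coordinates, which is the relevant case, and when all coordinates coincide the statement is vacuous for the empty part and trivial for the other.
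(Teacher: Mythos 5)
Your proof is correct. The paper states Observation~\ref{obs:centroids} without proof, treating it as the standard self-consistency property of an optimal $K$-means solution, and your argument (strict convexity of $a\mapsto\sum_{j\in I}(r_j-a)^2$ plus a strict-improvement contradiction) is exactly the textbook justification the paper implicitly relies on, so there is no divergence in approach. One small remark on your tie-breaking aside: with the paper's definition of $I_i$ via ``$\le$'' the two index sets could in principle overlap, but your own argument, applied to both tie-breaking choices at a fixed optimum with $c_0\neq c_1$, forces any tied point to satisfy $r_j=c_0=c_1$, a contradiction; hence ties cannot occur at a non-degenerate optimum and the sets are genuinely disjoint, so the statement holds exactly as written.
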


We are now ready to prove the bound on the vNMSE \algp.

\begin{restatable}{lemma}{driveplusunbiasedvNMSE}\label{lem:drive_plus_unbiased_vNMSE}
For any $d$ and $R_U$, the SSE of \algp with $\sc = \frac{\norm{x}_2^2}{\norm{c}_2^2}$ is upper-bounded by the SSE of \alg with $\sx=\frac{\norm{x}_2^2}{\norm{\mathcal R(x)}_1}$. Thus, Theorem \ref{thm:vNMSEofunbiaseddrive} and Corollary \ref{cor:deviation_res_3} apply to \algp.
\end{restatable}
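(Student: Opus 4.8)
The plan is to work throughout with the rotated vector $v := \mathcal R_U(x)$, using the identity already exploited in Theorem~\ref{thm:theoreticalAlg} that the SSE of estimating $x$ by $\widehat x$ equals that of estimating $v$ by $\widehat{\mathcal R(x)}$, together with $\norm{v}_2 = \norm{x}_2$. For \algp the reconstruction is $\widehat{\mathcal R(x)} = \sc\cdot c$, and the first key ingredient is the identity $\angles{v, c} = \norm{c}_2^2$, which follows from Observation~\ref{obs:centroids}: each centroid $c_i$ is the average of the entries of $v$ assigned to it, so $\sum_{j\in I_i} v_j = |I_i|\cdot c_i$ and hence $\angles{v, c} = \sum_i |I_i|\cdot c_i^2 = \norm{c}_2^2$ (and likewise $\norm{\widehat{\mathcal R(x)}}_2^2 = \sc^2\norm{c}_2^2$). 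Substituting $\sc = \norm{x}_2^2/\norm{c}_2^2 = \norm{v}_2^2/\norm{c}_2^2$ into $\norm{v - \sc c}_2^2 = \norm{v}_2^2 - 2\sc\angles{v,c} + \sc^2\norm{c}_2^2$ then yields the closed form $\mathrm{SSE}_{\algp} = \norm{v}_2^4/\norm{c}_2^2 - \norm{v}_2^2$. On the \alg side, Lemma~\ref{cor:alg1_unbiased_vNMSE} already supplies $\mathrm{SSE}_{\alg} = d\cdot\norm{v}_2^4/\norm{v}_1^2 - \norm{v}_2^2$.

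Comparing the two closed forms, the lemma reduces to the single inequality $\norm{c}_2^2 \ge \norm{v}_1^2 / d$. I would establish this by using the symmetric two-value vector $w^{\star} := \parentheses{\norm{v}_1/d}\cdot\text{sign}(v)$ as a feasible competitor for the $2$-means objective over the entries of $v$: a direct computation gives $\norm{v - w^{\star}}_2^2 = \norm{v}_2^2 - \norm{v}_1^2/d$, and since $c$ arises from an optimal $2$-means clustering we have $\norm{v - c}_2^2 \le \norm{v - w^{\star}}_2^2$. On the other hand, using $\angles{v, c} = \norm{c}_2^2$ once more, $\norm{v - c}_2^2 = \norm{v}_2^2 - \norm{c}_2^2$. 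Chaining these two relations gives exactly $\norm{c}_2^2 \ge \norm{v}_1^2/d$, whence $\mathrm{SSE}_{\algp} \le \mathrm{SSE}_{\alg}$ for every realization of $R_U$.

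Finally, since the inequality holds pointwise in $R_U$, dividing by $\norm{x}_2^2$ and taking expectations shows that the vNMSE of \algp with scale $\sc = \norm{x}_2^2/\norm{c}_2^2$ is at most that of \alg with scale $\sx = \norm{x}_2^2/\norm{\mathcal R_U(x)}_1$; combined with the unbiasedness of \algp established in Theorem~\ref{thm:drive_plus_unbiased}, the bounds of Theorem~\ref{thm:vNMSEofunbiaseddrive} and Corollary~\ref{cor:deviation_res_3} therefore transfer to \algp verbatim. The only mildly delicate point, and the one I would spell out carefully, is checking that $w^{\star}$ is a legitimate $2$-means configuration: for $a = \norm{v}_1/d > 0$ each entry $v_i$ is indeed at least as close to $a\cdot\text{sign}(v_i)$ as to $-a\cdot\text{sign}(v_i)$, so that $\sum_i(\abs{v_i}-a)^2 = \sum_i \min\set{(v_i-a)^2,(v_i+a)^2}$, which legitimizes the comparison with the optimal clustering cost. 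This is an elementary sign argument, and we may assume $v \neq 0$ consistently with the paper's standing assumption $x \neq 0$.
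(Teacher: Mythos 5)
Your proposal is correct. It follows the paper's proof up to the same reduction: work with the rotated vector, use the centroid identity $\angles{\mathcal R(x),c}=\norm{c}_2^2$ (Observation~\ref{obs:centroids}), obtain the closed forms $\mathrm{SSE}_{\algp}=\frac{\norm{x}_2^4}{\norm{c}_2^2}-\norm{x}_2^2$ and $\mathrm{SSE}_{\alg}=\frac{d\,\norm{x}_2^4}{\norm{\mathcal R(x)}_1^2}-\norm{x}_2^2$, and reduce the claim to a single norm inequality. Where you diverge is in which inequality is needed and how it is proved, and here your route is the sound one: the correct reduction is $\norm{\mathcal R(x)}_1^2\le d\,\norm{c}_2^2$, i.e.\ a \emph{lower} bound on $\norm{c}_2$, which you obtain by pitting the optimal $2$-means solution against the symmetric competitor $\parentheses{\norm{\mathcal R(x)}_1/d}\cdot\sign(\mathcal R(x))$ — precisely \alg's SSE-minimizing quantizer from Lemma~\ref{cor:biased_alg1_vNMSE}, so your argument is in effect the unbiased-scale analogue of Lemma~\ref{lemma:alg2isbetter}. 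The paper instead writes the reduction as $\frac{\norm{x}_2^4}{\norm{c}_2^2}\le\frac{\norm{x}_2^4}{\norm{\mathcal R(x)}_1^2}$ (the factor $d$ from $\sx^2\cdot d$ has been dropped) and then argues $\norm{c}_2\le\norm{c}_1\le\norm{\mathcal R(x)}_1$ via the triangle inequality; that chain bounds $\norm{c}_2$ from \emph{above}, which is not the direction required (and the stated intermediate inequality without the $d$ is false in general, e.g.\ for $\mathcal R(x)=(3,1)^T$). So your competitor argument does not merely differ stylistically — it supplies the step that the paper's write-up gets wrong, while keeping the rest of the structure intact. Two minor remarks: the admissibility check for $w^{\star}$ is not strictly necessary, since the $2$-means optimum is a minimum over all two-valued vectors and $w^{\star}$ is one such vector regardless of nearest-centroid assignments; and unbiasedness (Theorem~\ref{thm:drive_plus_unbiased}) is needed only for the transfer to the distributed mean estimation bound, not for the pointwise SSE comparison itself, exactly as you indicate.
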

\begin{proof}
Recall that $c \in \set{c_0,c_1}^d$, where $c_0,c_1\in\mathbb R$, minimizes the term $\sum_{i=1}^d (\mathcal R(x)_i-c_i)^2$ and that the quantization SSE of the rotated vector equals the SSE of estimating $x$ using $\widehat x$. 
For \algp, this means that the SSE is given by 
\begin{equation*}
\begin{aligned}
\norm{\mathcal R(x)- \sc \cdot c}_2^2 &= \norm{\mathcal R(x)}_2^2 - 2\cdot\sc\cdot\angles{\mathcal R(x), c} + \norm{\sc\cdot c}_2^2 \\&= \norm{x}_2^2 - 2\cdot\sc\cdot \angles{\mathcal R(x), c} + (\sc)^2\cdot \norm{c}_2^2
\\&= \norm{x}_2^2 - 2\cdot\sc\cdot\norm{c}_2^2 + (\sc)^2\cdot \norm{ c}_2^2.
\end{aligned}
\end{equation*}

According to Observation~\ref{obs:centroids}, the inner produce in the first coordinate satisfies:
\begin{equation*}
 \angles{\mathcal R(x),c} = \sum_{j\in I_0} \mathcal R(x)_j\cdot c_{0} + \sum_{j\in I_1} \mathcal R(x)_j\cdot c_{1}   = {c_0^2}\cdot{|I_0|} + {c_1^2}\cdot{|I_1|} = \norm{c}_2^2.    
\end{equation*}
We we will show that for any vector $x\in\mathbb R^d$ and any $R_U$, the SSE of \algp with $\sc = \frac{\norm{x}_2^2}{\norm{c}_2^2}$ is upper bounded by the SSE of \alg with $\sx = \frac{\norm{x}_2^2}{\norm{\mathcal R_U(x)}_1}$. Both results immediately follow.

From the above and Theorem \ref{thm:theoreticalAlg}, it is sufficient to show that $\norm{c}_2^2 \brackets{(\sc)^2-2\sc} \le  - 2 \cdot \sx \cdot \norm{\mathcal R(x)}_1 + \sx^2 \cdot d$, which, by substituting $\sx$ and $\sc$, is equivalent to
$\frac{\norm{x}_2^4}{\norm{c}_2^2} \le \frac{\norm{x}_2^4}{\norm{\mathcal R(x)}_1^2}$ or $\norm{c}_2 \le \norm{\mathcal R(x)}_1$. The last inequality immediately holds since $\norm{c}_2 \le \norm{c}_1$ and $ \norm{c}_1 \le \norm{\mathcal R(x)}_1$ by the triangle inequality.
\end{proof}

Similarly to \alg, one can deterministically set $\sc=\frac{\norm{x}_2^2}{\mathbb E\brackets{\norm{c}_2^2}}$ and gain a $\frac{\pi}{2}-1$ upper bound on the vNMSE for any dimension $d$. 
Notice that this requires calculating this expression;
for any dimension $d$, one can approximate this quantity once to within arbitrary precision using a Monte-Carlo simulation. Nevertheless, as mentioned, this is less favorable as it introduces undesirable computational overhead for non uniform rotations. 

\subsection{Implementation Notes}
\new{We note that it is possible to introduce a tradeoff between the additional complexity introduced by the computation of the centroids and the reduction in vMNSE compared to \alg. This is achieved by initializing the centroids symmetrically to that of \alg and continuing to apply iterations of Lloyd's algorithm. Each such additional iteration introduces more calculations but reduces the SSE.} 

\new{In high dimensions, we find that the SSE of \alg and \algp similar for vectors whose distribution admits finite moments (they converge to the same value). In low dimensions, we often find that after 2-3 such iterations, further improvement is negligible. Since Lloyd's algorithm admits an efficient GPU implementation, these extra iterations may offer a better vMNSE to computation tradeoff than optimally finding the centroids using less GPU-friendly implementations.}




\section{Supplemental Results for Structured Random Rotations}

We next provide supplementary results for \alg and \algp with a structured random rotation.

\subsection{Convergence of the Rotated Coordinates to a Normal Distribution}\label{app:proofOfBerryEsseen}
For completeness, we restate Lemma~\ref{lem:proofOfBerryEsseen}:
\proofOfBerryEsseen*

\begin{proof}
By definition $\frac{1}{\sqrt{d}} \cdot \mathcal R_H(x)_i = \frac{1}{d} \cdot \sum_{j=1}^d x_j H_{ij} D_{jj}$ for all $i$. Due to the sign symmetry of $D_{jj}$, it holds that $x_j H_{ij} D_{jj}$ are zero-mean i.i.d. random variables. Also, $\E \brackets{x_j H_{ij} D_{jj}}^2 = \sigma^2$ and $\E \brackets{\abs{x_j H_{ij} D_{jj}}}^3 = \rho$. 
Now, denote by $F_{i,d}$ be the cumulative distribution function (CDF) of $\frac{1}{\sigma} \cdot \mathcal R_H (x)_i$.
Using the Berry–Esseen theorem~\cite{esseen1956moment}, we obtain than
$\sup_{x\in \mathbb R}\abs{F_{i,d}(x)-\Phi(x)} \le \frac{0.409\cdot \rho}{\sigma^3 \sqrt d}$, where $\Phi$ the CDF of the standard normal distribution. This concludes the proof.
\end{proof}

\subsection{Analysis of the 4th Moment When Using a Structured Random Rotation}\label{app:Hadamard_4th_moments}

Further assume that $\E[x_j^4] = M_4 < \infty$. Then, we have that, 
\begin{equation*}
\begin{aligned}
\E &\brackets{(\mathcal R_H \cdot x)_{i_1} \cdot (\mathcal R_H \cdot x)_{i_2} \cdot (\mathcal R_H \cdot x)_{i_3} \cdot (\mathcal R_H \cdot x)_{i_4}}\\ &
= \frac{1}{d^2} \sum_{j=1}^d \sum_{k=1}^d \sum_{l=1}^d \sum_{m=1}^d \E \brackets{x_j H_{i_1j} D_{jj} \cdot  x_k H_{i_{2}k} D_{kk} \cdot  x_l H_{i_{3}l} D_{ll} \cdot  x_m H_{i_{4}m} D_{mm}}\\ &
= \frac{1}{d^2} \sum_{j=1}^d \sum_{k=1}^d \sum_{l=1}^d \sum_{m=1}^d   H_{i_{1}j} H_{i_{2}k} H_{i_{3}l} H_{i_{4}m} \cdot \E \brackets{x_j x_k  x_l x_m \cdot D_{jj} D_{kk} D_{ll} D_{mm}}.
\end{aligned}
\end{equation*}
There are only two cases for which the expectation is not zero. Where there are two different indexes (where the resulting expectation is $\sigma^4$) and where all four are identical (where the resulting expectation is $M_4$). Therefore,
\begin{equation*}
\begin{aligned}
\E &\brackets{(\mathcal R_H \cdot x)_{i_1} \cdot (\mathcal R_H \cdot x)_{i_2} \cdot (\mathcal R_H \cdot x)_{i_3} \cdot (\mathcal R_H \cdot x)_{i_4}}\\ &
= \frac{1}{d^2} \cdot \sigma^4 \cdot\brackets{\sum_{j=1}^d H_{1+(i_1-1) \oplus (i_2-1), j} \cdot \brackets{\sum_{k=1, k \neq j}^d H_{1+(i_3-1) \oplus (i_4-1), k}}} 
\\&\qquad+ \frac{1}{d^2} \cdot\sigma^4\cdot \brackets{\sum_{j=1}^d H_{1+(i_1-1) \oplus (i_3-1), j} \cdot \brackets{\sum_{k=1, k \neq j}^d H_{1+(i_2-1) \oplus (i_4-1), k}}} \\&
+ \frac{1}{d^2} \cdot\sigma^4 \cdot\brackets{\sum_{j=1}^d H_{1+(i_1-1) \oplus (i_4-1), j} \cdot \brackets{\sum_{k=1, k \neq j}^d H_{1+(i_2-1) \oplus (i_3-1), k}}}
\\&\qquad
+ \frac{1}{d^2}\cdot M_4 \cdot\sum_{j=1}^d H_{1+(i_1-1) \oplus (i_2-1) \oplus (i_3-1) \oplus (i_4-1), j}
\end{aligned}
\end{equation*}
Now we can calculate all the fourth moments. 

There are four cases to consider: (1) for $i_1=i_2=i_3=i_4$ we obtain $3\cdot \sigma^4 \cdot \frac{d\cdot(d-1)}{d^2}  + \frac{d}{d^2} \cdot M_4 = 3\cdot \sigma^4 + O(\frac{1}{d})$; (2) for all pair equalities (e.g., $i_1=i_2$ and $i_3=i_4$ but $i_1 \neq i_2$) we obtain $\sigma^4 \cdot \frac{d\cdot(d-1)}{d^2} + \frac{d}{d^2} \cdot M_4 = \sigma^4 + O(\frac{1}{d})$; (3) when having a unique entry and at least a pair that equals (e.g., $i_1=i_2,i_2 \neq i_4$ and $i_3\neq i_4$ regardless of whether $i_2=i_3$) we immediately obtain zero (recall that all odd moments are zero); (4) for all distinct entries it may be the case that $1+ (i_1-1) \oplus (i_2-1) \oplus (i_3-1) \oplus (i_4-1) = 1$, therefore we obtain $\frac{d}{d^2} \cdot M_4 = O(\frac{1}{d})$.

These indeed approach, with a rate of $\frac{1}{d}$, the 4'th moments of \emph{independent} normal random variables.

\section{Additional Simulation Details and Results}\label{appendix:additional_simulations}

In this section we provide additional details and simulation results. 
\ifdefined\arxiv
\else
The source code and full reproduction instructions are in the supplementary material archive.
\fi

\subsection{Preexisting Code and Datasets Description}\label{app:utilizedAssets}

\T{Code.} Our federated learning experiments use the TensorFlow Federated \cite{tensorflowfed} library with the Hadamard rotation and the Kashin's representation code taken from the TensorFlow Model Optimization Toolkit \cite{tensorflowmo}. Tensorflow and its subpackages are licensed under the Apache License, Version 2.0 \cite{apache2}. The rest of the experiments use PyTorch \cite{NIPS2019_9015}, which is licensed under a BSD-style license \cite{pytorchlicense}.

\T{Datasets.} Next we provide a quick overview of the datasets used:

\textit{MNIST}~\cite{lecun1998gradient, lecun2010mnist}. Includes 70,000 examples of $28{\times}28$ grayscale labeled images of handwritten \mbox{digits (10 classes)}.

\textit{EMNIST}~\cite{cohen2017emnist}. Extends MNIST and includes 749,068 examples of $28{\times}28$ grayscale labeled images of 62 handwritten characters.

\textit{CIFAR-10} and \textit{CIFAR-100} \cite{krizhevsky2009learning}. Both consist of 60,000 examples of $32{\times}32$ images with 3 color channels, in each dataset the images are labeled into 10 and 100 classes, respectively.

\textit{Shakespeare} \cite{shakespeare}. Contains 18,424 examples, where each example is a contiguous set of lines spoken by a character in a play by Shakespeare.
 
\textit{Stack Overflow} \cite{stackoverflowdb}. Consists of 152,404,765 examples, each containing the text of either an answer or a question.
The data consists of the body text of all questions and answers

For the federated learning experiments, following previous works in the field \cite{mcmahan2017communication, caldas2019leaf}, the EMNIST, Shakespeare, and Stack Overflow datasets are partitioned naturally to create an heterogeneous unbalanced client dataset distribution. Specifically, they are partitioned by character writer, play speaker, and Stack Overflow user, respectively. In the CIFAR-100 federated experiment, the examples are split among 500 clients using a random heterogeneity-inducing process described in \cite[Appendix~C.1]{reddi2020adaptive}.

All datasets were released under  CC BY-SA 3.0 \cite{CCASA3} or similar licenses, the exact details are in the cited references.
 
 


\begin{figure}[h]
\centering
\centerline{\includegraphics[width=\textwidth, trim=0 40 0 0, clip]{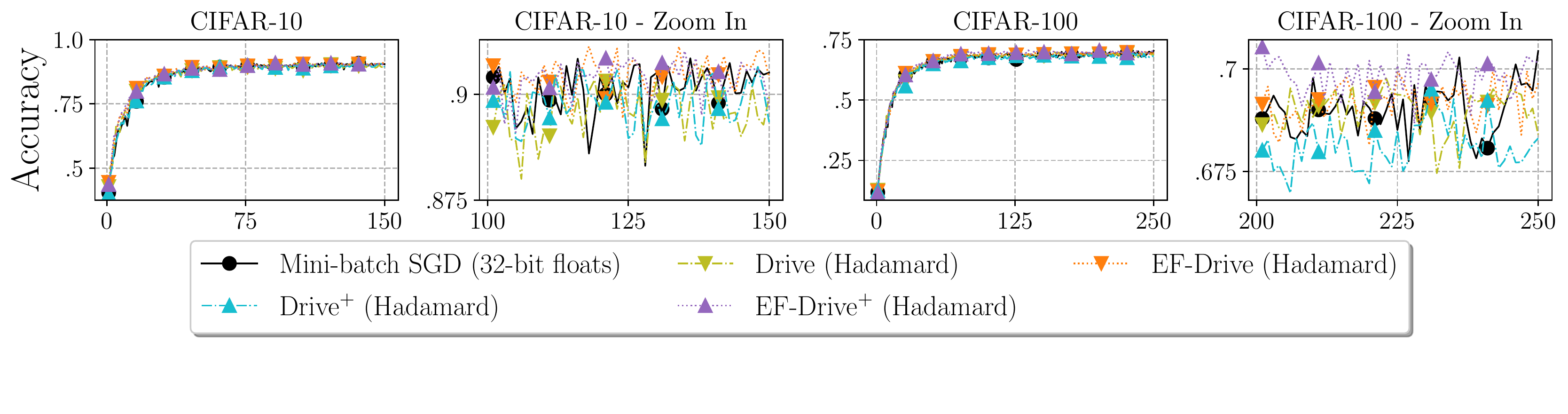}}

\caption{Distributed CNN EF experiments comparing \alg and \algp with and without EF. Accuracy per round on distributed learning tasks, with a zoom-in on the last 50 rounds.}

\label{fig:ef:drive}
\end{figure}

\begin{figure}[h]
\centering
\centerline{\includegraphics[width=\textwidth, trim=0 40 0 0, clip]{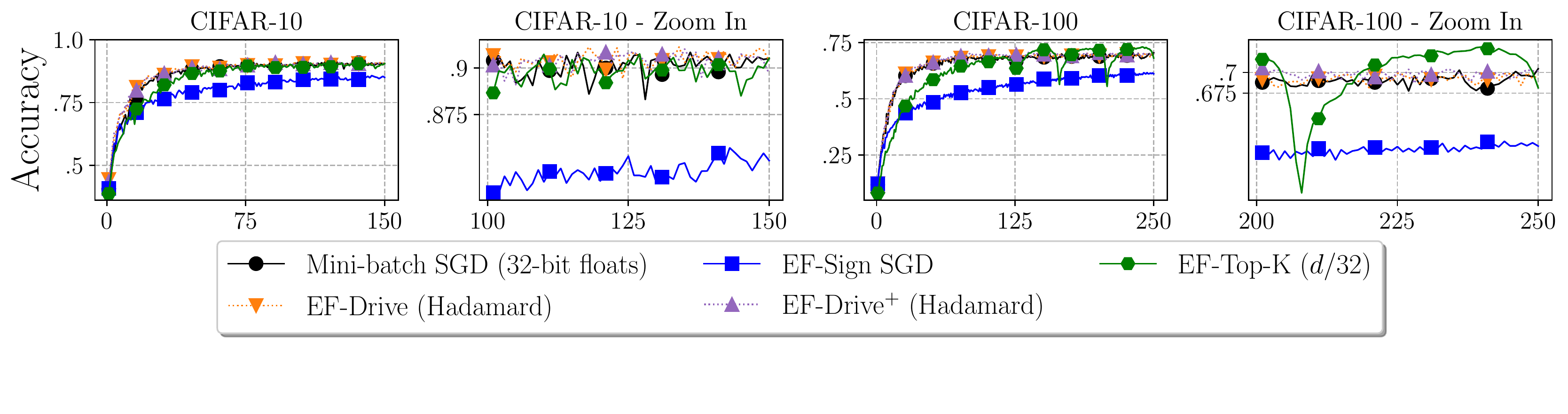}}

\caption{Distributed CNN EF experiments comparing \alg and \algp with and techniques utilizing EF. Accuracy per round on distributed learning tasks, with a zoom-in on the last 50 rounds.}

\label{fig:ef:others}
\end{figure}

\subsection{\new{Error-Feedback Experiments}}\label{app:exp:ef}

We next conduct error feedback (EF) experiments for the distributed CNN training. For \alg (and similarly for \algp), we use the following formula to set the scale, $\sx=\min{\{  2 \cdot \frac{\norm{\mathcal R_H (x)}_1}{d},  \frac{\norm{x}_2^2}{\norm{\mathcal R_H(x)}_1}  \}}$. This scale ensures that the \emph{compressor assumption}~\cite[Assumption A]{karimireddy2019error}  holds but tries to make use of the ``unbiased'' scale whenever possible. In fact, in our CNN simulations, we did not encounter a single iteration in which $2 \cdot \frac{\norm{\mathcal R_H (x)}_1}{d} \le \frac{\norm{x}_2^2}{\norm{\mathcal R_H(x)}_1}$. 

Figure \ref{fig:ef:drive} shows that with and without EF, \alg and \algp result in similar performance. Also, as evident in Figure~\ref{fig:ef:others}, \alg and \algp show favorable performance in comparison to EF-Sign SGD and Top-K.
Here, we configured the Top-K algorithm with $K=d/32$; note that this requires more than one bit per coordinate, as the sender needs to encode the indices of the sent coordinates in addition to their values.

\subsection{Additional vNMSE-Speed Tradeoff Details and Results}\label{app:additional_speed_results}

\paragraph{\textbf{Experiment Configuration.}}
The experiments were executed over Intel Core i9-10980XE CPU (18 cores, 3.00 GHz, and 24.75 MB cache), 128 GB RAM, NVIDIA GeForce RTX 3090 GPU, Ubuntu 20.04.2 LTS operating system, and CUDA release 11.1, V11.1.105.
We draw $100$ vectors\footnote{except for $d=2^{25}$, where we use 10 encodings of each vector due to the fact that in such high dimension, random vectors are concentrated close to their expectation.} from a Lognormal(0,1) distribution and measure $100$ encodings of each vector. Similarly to Figure~\ref{fig:theory_is_cool}, each of $n=10$ clients are given the same vector at each iteration.
Due to its high runtime, we only evaluate the uniform random rotation on the low ($d\in\set{128,8192}$) dimensions.

\paragraph{\textbf{Experiment Results Summary.}}
The results, given in Table~\ref{tbl:weAreFast}, show that \alg and \algp are the most accurate algorithms and their NMSE converges to the theoretical NMSE of $\frac{\pi/2-1}{10}\approx 0.0571$ even in dimensions as small as $2^{13}$. 
In even lower dimensions, such as $d=128$, \algp is more accurate than \alg, albeit slightly slower. Running \alg and \algp with uniform rotation instead of Hadamard yield even lower NMSE, but also takes more time.
\alg is also almost as fast as Hadamard with 1-bit SQ~\cite{pmlr-v70-suresh17a} and is 12x faster than Kashin with 1-bit SQ.
We conclude that \alg with Hadamard offers the best tradeoff in all cases, except when the dimensions is small, in which case one may choose to use \algp and possibly uniform random rotation.

\begin{table}[H]
\resizebox{\textwidth}{!}{%
\begin{tabular}{r|l|l|l|l|l|l|}
\cline{2-7}
\multicolumn{1}{l|}{Dimension ($d$)} & \multicolumn{1}{c|}{\begin{tabular}[c]{@{}c@{}}Hadamard\\ + 1-bit SQ\end{tabular}} & \multicolumn{1}{c|}{\begin{tabular}[c]{@{}c@{}}Kashin\\ + 1-bit SQ\end{tabular}} & \multicolumn{1}{c|}{\begin{tabular}[c]{@{}c@{}}Drive \\ (Uniform)\end{tabular}} & \multicolumn{1}{c|}{\begin{tabular}[c]{@{}c@{}}Drive$^+$\\ (Uniform)\end{tabular}} & \multicolumn{1}{c|}{\begin{tabular}[c]{@{}c@{}}Drive\\  (Hadamard)\end{tabular}} & \multicolumn{1}{c|}{\begin{tabular}[c]{@{}c@{}}Drive$^+$ \\ (Hadamard)\end{tabular}} \\ \hline
\multicolumn{1}{|r|}{128}            & 0.5308,        {\textit{0.93}}  & 0.2550,  \textit{6.06}   &         0.0567,  \textit{14.95} & \textbf{0.0547}, \textit{18.25}     & 0.0591,                  \textit{0.98}   &         0.0591, \textit{2.07}     \\ \hline
\multicolumn{1}{|r|}{8,192}          & 1.3338,        {\textit{1.53}}  & 0.3180,  \textit{10.9}   & \textbf{0.0571}, \textit{2646}  & \textbf{0.0571}, \textit{2672}     & \textbf{0.0571},        {\textit{1.58}}  & \textbf{0.0571}, \textit{2.93}     \\ \hline
\multicolumn{1}{|r|}{524,288}        & 2.1456,        {\textit{3.76}}  & 0.3178,  \textit{30.8}   & ---                             & ---                                & \textbf{0.0571},         \textit{3.78}   & \textbf{0.0571}, \textit{5.72}     \\ \hline
\multicolumn{1}{|r|}{33,554,432}     & 2.9332,         \textit{40.9}   & 0.3179,  \textit{1714}   & ---                             & ---                                & \textbf{0.0571},        {\textit{43.0}}  & \textbf{0.0571}, \textit{252}      \\ \hline
\end{tabular}%
}
\caption{A \emph{commodity machine} comparison of empirical NMSE and average per-vector encoding time (in milliseconds) for distributed mean estimation with $n=10$ clients (same as in Figure~\ref{fig:theory_is_cool}) and Lognormal(0,1) \mbox{distribution. Each entry is a (NMSE, \textit{time}) tuple and the best result is highlighted in~\textbf{bold}.}}
\label{table:we_are_fast_on_commodity_machine}
\end{table}

\paragraph{\textbf{Speed Measurements on a Commodity Machine.}}
The experiment in Table~\ref{tbl:weAreFast} in the paper was performed on a high-end server with a NVIDIA GeForce RTX 3090 GPU. For completeness, we also run measurements on a commodity machine.
Specifically, we re-execute the experiments over Intel Core i7-7700 CPU (8 cores, 3.60 GHz, and 8 MB cache), 64 GB RAM, NVIDIA GeForce GTX 1060 (6GB) GPU, Windows 10 (build 18363.1556) operating system and CUDA release 10.2, V10.2.89.
The results, shown in Table~\ref{table:we_are_fast_on_commodity_machine} show a similar trend.   
Again, as expected, \alg and \algp with uniform rotation are more accurate for small dimensions but are significantly slower.
On this machine, \alg is 6.18$\times$-39.8$\times$ faster than Kashin and, again, about as fast as Hadamard.
All NMSE measurements, as expected yielded the same results as in Table~\ref{tbl:weAreFast}.

\subsection{Federated Learning Experiments Configuration}\label{app:fl_details}

The experiments were executed over a university cluster that contains several types of NVIDIA GPUs (TITAN X (Pascal), TITAN Xp, GeForce GTX 1080 Ti, GeForce RTX 2080 Ti, and GeForce RTX 3090) with no specific hardware guarantee.

\begin{table*}[h]
\centering
\begin{tabular}{|l||c|c|c|c|c|}
\hline
\textbf{Task} & \textbf{Clients per round} & \textbf{Rounds} & \textbf{Batch size} & \textbf{Client lr} & \textbf{Server lr} \\ \hline \hline
EMNIST & 10   & 1500  & 20 & 0.1  & 1 \\ \hline
CIFAR-100 & 10 & 6000    & 20      & 0.1 & 0.1 \\ \hline
Shakespeare & 10 & 1200 & 4 & 1  &  1\\ \hline
Stack Overflow & 50 & 1500 & 16      &  0.3 &  1\\ \hline
\end{tabular}
\caption{Summary of main hyperparameters for federated learning tasks.}
\label{table:fl_params} 
\end{table*}

Table~\ref{table:fl_params} contains a summary of the hyperparameters we used. Those generally correspond to the best setup described in \cite[Appendix~D]{reddi2020adaptive} for FedAvg except for the following changes: (1) We increased the number of rounds of the EMNIST task from 4000 to 6000 to show a clear convergence; (2) We used a server momentum of 0.9 (named \emph{FedAvgM} in \cite{reddi2020adaptive}) for the Stack Overflow task since otherwise, this task produces noisy results due to its extreme unbalancedness in the number of samples per client; and (3) For FetchSGD on EMNIST and Stack Overflow we set the server learning rate to 0.3 as it offered improved performance.

We compress each layer separably due to the implementation of the libraries we use and algorithms we compare to (specifically, see ``\textsc{class~EncodedSumFactory}''~\cite{tensorflowfedencsum} in TensorFlow Federated). This layer-wise compression reduces the dimension in practice and explains the difference seen between \alg and \algp performance. Additionally, all compression algorithms skip layers with less than 10,000 parameters since those are usually either normalization or last fully-connected layers, \mbox{which are more cost-effective to send as-is.}

\subsection{Distributed Learning Experiments Configuration}\label{app:dl_details}
The experiments were executed over Intel Core i9-10980XE CPU (18 cores, 3.00 GHz, and 24.75 MB cache), 128 GB RAM, NVIDIA GeForce RTX 3090 GPU, and Ubuntu 20.04.2 LTS OS.

For the distributed CNN training we have used an SGD optimizer with a Cross entropy loss criterion, a momentum of $0.9$, and a weight decay of 5e-4. The learning rate of all algorithms was set to $0.1$ for the CIFAR-10 over ResNet-9 experiment and $0.05$ for the CIFAR-100 over ResNet-18 experiments. TernGrad was an exception and its learning rate was set to $0.01$ and $0.005$ as it offered improved performance. The batch size in all experiments and clients was set to 128.




\subsection{\new{K-Means and Power Iteration Simulation Results}}
\label{subsection:power_iteration_appendix}

\begin{figure}[]
\ifdefined\arxiv  
  \vspace*{-4mm}
\fi
\centering
\centerline{\includegraphics[width=\textwidth, trim=0 70 0 0, clip]{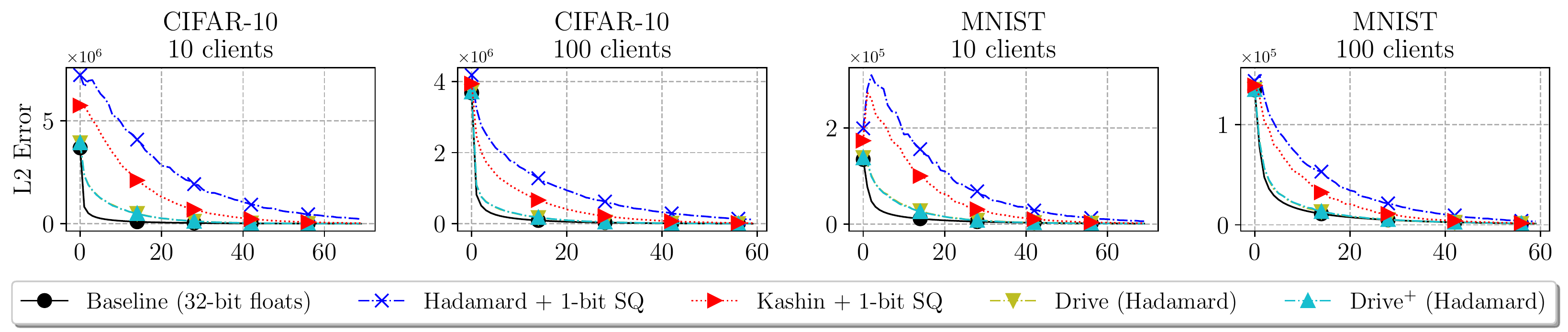}}
\ifdefined\arxiv  
  \vspace*{-2mm}
\fi
\caption{
 \mbox{L2 error per round for distributed power iteration.}
}
\ifdefined\arxiv  
  \vspace*{-2mm}
\fi
\label{fig:power_iteration}
\end{figure}

\begin{figure}[]
\centering
\centerline{\includegraphics[width=\textwidth, trim=0 70 0 0, clip]{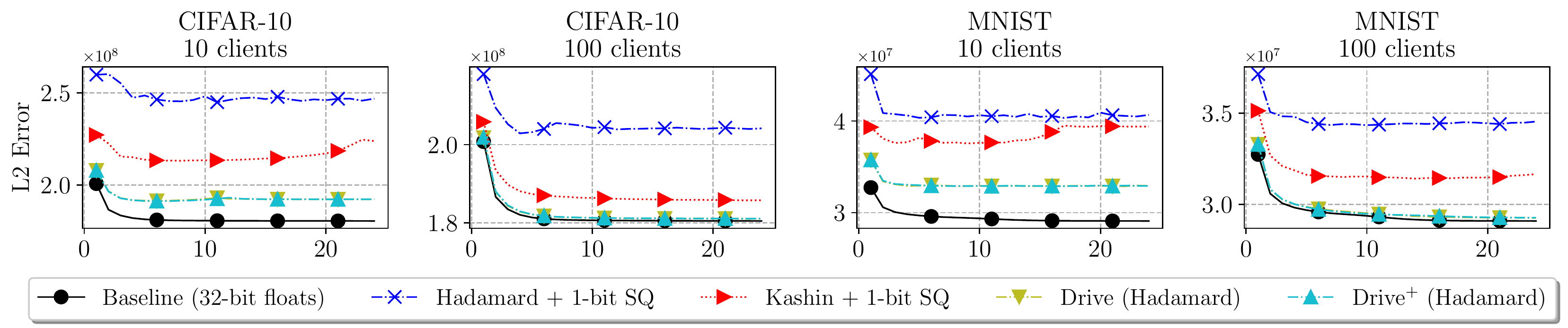}}
\ifdefined\arxiv  
\vspace*{-2mm}
\fi
\caption{
  {L2 error per round for distributed K-Means.}
\ifdefined\arxiv  
  \vspace*{-6mm}
\fi
}
\label{fig:kmeans}
\end{figure}

\ifdefined\arxiv
\begin{figure}[b]
\centering
\centerline{\includegraphics[width=\textwidth, trim=0 60 0 0, clip]{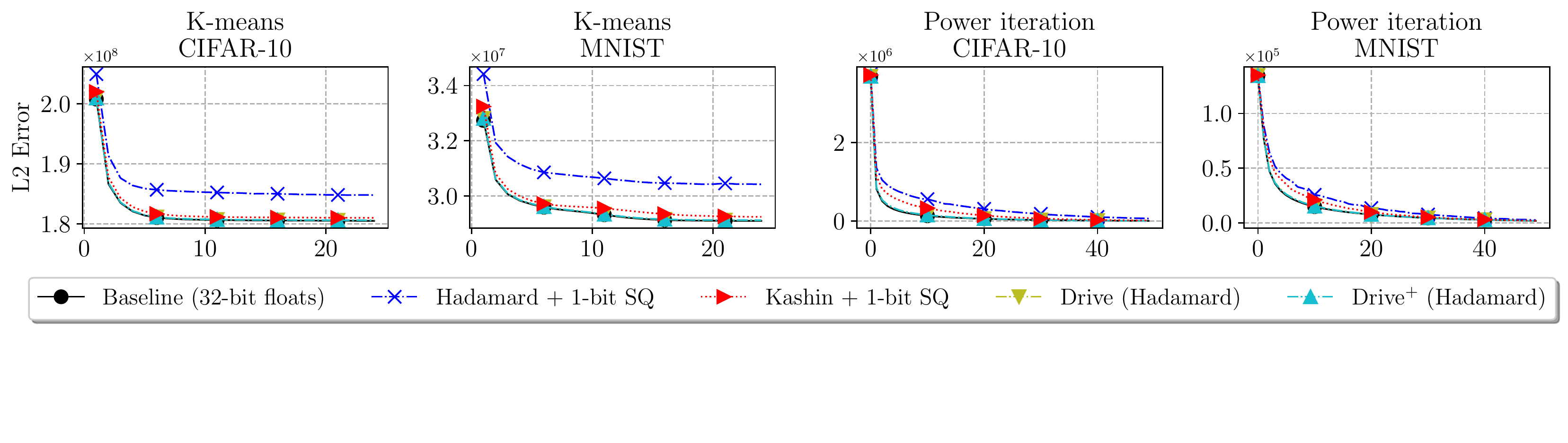}}
\caption{
Distributed K-Means and power iteration of CIFAR-10 and MNIST with 1000 clients.}
\label{fig:not-dnn-1000-clients}
\end{figure}
\else
\begin{figure}[]
\centering
\centerline{\includegraphics[width=\textwidth, trim=0 60 0 0, clip]{figures/not-dnn-appendix-1000-clients.pdf}}
\caption{
Distributed K-Means and power iteration of CIFAR-10 and MNIST with 1000 clients.}
\label{fig:not-dnn-1000-clients}
\end{figure}
\fi

\paragraph{\textbf{{Distributed Power Iteration}.\quad}} 
Power Iteration is a simple method for approximating the dominant Eigenvalues and Eigenvectors of a matrix. It is often used as a sub-routine in more complicated tasks such as Principal Component Analysis. In our distributed setting, at each epoch, the server broadcasts the current estimated top eigenvector to all clients. Then, each client: (1) updates the top eigenvector based on its local data; (2) computes its diff to the current estimated top eigenvector; (3) compresses the diff vector; (4) sends it to the server, possibly scaled by a \emph{learning rate} to ensure convergence to due the high estimation variance. 
%
Figure~\ref{fig:power_iteration} presents the results in a setting with $n=10$ and $n=100$ clients and a learning rate of 0.1. 

In both datasets, and for both $n=10$ and $n=100$ clients, \alg and \algp have lower error and faster convergence than other compression algorithms. Our convergence is faster when using a larger number of clients, which matches the theoretical analysis which shows that the estimates of \alg (Hadamard) and \algp (Hadamard) are (nearly) unbiased for such a high dimension.

\ifspacinglines
\fi

\paragraph{\textbf{{Distributed K-Means}.\quad}} 
In this distributed learning task, the server coordinates a run of Lloyd's algorithm over a distributed dataset. At each training round, the server broadcasts the centroids' coordinates to the clients. Then, each client: (1) updates these coordinates based on its local data; (2) compresses them; (3) sends them to the server. The centroid weights (i.e., the number of observations assigned to each centroid) require fewer bits and are essential for K-Means to converge quickly; therefore, the weights are sent without compression.
%

Figure~\ref{fig:kmeans} presents the results with $n=10$ and $n=100$ clients. 
As shown, \alg and \algp have a lower error than other compression algorithms. While none of the compression methods is competitive with the baseline when using $n=10$ clients, \alg and \algp are the only algorithms that have comparable accuracy for $n=100$. 


\paragraph{\textbf{{Increasing the Number of Clients}.\quad}}  Figure \ref{fig:not-dnn-1000-clients} depicts simulation results for the distributed K-Means and distributed Power Iteration experiments with $n=1000$ clients.  
The results indicate a similar trend. \alg and \algp offer the lowest L2 error among the low-bandwidth techniques, and they are now closer to the uncompressed baseline due to the larger number of clients, which further reduces the estimation variance.

\subsection{\new{Comparison With Entropy Encoding}}\label{subsec:app:ee}

We conduct an experiment for distributed mean estimation with $n=10$ clients (same as in Figure \ref{fig:theory_is_cool} and Table \ref{tbl:weAreFast}) with the Lognormal(0,1)
distribution. We determine how many bits per coordinate are required by a compression method that uses stochastic quantization followed by entropy encoding to reach the same NMSE as that of \alg (that uses a single bit per coordinate). We examine two methods: (1) standard stochastic quantization followed by Huffman encoding; (2) stochastic quantization with levels chosen for entropy encoding followed by Huffman encoding, as proposed in \cite{pmlr-v70-suresh17a} (see Section 4). We refer to this second variant as Enhanced SQ.  We introduced an increasing number of quantization levels for each of these methods until the resulting empirical NMSE was similar to that of \alg. We repeat each experiment 100 times and report the mean value. The results summarized in Table \ref{tab:entropy_encoding_cmp} indicate that these methods require at least 1.261 bits per coordinate. 

\begin{table}[t]
\centering
\begin{tabular}{r|c|c|c|}
\cline{2-4}
\multicolumn{1}{c|}{\multirow{2}{*}{Dimension ($d$)}} & \multirow{2}{*}{\begin{tabular}[c]{@{}c@{}}\alg (Hadamard)\\ NMSE (1-bit)\end{tabular}} & \multicolumn{2}{c|}{Required bits/coordinates} \\ \cline{3-4} 
\multicolumn{1}{c|}{}                                 &                                                                                                                      & SQ + Huffman      & Enhanced SQ + Huffman~\cite{pmlr-v70-suresh17a}      \\ \hline
\multicolumn{1}{|r|}{128}                             & 0.0591                                                                                                               & 1.284             & 1.261                      \\ \hline
\multicolumn{1}{|r|}{8,192}                           & 0.0571                                                                                                               & 1.335             & 1.295                      \\ \hline
\multicolumn{1}{|r|}{524,288}                         & 0.0571                                                                                                               & 1.324             & 1.298                      \\ \hline
\multicolumn{1}{|r|}{33,554,432}                      & 0.0571                                                                                                               & 1.321              & 1.301                      \\ \hline
\end{tabular}
\vspace{1mm}
\caption{Empirical NMSE for distributed mean estimation with $n=10$ clients (same as in Figure \ref{fig:theory_is_cool} and Table \ref{tbl:weAreFast}) with the Lognormal(0,1) distribution. We find the number of bits per coordinate that are required by the entropy encoding methods to reach the same empirical NMSE as that of \alg (that uses a single bit per coordinate).}
\label{tab:entropy_encoding_cmp}
\end{table}

There are other uses of entropy encoding techniques for machine learning tasks as suggested by \cite{pmlr-v70-suresh17a,NIPS2017_6c340f25}. Such methods often introduce higher computational costs.  We leave further comparisons of \alg and entropy encoding for specific problems for future work.  

\ifdefined\arxiv
\vbox{
\fi
\section{Lower Bounds}\label{app:lower_bounds}
It is proven in~\cite{iaab006} that \emph{any} unbiased algorithm must have a 1b~-~Vector Estimation vNMSE of at least $\frac{1}{3}$ and any biased algorithm incurs a vNMSE of at least $\frac{1}{4}$. 
However, it does not appear that their proof accounts for algorithms in which \sender and \receiver may have shared randomness, as is the case for all algorithms proposed in our paper.
The reason is that in~\cite{iaab006} it is assumed that if \sender sends $b$ bits, \receiver only has $2^b$ possible estimates, one for each of the $2^b$ possible messages. However, if shared randomness is allowed, there are more possible values for $\widehat x$. In fact, if the number of shared random bits is not restricted, the number of different possible estimates can be unbounded. 
Additionally, it is shown in~\cite{ben2020send} that for a single number ($d=1$), using shared randomness allows algorithms to have an MSE lower than the optimal algorithm when no such randomness is allowed.

{Nonetheless, their result implies that any algorithm that uses $O(d)$ shared random bits (as does our Hadamard-based variants) has a vNMSE of $\Omega(1)$, i.e., \alg and \algp are asymptotically optimal. 
The reason is that \sender could generate (private) random bits and send them as part of the message to mimic shared random bits. For example, in our Hadamard variant, \sender could send the random diagonal matrix $D$ using $d$ additional bits. This implies that any biased algorithm with at most $d$ shared random bits must have a vNMSE of at least $\frac{1}{16}$ and any unbiased \mbox{solution must have a vNMSE of at least $\frac{1}{15}$.}}
\ifdefined\arxiv
}
\fi





\end{document}


\section{ideas}
\begin{enumerate}
    \item Subtract the minimum coordinate to avoid encoding the sign bit (for additive compression only).
    \item Augmented sketch to reduce the number of bits and improve accuracy?
    \item Previous top-k indices.
    \item Two phase computation.
    \item Remove (-0) for the additive compression.
\end{enumerate}

\section{even more ideas}
\begin{enumerate}
    \item use biased BMV + error sketch.
    \item chose indices with shared randomness
    \item solve the sign majority vote directly. May work well with sampling.
    \item renaming: if $max(q0,q1,q2) >0.25$ run ours, else run RR with renaming (per layer??).
\end{enumerate}

\section{and more ideas}
\begin{enumerate}
    \item random sparsification + instead of using a constant chance per coordinate, sample coordinates in proportion to their distance from last gradient (clients run 2 batches). This approach, I think, implements randomized diffGrad \citep{8939562} in the client side. \\
    TODO related? \citep{mishchenko2019distributed}
    \item Optimize quantization levels using distribution estimation. May want to look at \citep{pmlr-v119-fu20c}. 
\end{enumerate}

\section{competition (few-bit distributed learning)}

\begin{itemize}

    \item \textbf{1BitSGD} variants
    
    \begin{itemize}

    \item \citep{seide20141} \\ \emph{1-bit stochastic gradient descent and its application to data-parallel distributed training of speech DNNs} 
    
   \begin{enumerate}
        \item \textbf{Error feedback}
        \item Split into buckets (matrix columns in the scope of the paper)
        \item Scale using the means of the values mapped to 0 and 1
        \item Simple 1-bit sign
    \end{enumerate}
        
    \item \citep{strom2015scalable} \\ \emph{Scalable distributed DNN training using commodity GPU cloud computing} 
    
    \begin{enumerate}
        \item \textbf{Error feedback}
        \item Split into buckets (matrix columns in the scope of the paper)
        \item Sparse 1-bit updates: Given a constant threshold $\tau$, uses 31 bits for sending the index $i$ for each entry $|g[i]| > \tau$ and uses the last 32 bit for the sign
        \item Suggests following with delta encoding and then Golomb-Rice encoding but does not present experiments
    \end{enumerate}
    
    In his other work Str\"om he claimed that params are often close to zero \citep{659000, strom1997sparse}. This is something we may want to cite.   
    
    \item \citep{7835789} \\ \emph{Communication quantization for data-parallel  training  of  deep  neural  networks} 
    
    Adds ``adaptivity'' to Str\"om's work: instead of a constant $\tau$ they use a fixed \emph{proportion} $\pi$, and send only the biggest $\%\pi$ positive elements and $\%\pi$ of the smallest negative elements.
    
    The server scales the values using means of positive and negatives values separably (same as 1-bit quantization).
    
    \textbf{(Uses error feedback)}
    
    \item TODO \citep{8852172} \\ 
    \emph{Sparse Binary Compression: Towards Distributed Deep Learning with minimal Communication}
    
    \item TODO \citep{NEURIPS2018_33b3214d} \\
    \emph{ATOMO: Communication-efficient Learning via Atomic Sparsification}
    
    \end{itemize}
    
    \item \citep{bernstein2018signsgd, bernstein2019signsgd} \\
    \emph{signSGD: Compressed Optimisation for Non-Convex Problems} \\
    \emph{signSGD with Majority Vote is Communication Efficient and Fault Tolerant}
    
    Does less than 1bitSGD, proves more: \\
    In the ``Distributed training by majority vote" variant, using an aggregated gradient calculated as $sign(\sum_m{sign(\tilde{g}_m)})$), can converges \emph{without error-feedback or scaling}.
    
    \item \textbf{Stochastic rounding gen 1}
    
    \begin{itemize}
        \item \citep{NIPS2017_6c340f25} \\ 
        \emph{QSGD: Communication-Efficient SGD via Gradient Quantization and Encoding}
        
        \begin{enumerate}
            \item Split to buckets
            \item Scale using $L^2$-norm or $L^\infty$-norm per bucket
            \item Stochastic rounding into $s$ levels
            \item \emph{Elias omega  coding}
        \end{enumerate}
        
        \item \citep{wen2017terngrad} \\ \emph{Terngrad: Ternary gradients to reduce communication in distributed deep learning}
        
        
        \begin{enumerate}
            \item Layer-wise
            \item Clip gradients using a magical $2.5\sigma$ cut
            \item Scale using each layer's $L^\infty$-norm \\
            Also suggests adding a round of communication to get the maximum $L^\infty$-norm among all clients
            \item Stochastic rounding into $\{-1, 0, 1\}$
        \end{enumerate}
        
        \item \citep{pmlr-v70-suresh17a} \\
        \emph{Distributed Mean Estimation with Limited Communication}
        
        QSGD with random rotations and variable-length coding
        
        TODO re-read with focus on the following mention: ``\cite{konevcny2018randomized} Showed that $\pi_{sb}$ can be improved further by optimizing the choice of stochastic quantization boundaries''
        
        \item TODO DIANA \emph{Distributed Learning with Compressed Gradient Differences}
    \end{itemize}
    
    \item \textbf{Stochastic rounding gen 2}
    \begin{enumerate}
        \item \citep{ramezani2019nuqsgd} (not published) \\
        \emph{NUQSGD: Improved communication efficiency for data-parallel sgd via nonuniform quantization}
        \item \citep{horvath2019natural} (not published) \\
        \emph{Natural compression for distributed deep learning}
        \item \citep{pmlr-v108-mayekar20a} \\
        \emph{RATQ: A Universal Fixed-Length Quantizer for Stochastic Optimization}
        
        Seems to be random rotations (Walsh-Hadamard Matrix) + stochastic rounding on buckets + with ``adaptive tetration-growth?" scaling (exponential jumps in scaling size, or something like that...)
        
        \item \citep{shlezinger2020uveqfed} \\
        \emph{UVeQFed: Universal Vector Quantization for Federated Learning}
        
        Substractive dithering among other things
        
        \item \citep{chen2020distributed} \\
        \emph{Distributed training with heterogeneous data: Bridging median-and mean-based algorithms}
        Describes signSGD with some added unimodel noise, not exactly stochastic rounding but possibly related
        
        \item \citep{jin2020stochasticsign} \\
        \emph{Stochastic-Sign SGD for Federated Learning with Theoretical Guarantees}
        
        signSGD with stochastic rounding (which the authors don't seem to acknowledge) - may find something useful in the analysis

    \end{enumerate}
    
    \item \textbf{Stochastic rounding gen 3}
    
    \begin{enumerate}
        \item \citep{caldas2018expanding}, \citep{chen2020breaking} \\
        TODO \emph{Kasinh's}
    \end{enumerate}
    
    \item TODO \citep{karimireddy2019error} \\
    \emph{Error Feedback Fixes SignSGD and other Gradient Compression Schemes}
        
    \item sparsity
    
        TODO
        
        \begin{itemize}
            \item \citep{aji2017sparse} \\ \emph{Sparse Communication for Distributed Gradient Descent} 
            
            Drops $\%R$ smallest in absolute value
            
            \item \citep{konevcny2018randomized} \\
            Randomized distributed mean estimation: Accuracy vs. communication
            
            \item \citep{NEURIPS2018_3328bdf9} \\
            \emph{Gradient Sparsification for Communication-Efficient Distributed Optimization}
        \end{itemize}
    
    \item Works with a client state (we just want to mention why we don't compare against them) (since we'll lose if we do):
    
    \begin{itemize}
        \item \emph{Deep Gradient Compression: Reducing the Communication Bandwidth for Distributed Training} \citep{lin2018deep}
    \end{itemize}
    
\end{itemize}
\section{TODO}

\subsection{Introduction (TBD)}

\paragraph{Background}
Machine learning has led to breakthroughs in various fields, such as natural language processing and computer vision. Much of this success hinges on vast amounts of data, but collecting this data from individuals or organizations can sometimes be undesirable due to data ownership and locality concerns. These concerns apply to many applications across many industries, including health care, telecommunications, financial services, IoT, defense, and insurance. \emph{Federated Learning} (FL) \citep{konevcny2015federated, mcmahan2017communication, kairouz2019advances, bonawitz2019towards} is a distributed machine learning paradigm where training data resides
at autonomous client machines and the learning process is facilitated by a central server. The server maintains a shared model while the clients send improvements to the model. While the clients have plenty of memory and processing time, communication is done via the datacenter's network and bandwidth is a limited resource. Therefore, we experience a trade-off between learning efficiency and consumed bandwidth. Consequently, many works optimize the learning quality to bandwidth ratio\cite{seide20141,strom1997sparse}. Our work has similar goals to these existing works, but revisits the problem from first principles, and derives rigours analysis as well as a reduced memory consumption. Plainly speaking, we encode the transferred information better than the alternatives, and we dynamically adjust the encoding as the learning proceeds to guarantee that the loss from our method is negligible compared to other inaccuracies in the learning process.

\subsection{Preliminaries (TBD)}

\textbf{Optimization Goal} We are given $K$ clients where each client $k$ has a local collection $Z_k$ of $n_k$ samples taken IID\ from some unknown distribution over sample space $\bm{Z}$. Our objective is \emph{global} empirical risk minimization (ERM) for some loss function class $ \ell(w; \cdot)\colon \bm{Z} \to \mathbb{R}$, parameterized by $w \in \mathbb{R}^d$ \footnote{We note that some previous FL works specify a more generic finite-sum objective \cite{mcmahan2017communication}. However, this work investigates client-declared sample sizes, whose meaning is clear under the ERM interpretation but seems meaningless in  the finite-sum objective setting.}:

\begin{gather} \label{eq:1}
 \min_{w \in \mathbb{R}^d} F(w),  \\
 \text{where} \nonumber \\
 Z=\bigcup_{k \in [K]} Z_k;\ n = |Z|;\ F(w) \coloneqq \frac{1}{n} \sum_{z \in Z} \ell(w ; z).  \nonumber
\end{gather}

\paragraph{Collaboration Model} We restrict ourselves to the FL paradigm, which leaves the training data distributed on client machines, and learns a shared model by iterating between client updates and server aggregation.
\\~

\noindent
\textbf{Assumption 1 (Smoothness).} A function $f: \mathbbm{R}^d \to \mathbbm{R}$ is L-smooth if it holds that 
\begin{align*}
 \norm{\nabla f(x) - \nabla f(y)} \le L \norm{x - y} \quad \forall x,y \in dom(f).   
\end{align*}
%
\textbf{Assumption 2 (Gradient).} The query for a stochastic gradient returns $g$ such that 
\begin{align*}
\mathbbm{E}[g]= \nabla f(x), \quad \mathbbm{E}[\norm{g}^2] \le \sigma^2 \quad\forall x\in dom(f).    
\end{align*}
%
\textbf{Assumption 3 (Sketching).} Each client may sketch its gradient. We require the following properties from the sketching procedure: denote by $\mathcal{S}$ and $\mathcal{U}$ as the sketching and unsketching operators. Then
\begin{itemize}
    \item $\mathbbm{E} [\unsketch{\sketch{x}}] = x$.
    
    \item $\mathbbm{E} [\norm{\unsketch{\sketch{x}}}^2] \le k_s \norm{x}^2$, \quad $k_s \in [1,\infty)$.
    
    \item $\sketch{\sum_i c_i x_i} = \sum_i c_i \sketch{x_i}$.
\end{itemize}
 These first two conditions also lead to  These first two conditions also lead to 
 \begin{align*}
  \mathbbm{E} \big[\norm{x-\frac{1}{k_s}\unsketch{\sketch{x}}}^2\big] \le (1-\frac{1}{k_s})\norm{x}^2.  
 \end{align*}
%
\textbf{Assumption 4 (Compression).} Each client may apply any other compression technique, either on the gradient itself or on the sketch. We require the following properties from the compression procedure: denote by $\mathcal{C}$ and $\mathcal{D}$ as the compression and decompression operators. Let $i$ and $i'$ be two client ID's and $t$ and $t'$ be two rounds. Then, 
\begin{itemize}

    \item $\mathbbm{E} [\decompress{\compress{x, i, t}}] = x$, \quad $\forall (x,i,t)$.

    \item $\mathbbm{E} [\norm{\decompress{\compress{x, i, t}}}]^2 \le k_c \norm{x}^2$, \quad $\forall (x,i,t)$, $k_c \in [1,\infty)$.
    
    \item If $i \neq i'$ or $t \neq t'$ then $\decompress{\compress{x, i, t}}$ and $\decompress{\compress{y, i', t'}}$ are uncorrelated random $\forall x,y$.
    
\end{itemize}

